\theoremstyle{plain}
\newtheorem{theorem}{Theorem}[section]
\newtheorem{proposition}[theorem]{Proposition}
\newtheorem{lemma}[theorem]{Lemma}
\theoremstyle{definition}
\newtheorem{definition}[theorem]{Definition}
\newtheorem{assumption}[theorem]{Assumption}
\theoremstyle{remark}
\newtheorem{remark}[theorem]{Remark}
\DeclareMathOperator*{\argmin}{\arg\!\min}
\DeclareMathOperator*{\argmax}{\arg\!\max}
\DeclareMathOperator*{\argsup}{\arg\!\sup}
\DeclareMathOperator*{\esssup}{ess\!\sup}
\DeclareMathOperator*{\essinf}{ess\!\inf}
\def\E{\mathbb{E}}% for expectation
\def\I{\mathcal{I}}% for indicator function
\def\P{\mathbb{P}}% for probability
\def\R{\mathbb{R}}% for real numbers
\def\A{\mathcal{A}}% for action space
\def\cD{\mathcal{D}}% for data set
\def\M{\mathcal{M}}% for MDP
\def\H{\mathcal{H}}% for Entropy
\def\cP{\mathcal{P}}% for ambiguity set of transitions
\def\S{\mathcal{S}}% for state space
\def\X{\mathcal{X}}% measurable space
\def\G{\mathcal{G}}% for fucntion set
\def\bp{\mathbf{p}}
\newcommand{\EXP}[1]{\mathbb{E}\left[ #1 \right]}% for expectation
\newcommand{\DKL}{D_{\text{KL}}}
\def\vr{V_{\mathcal{M}_\delta}^{\pi}}
\def\vro{V_{\mathcal{M}_\delta}^{\star}}
\def\qr{Q_{\mathcal{M}_\delta}^\pi}
\def\tpi{\mathcal{T}^\pi}
\def\Mdelta{\mathcal{M}_\delta}
\def\a{\alpha}
\def\d{\delta}
\def\g{\gamma}
\newcommand{\pv}{\textit{Pendulum}}
\newcommand{\cp}{\textit{Cartpole}}
\newcommand{\lld}{\textit{LunarLander}}
\newcommand{\rv}{\textit{Reacher}}
\newcommand{\hc}{\textit{HalfCheetah}}
\newcommand{\pvs}{\textit{Pendulum} }
\newcommand{\cps}{\textit{Cartpole} }
\newcommand{\llds}{\textit{LunarLander} }
\newcommand{\rvs}{\textit{Reacher} }
\newcommand{\hcs}{\textit{HalfCheetah} }
\title{DR-SAC: Distributionally Robust Soft Actor-Critic for Reinforcement Learning under Uncertainty}
\author{%
  % David S.~Hippocampus\thanks{Use footnote for providing further information
  %   about author (webpage, alternative address)---\emph{not} for acknowledging
  %   funding agencies.} \\
  % Department of Computer Science\\
  % Cranberry-Lemon University\\
  % Pittsburgh, PA 15213 \\
  % \texttt{hippo@cs.cranberry-lemon.edu} \\
  Mingxuan Cui$^{*}$, \quad Duo Zhou\thanks{Equal Contributaion. $^{\dagger}$ Corresponding to \texttt{duozhou2@illinois.edu}}$^{*\dagger}$  \\ \textbf{Yuxuan Han$^{1}$, Grani A. Hanasusanto, Qiong Wang, Huan Zhang, Zhengyuan Zhou$^{1}$}\\
  University of Illinois Urbana-Champaign \qquad $^{1}$New York University. 
  % examples of more authors
  % \And
  % Coauthor \\
  % Affiliation \\
  % Address \\
  % \texttt{email} \\
  % \AND
  % Coauthor \\
  % Affiliation \\
  % Address \\
  % \texttt{email} \\
  % \And
  % Coauthor \\
  % Affiliation \\
  % Address \\
  % \texttt{email} \\
  % \And
  % Coauthor \\
  % Affiliation \\
  % Address \\
  % \texttt{email} \\
}
\begin{document}

\maketitle

\addtocontents{toc}{\protect\setcounter{tocdepth}{-1}}
\begin{abstract}

    Deep reinforcement learning (RL) has achieved remarkable success, yet its deployment in real-world scenarios is often limited by vulnerability to environmental uncertainties. Distributionally robust RL (DR-RL) algorithms have been proposed to resolve this challenge, but existing approaches are largely restricted to value-based methods in tabular settings. In this work, we introduce Distributionally Robust Soft Actor-Critic (DR-SAC), the first actor–critic based DR-RL algorithm for offline learning in continuous action spaces. DR-SAC maximizes the entropy-regularized rewards against the worst possible transition models within an KL-divergence constrained uncertainty set. We derive the distributionally robust version of the soft policy iteration with a convergence guarantee and incorporate a generative modeling approach to estimate the unknown nominal transition models. Experiment results on five continuous RL tasks demonstrate our algorithm achieves up to $9.8\times$ higher average reward than the SAC baseline under common perturbations. Additionally, DR-SAC significantly improves computing efficiency and applicability to large-scale problems compared with existing DR-RL algorithms. Code is publicly available at \href{https://github.com/Lemutisme/DR-SAC}{https://github.com/Lemutisme/DR-SAC}.

\end{abstract}

\section{Introduction}\label{sec:intro}

The field of deep reinforcement learning (DRL) has witnessed remarkable progress, enabling agents to learn complex behaviors across a wide range of domains, from game playing to robotic control~\citep{arulkumaran2017DRL, franccois2018DRL, chen2024corrected}. In particular, offline deep reinforcement learning, which learns policies from fixed datasets without additional environment interaction, has gained increasing attention due to its practical relevance. By eliminating online exploration, this paradigm improves safety guarantees and data efficiency. Among these methods, Soft Actor-Critic (SAC; ~\cite{haarnoja2018softv1, haarnoja2018softv2}) is a principled algorithm based on entropy regularized reinforcement learning, commonly known as the soft value framework. This maximum entropy formulation is grounded in theoretical foundations ~\citep{ziebart2010modeling} and has been applied to various contexts, including stochastic control ~\citep{todorov2008general, rawlik2012stochastic, messaoud2024s} and inverse reinforcement learning~\citep{ziebart2008irl, zhou2018irl}. 
%SAC achieves high sample efficiency and state-of-the-art performance in many continuous control tasks. 

%\huan{we can make this sentence more precise - such as mentioning that SAC is using a soft value function. There are many other earlier work that also used an entropy term in the training objective, not just SAC (e.g., check out the papers cited in SAC). Reviewers may feel that we don't know the literature well}.

However, a persistent challenge limiting the deployment of offline deep RL in real-world systems is the sensitivity of learned policies to environmental uncertainties~\citep{whittle1981risk, enders2024risk}. Policies trained in a nominal environment often exhibit significant performance degradation when deployed under a slightly different one. This model mismatch may arise from uncertain transition and reward functions, observation and actuator noise, parameter variations, or adversarial perturbations.

Distributionally robust reinforcement learning (DR-RL) addresses this challenge by optimizing policies against the worst-case scenario. Instead of assuming a single Markov Decision Process (MDP), DR-RL adopts a Robust Markov Decision Process (RMDP) framework, which includes a set of MDPs defined by an uncertainty set of distributions around the nominal one.

Although both value-based~\citep{liu2022distributionally, lu2024distributionally} and policy-gradient~\citep{wang2022policy, kumar2023policy} DR-RL algorithms have been proposed, most existing work focuses on performance guarantees and sample complexity in the tabular setting and cannot be deployed in continuous environments. 
The only notable exception is Robust Fitted Q-Iteration (RFQI;~\cite{RFQI}). However, several fundamental research gaps remain. First, RFQI considers uncertainty sets defined by the Total Variation (TV) distance, which is analytically convenient due to its piecewise linear dual formulation, but does not extend to other divergence measures. Second, its non-robust baseline, Fitted Q-Iteration (FQI;~\cite{ernst2005fqi}), is value-based and suffers from critical limitations, including deterministic learned policies, low applicability to high-dimensional action spaces and high sensitivity to errors in the state-action function~\citep{degris2012off}. 
In contrast, actor-critic algorithms combine low-variance value estimation with scalable policy optimization, making them preferred in continuous control benchmarks and practical applications~\citep{konda1999actor-critic, grondman2012actor-critic}. Despite their empirical success, no distributionally robust counterpart has been developed. This gap motivates our  Distributionally Robust Soft Actor-Critic (DR-SAC), \textit{the first actor–critic-based DR-RL algorithm for offline learning in continuous action spaces}.

In this work, we assume access only to a dataset collected from the training environment, and the transition distribution in the deployment environment lies within an uncertainty set, defined as a Kullback-Leibler (KL) divergence ball centered at the nominal distribution. The goal is to learn a policy that maximizes the soft value function under the worst possible distributions. The main contributions of this work are:

\begin{itemize}[leftmargin=10pt,labelsep=0.5em]
    \item \textbf{Distributionally Robust Maximum Entropy Framework.} We formulate the maximum entropy learning framework under transition uncertainties modeled by KL-divergence-constrained ambiguity sets. Within this framework, we derive distributionally robust soft policy iteration with convergence guarantees and develop distributionally robust counterpart of SAC.
    
    \item \textbf{Scalable Reformulation via Functional Inner Optimization.} We exploit the interchange property to reformulate the per-$(s,a)$ scalar inner optimization problems into shared functional optimization. This reformulation eliminates the dependence on state–action space dimensionality, enables application to continuous action space and saves training time by over $80.0\%$ compared to the existing DR-RL algorithm RFQI.

    \item \textbf{Generative Modeling for KL-Based Robustness.} We introduce generative models (VAEs) to estimate unknown nominal transition distributions and construct empirical measures while preserving robustness. This design resolves the double-sampling issue inherent in the non-linear KL-divergence dual formulation and enables distributionally robust soft policy learning in offline and continuous control tasks. Extensive experiments across five environments demonstrate that DR-SAC achieves up to $9.8\times$ higher average reward than the SAC baseline under perturbations.

\end{itemize}
% {
% \color{blue}

% Contributions:

% 1. from TV to KL(one novelty is the use of VAE)

% 2. DR counterpart of SAC is proposed, which is one of most successful empirical RL algorithm, theoretical guarantee is also established. 

% 3. Expierment is good.(give some quantitived)  1. faster speed, thus can perofrm more scalable dataset. 2. better performance(value improve xxx percent, xxx percent). 
% }

%\huan{One thing I am a bit confused about - are we advertising our algorithm as an offline RL algorithm? It looks all experiments are done in offline setting, but the intro gave me an impression that our algorithm works in both off policy and offline settings, and offline seems to be an add-on?}

\subsection{Related Works}

% \gh{A colleague (Chin Pang Ho) who works on robust MDPs once told me that the existing distributionally robust reinforcement learning (RL) models are often poorly motivated, as the focus on robustifying against worst-case distributions can inadvertently discourage exploration. In contrast, our approach explicitly addresses this limitation by incorporating entropy maximization, which promotes exploration while preserving robustness to distributional ambiguity. } \gh{not sure if true, but you can consider it when writing the literature review} \duo{Sounds good to the story, thanks Dr. Grani and Dr. Ho!}

\paragraph{Robust RL.} The RMDP and Robust Dynamic Programming method were first introduced in~\citep{iyengar2005robustdp, nilim2005robustdp} and have been widely studied in planning settings~\citep{xu2010robustmdp, wiesemann2013robust, yu2015drmdp}. Beyond classical robust formulations, various approaches have been developed to address uncertainty in reinforcement learning, including soft-robustness \citep{derman2018soft_robust,lobo2020soft_robust,park2024distributionally}, risk sensitivity \citep{tamar2015risk, pan2019risk, singh2020improving, queeney2023risk}, and adversarial training \citep{pinto2017adtrain, zhang2020adtrain, cheng2022adtrain}. In recent years, many distributionally robust RL algorithms have been proposed with provable guarantees in the tabular setting. These include model-free algorithms based on $Q$-learning~\citep{Bound-DR-Qlearning, Bound-DR-VR-Qlearning, single-DR-Qlearning} and model-based algorithms extending value iteration \citep{zhou2021DR-VI, panaganti2022gen, DR-Phase-VI, linear-DR-VI, liu2024minimax}. However, these algorithms are not applicable to continuous action space environments. 
% \duo{Add: Can We Estimate the Entropy of Arbitrary Distributions Known Up to a Normalization Constant?,  NeurIPS 2025 Workshop on Structured Probabilistic Inference \& Generative Modeling
% https://openreview.net/forum?id=iOCG9cRoSB
% S²AC: Energy-Based Reinforcement Learning with Stein Soft Actor-Critic, ICLR 24
% https://arxiv.org/pdf/2405.00987}

\paragraph{Model-Free Algorithms for Distributionally Robust RL.}
In DR-RL, the nominal transition distributions typically appear in the optimization objective but are unknown in practice. To address this challenge, some model-free algorithms assume access to a simulator generating \textit{i.i.d.} samples from the nominal environment~\citep{liu2022distributionally, zhou2023ac_simulator, ramesh2024distributionally}, which violates the offline setting. Other algorithms compute empirical transition frequencies from offline datasets~\citep {derman2020distributional, clavier2023towards, shi2024modelbased}, which are not applicable in continuous spaces. Lastly, Empirical Risk Minimization (ERM) has also been used to estimate the robust objectives under special structures \citep{mankowitz2019robust, bahari2022safe}.

\paragraph{VAE in Offline RL.}
Variational Autoencoders (VAEs) have been widely used in non-robust offline RL. A common application is to estimate the behavior policy from offline data, and impose policy constraints or pessimistic value regularization to mitigate distributional shift~\citep{fujimoto2019bcq, wei2021boosting, xu2022constraints, lyu2022mildly}. VAEs have also been used for state reconstruction and representation learning in RL~\citep{van2016stable}. \textit{To the best of our knowledge, this work is the first to incorporate VAE models in a DR-RL algorithm to estimate nominal transition distributions and generate synthetic samples in the absence of a simulator.} 

It is worth noting that~\citet{smirnova2019distributionally} proposed a algorithm with a similar name. However, the problem formulation is completely different from ours and most DR-RL literature. Their framework accounts for estimation error in the evaluation step and employs KL divergence to constrain deviations from the behavior policy within a single MDP rather than an RMDP.

\section{Formulation}

\subsection{Notation and Basics of Soft Actor-Critic}
\label{sec:notation}
A standard framework for reinforcement learning is the discounted Markov Decision Process, formally defined as a tuple $\M = (\mathcal{S}, \mathcal{A}, R, P, \gamma)$, where $\S$ and $\A$ denote the state and action spaces, respectively, both assumed to be continuous in this work. The random reward function modeled as a mapping $R :\S \times \A \mapsto \P([0, R_{\max}])$, where $\P([0, R_{\max}])$ is the set of random variables supported on $[0, R_{\max}]$. The transition distribution is denoted by $P: \S \times \A \mapsto \Delta(\S)$, where $\Delta(\S)$ is the set of probability measures over $\S$, and $\gamma \in [0,1)$ is the discount factor. We denote $r = R(s,a)$ as the random reward and $s'$ as the next state drawn from the transition distribution $p_{s, a} = P(\cdot\mid s,a)$. A policy $\pi:\S \mapsto \Delta(\A)$ represents the conditional probability over actions given state. We consider the stochastic stationary policy class, denoted by $\Pi$. The entropy of a stochastic policy $\pi$ at state $s$ is defined as $\H(\pi(s)) = \EXP{-\log \pi(a|s)} $, measuring the randomness of action. The set of integers from $1$ to $n$ is denoted as $[n]$.
 
In maximum entropy RL, the objective comprises the cumulative discounted reward and an entropy regularization term to encourage exploration. Specifically, given an MDP $\M$, the soft value function under policy $\pi$ is defines as
\begin{equation}
    V^\pi_\M(s) = \E\left[
    \sum_{t=1}^\infty \gamma^{t-1}\Big(r_t + \alpha\cdot\H\big(\pi(s_t)\big)\Big)
    \,\middle\vert\, 
    \pi, s_1 = s
    \right].
\label{eq: v}
\end{equation}
The temperature $\alpha\ge0$ controls the trade-off between reward maximization and policy stochasticity. The optimal soft value and optimal policy are defined as 
\begin{equation}
V_\M^\star = \max_{\pi\in\Pi} V^\pi_\M,\;\;
\pi_\M^\star = \argmax_{\pi\in\Pi} V^\pi_\M.
\end{equation}
Similarly, the soft state-action value function (soft $Q$-function) under policy $\pi$ is defined as
\begin{equation}\begin{aligned}
    &Q^\pi_\M(s,a) =
    & \EXP{
    r_1 + \sum_{t=2}^\infty \gamma^{t-1}\Big(r_t + \alpha\cdot\H\big(\pi(s_t)\big)\Big)
    \,\middle\vert\, \pi, s_1 = s, a_1 = a
    }.
\label{eq: q}
\end{aligned}\end{equation}
For any mapping $Q:\S\times\A\to\R$, \citet{haarnoja2018softv1} defined soft Bellman operator as
\begin{equation}\begin{aligned}
    &\tpi Q(s,a) = 
    &\E[r] + \gamma\cdot\E_{p_{s,a},\pi}\left[Q\left(s',a'\right) - \alpha\log\pi\left(a'\mid s'\right)\right].
\label{eq: q_bellman}    
\end{aligned}\end{equation}

The Soft Actor-Critic (SAC) algorithm updates the policy through soft policy iteration, which is guaranteed to converge in the tabular case. In each iteration, the soft Bellman operator $\tpi$ is applied to update the estimation of the soft $Q$-function under the current policy $\pi$. The policy is then updated by minimizing the KL divergence between the candidate policy and the exponential of the soft Q-function:
\begin{equation}
    \pi_{k+1}
    = \argmin_{\pi\in\Pi} \DKL\left(
    \pi(\cdot\mid s) \,\middle\Vert\,
    \exp\left(\frac{1}{\alpha}Q_\M^{\pi_k}(s,\cdot)\right)
    \,\Big/\, Z(s)\right),\,k=0,1,\cdots
\label{eq: policy improvement}
\end{equation}
where $
\DKL(P \parallel Q) = \E_{P}\left[\log\left(\frac{P(x)}{Q(x)}\right)\right]$ denotes the KL divergence and the function $Z(\cdot)$ is the normalizing function ensuring that the exponential term defines a valid probability distribution.

\subsection{Robust Markov Decision Process}
In real-world RL tasks, the transition distribution $P$ and reward function $R$ in the deployment environment may differ from the environment which the model is trained in or the offline dataset is collected from. Such environmental shifts motivate the study of the Robust Markov Decision Process framework and the goal of learning policies robust to distributional perturbations. Since the analysis and algorithm design for reward perturbations are similar, we assume the reward function $R$ is unchanged and focus only on uncertainty in the transition distributions.

The RMDP framework is denoted as $\M_\delta = (\mathcal{S}, \mathcal{A}, R, \mathcal{P}(\delta), \gamma)$. We consider transition distributions perturbed within a KL-divergence ball. Specifically, let $\cP^0 = \{p_{s,a}^0\}_{(s,a)\in\S\times\A}$ be the nominal transition distributions. For each state-action pair $(s,a)\in\S\times\A$ and $\delta>0$, we define the KL ball centered at $p_{s,a}^0$ as 
\begin{equation}
\mathcal{P}_{s,a}(\delta):=\left\{
p_{s,a}\in\Delta(\mathcal{S}):D_{\text{KL}}(p_{s,a}\| p_{s,a}^0)\leq\delta
\right\}.
\label{eq: kl-dataset}
\end{equation}
The ambiguity set $\cP(\delta)$ is defined as the Cartesian product of $\cP_{s,a}(\delta)$ for all pairs $(s,a)\in\S\times\A$, which belongs to the $(s,a)$-rectangular set~\citep{wiesemann2013robust}. 

Under the RMDP framework, the goal is to optimize performance under the worst-case transition model within the ambiguity set. Given $\M_\delta$, the distributionally robust soft value function under policy $\pi$ is defined as
\begin{equation}\begin{aligned}
    &V^{\pi}_{\M_\delta}(s) =
    & \inf_{\bp \in \cP(\d)} \E_{\bp}\left[
    \sum_{t=1}^\infty \g^{t-1} \Big(r_t + \a \cdot \H\big(\pi(s_t)\big)\Big) 
    \,\middle\vert\, \pi, s_1 = s
    \right].
    \label{eq: dr_v}
\end{aligned}\end{equation}
Similarly, the distributionally robust soft Q-function is given by
\begin{equation}\begin{aligned}
    &Q_{\M_\delta}^{\pi}(s,a) =
    & \inf_{\bp \in \cP(\d)} \E_{\bp} \left[r_1 + \sum_{t=2}^\infty \g^{t-1} \Big(r_t + \a \cdot\H\big(\pi(s_t)\big)\Big) 
    \,\middle\vert\, \pi, s_1 = s, a_1 = a\right].
    \label{eq: dr_q}
\end{aligned}\end{equation}
The distributionally robust optimal value and optimal policy are defined as:
\begin{equation}
    \vro (s) = \max_{\pi \in \Pi} \vr(s)\;\;\textup{and}\;\;
    \pi^\star_{\Mdelta} = \argmax_{\pi \in \Pi} \vr(s).
\end{equation}
\section{Algorithm: Distributionally Robust Soft Actor-Critic}

In this section, we develop the Distributionally Robust Soft Actor-Critic algorithm. We first derive the distributionally robust soft policy iteration and establish its convergence to the optimal policy. To improve computational efficiency, we develop a scalable implementation by replacing the per-$(s,a)$ scalar inner optimization with a shared parametric optimization. Lastly, to handle the unknown nominal distribution in offline settings, we incorporate generative modeling to construct the empirical transition measures.

\begin{assumption} \label{assump: finite action space}
To ensure that the policy entropy $\H(\pi(s)) = \mathbb{E}_{a \sim \pi(\cdot\mid s)}[-\log \pi(a|s)]$ is bounded, we assume $|\A| < \infty$.
\end{assumption}

\begin{remark}
Assumption~\ref{assump: finite action space} is inherited from the non-robust baseline SAC~\citep{haarnoja2018softv1}, which establishes theoretical guarantees in the tabular setting while being empirically used in continuous control benchmarks. Our work extends the performance properties of SAC to the DR-RL framework.
In Section~\ref{sec:algorithm}, we design a practical algorithm in continuous action spaces.
\end{remark}

\subsection{Distributionally Robust Soft Policy Iteration}
We begin with the distributionally robust soft policy iteration, which iterates between DR soft policy evaluation and DR soft policy improvement. We further show that this iteration is guaranteed to converge to the DR optimal policy.

\paragraph{DR soft policy evaluation.} For a fixed policy $\pi$, the DR soft $Q$-function is estimated by iteratively applying the distributionally robust soft Bellman operator, which considers the worst possible transition distribution within the uncertainty set.
For any bounded mapping $Q:\S\times\A\to\R$, the distributionally robust soft Bellman operator is defined as:

\begin{equation}\begin{aligned}
    \tpi_\delta Q(s,a) :=\E[r] + \gamma\cdot
    \inf_{p_{s,a}\in \cP_{s,a}(\d)}\left\{\E_{p_{s,a}, \pi}\left[Q(s',a') - \a\cdot\log\pi(a'\mid s')\right]\right\}.
\label{eq: q_bellman primal}
\end{aligned}\end{equation}

Following~\citet{iyengar2005robustdp, xu2010robustmdp}, the DR soft $Q$-function can be computed via distributionally robust dynamic programming, and $\qr$ is a fixed point of $\tpi_\delta$. However, operator $\tpi_\delta$ is generally intractable because it requires solving an infinite-dimensional optimization problem over the transition distributions.
To address this, we apply the strong duality for worst-case expectations over a KL-divergence ball and derive a equivalent dual formulation.

\begin{proposition}[Dual Formulation of the Distributionally Robust Soft Bellman Operator]
\label{prop: q_bellman dual}
    Suppose $Q(s,a)$ is bounded, the distributionally robust soft Bellman operator in Equation~\eqref{eq: q_bellman primal} can be reformulated into:
    \begin{equation}
        \tpi_\delta Q(s,a) = \E[r] 
        +\gamma\cdot \sup_{\beta\ge0}\left\{-\beta\log\left(\E_{p_{s,a}^0 }\left[\exp\left(-\frac{V(s')}{\beta}\right)\right]\right) - \beta\d\right\},
    \label{eq: q_bellman dual}
    \end{equation}
    where 
    \begin{equation}
    V(s) = \E_{a\sim \pi}\left[Q(s,a) - \a\cdot\log\pi(a\mid s)\right].
    \end{equation}
\end{proposition}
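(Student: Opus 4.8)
The plan is to reduce the infinite-dimensional infimum over the KL ball $\cP_{s,a}(\d)$ appearing in \eqref{eq: q_bellman primal} to a one-dimensional concave maximization in the scalar $\beta\ge0$, using Lagrangian strong duality for worst-case expectations over a KL ball (equivalently, the Gibbs / Donsker--Varadhan variational identity). The nominal reward distribution is not perturbed, so $\E[r]$ simply passes through the operator unchanged; all the work is in handling the transition term.

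First I would decouple the transition perturbation from the policy randomness. In \eqref{eq: q_bellman primal} the infimum is taken only over $p_{s,a}$ while $\pi$ is held fixed, so by the tower property $\E_{p_{s,a},\pi}\big[Q(s',a')-\a\log\pi(a'\mid s')\big]=\E_{s'\sim p_{s,a}}\big[V(s')\big]$ with $V(s)=\E_{a\sim\pi}[Q(s,a)-\a\log\pi(a\mid s)]$ exactly as in the statement. Assumption~\ref{assump: finite action space} makes the entropy term bounded, so in the regime in which the operator is iterated ($Q$ bounded) $V$ is bounded and $\E_{p_{s,a}^0}\big[\exp(-V(s')/\beta)\big]\in(0,\infty)$ for every $\beta>0$; hence every quantity below is well defined.

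Next I would establish the core identity
\[
\inf_{p:\,\DKL(p\|p^0)\le\d}\ \E_{s'\sim p}\big[V(s')\big]
\;=\;
\sup_{\beta\ge0}\Big\{-\beta\log\E_{s'\sim p^0}\big[\exp(-V(s')/\beta)\big]-\beta\d\Big\}.
\]
I would attach a Lagrange multiplier $\beta\ge0$ to the constraint $\DKL(p\|p^0)\le\d$ while keeping normalization as a hard constraint. Since $\d>0$, the nominal $p^0$ is strictly feasible (Slater's condition), and $p\mapsto\E_p[V]$ is linear while $p\mapsto\DKL(p\|p^0)$ is convex and lower semicontinuous over the (weakly compact) simplex, so strong duality holds and $\inf$ and $\sup$ may be interchanged. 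For fixed $\beta>0$, the inner problem $\inf_{p}\{\E_p[V]+\beta\DKL(p\|p^0)\}$ is solved in closed form by the exponentially tilted distribution $p^\star(s')\propto p^0(s')\exp(-V(s')/\beta)$: indeed a direct computation gives
\[
\beta\,\DKL(p\|p^0)+\E_p[V]
= \beta\,\DKL(p\|p^\star) - \beta\log\E_{p^0}\big[e^{-V/\beta}\big]
\;\ge\; -\beta\log\E_{p^0}\big[e^{-V/\beta}\big],
\]
with equality iff $p=p^\star$. Subtracting $\beta\d$ and taking $\sup_{\beta\ge0}$ (the boundary value $\beta=0$ interpreted by continuity as $\essinf V$, which matches the unconstrained worst case) yields the identity; substituting it into \eqref{eq: q_bellman primal} gives \eqref{eq: q_bellman dual}.

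The main obstacle is the rigorous justification of the zero duality gap and of the closed-form inner minimizer in the measure-theoretic (continuous-$\S$) setting, i.e. checking that $p^\star$ is a genuine probability measure, that it attains the infimum, and that the $\inf$--$\sup$ interchange is valid. All three rest on the same two ingredients: boundedness of $V$ (hence integrability of $e^{-V/\beta}$ under $p^0$), which comes from Assumption~\ref{assump: finite action space} together with bounded iterates, and Slater feasibility, which comes from $\d>0$. Once these are in place, the remaining manipulations are routine bookkeeping.
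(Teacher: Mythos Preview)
Your proposal is correct and follows essentially the same route as the paper: isolate $V(s')=\E_{a'\sim\pi}[Q(s',a')-\a\log\pi(a'\mid s')]$ by the tower property, then apply the KL worst-case-expectation duality to $\inf_{p\in\cP_{s,a}(\d)}\E_p[V(s')]$, checking that boundedness of $V$ (via Assumption~\ref{assump: finite action space} and bounded rewards) gives the needed integrability. The only difference is cosmetic: the paper invokes the duality identity as a black-box lemma from \cite{hu2013KL}, whereas you derive it from scratch via Lagrangian strong duality and the Gibbs/Donsker--Varadhan variational formula; your route is more self-contained but arrives at the same place.
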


Derivation is provided in Appendix~\ref{prof: prop q_bellman dual}. Importantly, the dual form depends only on the nominal transition distribution $\cP_{s, a}^0$, instead of an infinite number of distributions in the uncertainty set $\cP(\d)$. 
Also, the inner optimization problem is reduced to a one-dimensional problem over the scalar $\beta$, rather than infinite-dimensional distributions. Using this tractable dual form operator, the DR soft $Q$-value for a fixed policy $\pi$ can be computed by iteratively applying $\tpi_\delta$.

\begin{proposition}[Distributionally Robust Soft Policy Evaluation]
\label{prop: soft policy evaluation}
For any fixed policy $\pi\in\Pi$, starting from any bounded mapping $Q^0:\S\times\A\to\R$, define a sequence $\{Q^k\}$ by iteratively applying distributionally robust soft Bellman operator: $Q^{k+1} = \tpi_\delta Q^k$. This sequence converges to the DR soft $Q$-value $Q_{\M_\delta}^\pi$ as $k\to\infty$.
\end{proposition}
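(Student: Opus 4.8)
The plan is to show that the distributionally robust soft Bellman operator $\tpi_\delta$ is a $\g$-contraction in the supremum norm on the space of bounded functions $Q:\S\times\A\to\R$, to identify $\qr$ as its unique fixed point (using the distributionally robust dynamic programming theory already invoked in the text right after \eqref{eq: q_bellman primal}), and then to conclude $\|Q^k-\qr\|_\infty\le\g^k\|Q^0-\qr\|_\infty\to 0$. I would work throughout in the space of bounded functions $Q:\S\times\A\to\R$ endowed with the $\|\cdot\|_\infty$ norm: since the reward lies in $[0,R_{\max}]$ and Assumption~\ref{assump: finite action space} gives $0\le\H(\pi(s))\le\log|\A|$, the auxiliary function $V_Q(s):=\E_{a\sim\pi}[Q(s,a)-\a\log\pi(a\mid s)]=\E_{a\sim\pi}[Q(s,a)]+\a\H(\pi(s))$ satisfies $\|V_Q\|_\infty\le\|Q\|_\infty+\a\log|\A|$, and because $p^0_{s,a}\in\cP_{s,a}(\d)$ the inner set is nonempty with $\inf_{p\in\cP_{s,a}(\d)}\E_p[V_Q(s')]\in[-\|V_Q\|_\infty,\|V_Q\|_\infty]$; hence $\tpi_\delta$ maps bounded functions to bounded functions, and the same estimates show $\qr$ is bounded. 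That $\qr$ is a fixed point of $\tpi_\delta$ follows from robust dynamic programming for $(s,a)$-rectangular uncertainty sets \cite{iyengar2005robustdp,xu2010robustmdp}, after noting that the state-dependent entropy bonus $\a\H(\pi(\cdot))$ can be absorbed into a bounded transition-dependent reward without disturbing rectangularity.

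The core estimate is the contraction bound. For bounded $Q_1,Q_2$ and any $(s,a)$, the common reward term $\E[r]$ in \eqref{eq: q_bellman primal} cancels, so
\begin{equation*}
    \bigl|\tpi_\delta Q_1(s,a)-\tpi_\delta Q_2(s,a)\bigr|
    =\g\,\Bigl|\inf_{p\in\cP_{s,a}(\d)}\E_p[V_{Q_1}]-\inf_{p\in\cP_{s,a}(\d)}\E_p[V_{Q_2}]\Bigr|.
\end{equation*}
Applying the elementary inequality $|\inf_x f(x)-\inf_x g(x)|\le\sup_x|f(x)-g(x)|$ with $f(p)=\E_p[V_{Q_1}]$ and $g(p)=\E_p[V_{Q_2}]$ bounds the right-hand side by $\g\sup_{p\in\cP_{s,a}(\d)}\bigl|\E_p[V_{Q_1}-V_{Q_2}]\bigr|\le\g\|V_{Q_1}-V_{Q_2}\|_\infty$. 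The entropy terms in $V_{Q_1}$ and $V_{Q_2}$ are identical, so $V_{Q_1}(s)-V_{Q_2}(s)=\E_{a\sim\pi}[Q_1(s,a)-Q_2(s,a)]$ and therefore $\|V_{Q_1}-V_{Q_2}\|_\infty\le\|Q_1-Q_2\|_\infty$. Taking the supremum over $(s,a)$ yields $\|\tpi_\delta Q_1-\tpi_\delta Q_2\|_\infty\le\g\|Q_1-Q_2\|_\infty$ with $\g\in[0,1)$.

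Combining the two, instantiating the contraction with $Q_2=\qr$ gives $\|Q^{k+1}-\qr\|_\infty=\|\tpi_\delta Q^k-\tpi_\delta\qr\|_\infty\le\g\|Q^k-\qr\|_\infty$, so by induction $\|Q^k-\qr\|_\infty\le\g^k\|Q^0-\qr\|_\infty\to0$ as $k\to\infty$, which en route also shows the fixed point is unique. I expect the contraction step itself to be routine — it uses only the $(s,a)$-rectangular primal form \eqref{eq: q_bellman primal} (the dual form \eqref{eq: q_bellman dual} would serve equally well but is less convenient) together with the stability of the infimum under uniform perturbations. The only point requiring genuine care is the bookkeeping of the first paragraph: confirming that $\tpi_\delta$ is well-defined and boundedness-preserving over a continuous state space $\S$, and that the cited robust dynamic programming results indeed identify $\qr$ with the fixed point once the entropy bonus is folded into the reward.
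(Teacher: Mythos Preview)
Your proof is correct, and it takes a cleaner route than the paper's. The paper chooses to work with the \emph{dual} form \eqref{eq: q_bellman dual}: it defines $F_\beta(V)=-\beta\log\E_{p_{s,a}^0}[\exp(-V(s')/\beta)]-\beta\delta$, invokes the optimality characterization of the KL dual (Lemma~\ref{lemma: KL opt condition}) to distinguish whether the optimizer $\beta^\star$ is zero or strictly positive, and then runs a three-case analysis on $(\beta_1^\star,\beta_2^\star)$ to show $\bigl|\sup_\beta F_\beta(V_1)-\sup_\beta F_\beta(V_2)\bigr|\le\|V_1-V_2\|_\infty$. You instead stay with the \emph{primal} form \eqref{eq: q_bellman primal} and use the one-line inequality $|\inf_p f(p)-\inf_p g(p)|\le\sup_p|f(p)-g(p)|$, which bypasses the dual optimizer entirely and avoids any case splitting. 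Your argument is strictly more elementary and would work unchanged for any $(s,a)$-rectangular uncertainty set, not just KL balls; the paper's dual route is more circuitous here, though it has the side benefit of exercising the dual machinery that the algorithm itself relies on. Both arrive at the same $\gamma$-contraction and finish via Banach's fixed-point theorem in the same way.
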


% \begin{proof}[Proof Sketch of Proposition~\ref{prop: soft policy evaluation}]
% We show that the operator $\tpi_\delta$ is a $\gamma$-contraction mapping by delineating the possible values of the optimal solution $\beta^\star$ in \eqref{eq: q_bellman dual}. With DR soft Q-value as the fixed point of $\tpi_\delta$, convergence is a direct conclusion. See details in Appendix~\ref{prof: prop policy evaluation}.
% \end{proof}
The proof shows that the operator $\tpi_\delta$ is a $\gamma$-contraction mapping, with details in Appendix~\ref{prof: prop policy evaluation}.

\paragraph{DR soft policy improvement.}
The distributionally robust soft policy improvement step is similar to that in SAC, with DR soft $Q$-value $Q_{\M_\delta}$ replacing its non-robust counterpart. The new policy in each update is defined as 
\begin{equation}
    \pi_{k+1}
    = \argmin_{\pi\in\Pi} \DKL\left(
    \pi(\cdot\mid s) 
    \,\middle\Vert\,
    \exp\left(\frac{1}{\alpha}Q_{\M_\delta}^{\pi_k}(s,\cdot)\right)
    \,\Big/\,Z^{\pi_k}(s)\right),\,k=0,1,\cdots
\label{eq: dr policy improvement}
\end{equation}

With the above policy updating rule, $\pi_k$ has a non-decreasing value with respect to the DR soft $Q$-function. This extends the non-robust soft policy improvement to the uncertain transition distribution case.

\begin{proposition}[Distributionally Robust Soft Policy Improvement]
\label{prop: soft policy improvement}
Suppose $\lvert\A\rvert<\infty$, let $\pi_{k+1}$ be the solution to the optimization problem above. Then 
\begin{equation}
Q^{\pi_{k+1}}_{\Mdelta}(s,a) \ge Q^{\pi_k}_{\Mdelta}(s,a),\quad\forall(s,a)\in\S\times\A.
\end{equation}
\end{proposition}

% \begin{proof}[Proof Sketch of Proposition~\ref{prop: soft policy improvement}]
% The proof is similar to \cite{haarnoja2018softv1}, but replacing soft Bellman equation with DR version.  See details in Appendix~\ref{prof: prop policy improvement}.
% \end{proof}

The proof is provided in Appendix~\ref{prof: prop policy improvement}. The DR soft policy iteration algorithm alternates DR soft policy evaluation and DR soft policy improvement. The following theorem shows convergence to the DR optimal policy, with proof in Appendix~\ref{prof: thm policy iteration}. 

\begin{theorem}[Distributionally Robust Soft Policy Iteration]
\label{thm: soft policy iteration}
Suppose $\lvert\A\rvert<\infty$, starting from any policy $\pi^0\in\Pi$, the policy sequence $\{\pi^k\}$ converges to the optimal policy $\pi^\star$ as $k\to\infty$. 
\end{theorem}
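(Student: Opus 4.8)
\textbf{Proof proposal for Theorem~\ref{thm: soft policy iteration}.}
The plan is to mimic the classical convergence argument for (soft) policy iteration, now carried out in the robust setting, using the two propositions already established: Distributionally Robust Soft Policy Evaluation (Proposition~\ref{prop: soft policy evaluation}) and Distributionally Robust Soft Policy Improvement (Proposition~\ref{prop: soft policy improvement}). First I would observe that the sequence of DR soft $Q$-functions $\{Q^{\pi^k}_{\Mdelta}\}$ is monotonically non-decreasing pointwise in $(s,a)$: at each iteration, DR soft policy evaluation produces $Q^{\pi^k}_{\Mdelta}$ as the (unique) fixed point of the $\gamma$-contraction $\tpi[\pi^k]_\delta$, and then Proposition~\ref{prop: soft policy improvement} gives $Q^{\pi^{k+1}}_{\Mdelta}(s,a) \ge Q^{\pi^k}_{\Mdelta}(s,a)$ for all $(s,a)$. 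Since rewards lie in $[0,R_{\max}]$ and, by Assumption~\ref{assump: finite action space}, the per-step entropy bonus $\alpha\,\H(\pi(s))$ is bounded by $\alpha\log|\A|$, every $Q^{\pi^k}_{\Mdelta}$ is uniformly bounded above by $(R_{\max}+\alpha\log|\A|)/(1-\gamma)$. A monotone bounded sequence of real numbers converges, so $Q^{\pi^k}_{\Mdelta}(s,a)\to Q^\infty(s,a)$ pointwise for some limit function $Q^\infty$.

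Next I would identify $Q^\infty$ with the DR optimal soft $Q$-function. Let $\pi^\infty$ denote the policy obtained by performing the soft policy improvement step (the KL-projection onto the exponentiated $Q^\infty$) at every state. At convergence the improvement step can yield no further increase, so $Q^{\pi^\infty}_{\Mdelta} = Q^\infty$; equivalently, running the improvement argument of Proposition~\ref{prop: soft policy improvement} with $\pi_k=\pi_{k+1}=\pi^\infty$ forces equality throughout its chain of inequalities. The equality case of that argument pins down $\pi^\infty$ as the Gibbs/softmax policy with respect to $Q^\infty$, which in turn means that for every $(s,a)$,
\begin{equation}
Q^\infty(s,a) = \E[r] + \gamma\cdot\sup_{\beta\ge0}\left\{-\beta\log\left(\E_{p_{s,a}^0}\left[\exp\left(\frac{-V^\infty(s')}{\beta}\right)\right]\right) - \beta\d\right\},
\end{equation}
with $V^\infty(s) = \alpha\log\sum_{a}\exp(\tfrac{1}{\alpha}Q^\infty(s,a))$ being the soft-max value induced by the Gibbs policy; that is, $Q^\infty$ is a fixed point of the DR \emph{optimal} soft Bellman operator $\tpi[\star]_\delta$. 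A standard argument then shows this optimal operator is also a $\gamma$-contraction (the same $\beta^\star$-bounding analysis used in Proposition~\ref{prop: soft policy evaluation} goes through, and the outer $\max$ over policies, being a supremum of $\gamma$-contractions with a common modulus, preserves the contraction), so its fixed point is unique and equals $Q^\star_{\Mdelta}$. Hence $Q^\infty = Q^\star_{\Mdelta}$ and $\pi^\infty = \pi^\star_{\Mdelta}$, which also implies $V^{\pi^k}_{\Mdelta}\to V^\star_{\Mdelta}$.

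Finally, to conclude $\pi^k\to\pi^\star$, I would note that each $\pi^k$ (for $k\ge1$) is the softmax policy of $Q^{\pi^{k-1}}_{\Mdelta}$; since $Q^{\pi^{k-1}}_{\Mdelta}\to Q^\star_{\Mdelta}$ pointwise and the softmax map $Q\mapsto \exp(\tfrac1\alpha Q(s,\cdot))/Z(s)$ is continuous in $Q$ (here $|\A|<\infty$ makes the normalizing sum finite and the map Lipschitz on bounded sets), the policies converge to the softmax policy of $Q^\star_{\Mdelta}$, which is $\pi^\star_{\Mdelta}$.

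The main obstacle I anticipate is the middle step: carefully justifying that the pointwise limit $Q^\infty$ is actually a fixed point of the optimal DR soft Bellman operator, rather than merely a fixed point of the evaluation operator for the (possibly ill-defined) limiting policy. This requires passing the limit through the inner worst-case KL optimization and the improvement step simultaneously — i.e., exchanging $\lim_k$ with both the $\sup_{\beta\ge0}$ in the dual Bellman operator and the $\argmin$ in the policy-improvement projection. The boundedness of all iterates, the uniform $\gamma$-contraction modulus, and the fact (from Proposition~\ref{prop: soft policy evaluation}'s proof) that the optimal dual variable $\beta^\star$ lies in a compact set independent of $k$ should make these interchanges rigorous, likely via a dominated-convergence / uniform-continuity argument; establishing the contraction property of the optimal operator cleanly is the remaining piece of real work.
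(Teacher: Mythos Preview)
Your proposal is correct in spirit and shares its opening moves with the paper --- monotonicity of $Q^{\pi^k}_{\Mdelta}$ from Proposition~\ref{prop: soft policy improvement}, uniform boundedness, hence pointwise convergence --- but the route you take to show the limit is optimal is genuinely different from (and more elaborate than) the paper's.

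The paper does \emph{not} introduce an optimal DR soft Bellman operator, prove it is a $\gamma$-contraction, and then identify $Q^\infty$ as its unique fixed point. Instead, once the limit $\pi^\star$ is reached, the paper simply observes that $\pi^\star$ is a fixed point of the improvement step, so by definition of \eqref{eq: dr policy improvement} one has $J_{\pi^\star}(\pi^\star)\le J_{\pi^\star}(\pi)$ for \emph{every} $\pi\in\Pi$. It then reruns the chain-of-inequalities argument from the proof of Proposition~\ref{prop: soft policy improvement} --- but now comparing $\pi^\star$ against an arbitrary competitor $\pi$ rather than $\pi_{k+1}$ against $\pi_k$ --- to conclude $Q^{\pi}_{\Mdelta}\le Q^{\pi^\star}_{\Mdelta}$ directly. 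This reuses existing machinery verbatim and entirely sidesteps the limit-interchange obstacle you flagged: there is no need to pass $\lim_k$ through $\sup_{\beta\ge0}$ or through the improvement $\argmin$, and no need to establish a second contraction. Your approach would work, and the contraction of the optimal operator is indeed provable along the lines you sketch, but the paper's argument is shorter and avoids precisely the technical step you anticipated as the main difficulty. On the other hand, your final paragraph --- deducing $\pi^k\to\pi^\star$ from $Q^{\pi^{k-1}}_{\Mdelta}\to Q^\star_{\Mdelta}$ via continuity of the softmax with $|\A|<\infty$ --- is more explicit than the paper, which asserts policy convergence from $Q$-convergence without elaboration.
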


% \duo{Mention here, to solve \ref{eq: q_bellman dual}, we still have 2 challenges to be solved: 1. how to estimate the nominal distribution and 2. how to solve the optimization in scale}

\paragraph{Key Challenges.}
Although DR soft policy iteration converges to the optimal policy in the tabular setting, several challenges arise in extending it to continuous action space and offline setting: 1) the DR soft policy evaluation step is computationally expensive at scale due to the per-$(s,a)$ inner optimization over $\beta$; 2) the nominal transition distribution $p_{s, a}^0$ is typically unknown in offline RL tasks, and 3) exact DR soft policy iteration is not directly applicable in continuous action space. We will resolve these issues step by step in the rest of this section.

\subsection{Solving Dual Optimization using Generative Model}
\label{section: func approx}
In offline reinforcement learning, the goal is to learn the optimal policy from a pre-collected dataset $\cD = \{(s_i,\,a_i,\,r_i,\,s'_i)\}_{i=1}^N$, where $(s_i,a_i)\sim\mu$, with $\mu$ denoting the data generation distribution determined by the behavior policy, $r_i = R(s_i,a_i)$ and $s'_i\sim P^0(\cdot\mid s_i,a_i)$. 
% In this section, we derive a practical functional optimization method to compute the dual formulation of DR soft Bellman operator in \eqref{eq: q_bellman dual} with higher efficiency to address challenge 1, and propose a generative modeling scheme to address challenge 2. 
In this section, we address the key computational and modeling challenges of DR soft policy iteration. Specifically, we (i) develop a scalable reformulation of the dual Bellman operator by approximating the per-$(s,a)$ scalar optimization into a shared optimization problem over a function space, and (ii) introduce a generative modeling scheme to estimate unknown nominal transition distributions.

\paragraph{Dual Reformulation via Functional Optimization.} 
In DR soft policy evaluation, the Bellman operator $\tpi_\d$ is iteratively applied to the $Q$-function. From the dual form operator in \eqref{eq: q_bellman dual}, each application requires solving an optimization problem over a scalar $\beta>0$. While this optimization is tractable, it must be solved separately for every $(s, a)$ pair, which becomes computationally expensive for large-scale problems. 
To improve training efficiency, we convert a group of scalar optimization problems into a single functional optimization problem. Intuitively, instead of solving for the optimal 
$\beta^\star$ separately at each state–action pair, we learn a function that approximates these optimal values jointly across the dataset. This can be achieved by the interchange of minimization and integration in decomposable spaces~\citep{rockafellar2009variational}.

Formally, consider the probability space $(\S\times\A, \Sigma(\S\times\A), \mu)$ and let $L^1(\S\times\A, \Sigma(\S\times\A), \mu)$ be the set of absolutely integrable functions on that space, abbreviated as $L^1$.
For any $\d>0$ and value function $V:\S\to\left[0, (R_{\max}+\a\log\lvert\A\rvert)/(1-\gamma)\right]$, define
\begin{equation}
f((s,a), \beta) := -\beta\log\left(\E_{p_{s,a}^0 }\left[\exp\left(-\frac{V(s')}{\beta}\right)\right]\right) - \beta\d.
\label{eq: beta opt problem}
\end{equation}
Assume $\lvert\A\rvert<\infty$, define the admissible function set
\begin{equation}
\G := \left\{g\in L_1: g(s,a)\in\left[0,\frac{R_{\max}+\a\log\lvert\A\rvert}{(1-\gamma)\delta}\right],\,\forall(s,a)\in\S\times\A\right\}.
\end{equation}

\begin{proposition}[Interchange of Minimization and Expectation]
\label{prop: functional}
For any $\delta>0$,
\begin{equation}
\E_{(s,a)\sim\cD}\left[\sup_{\beta\ge0}f\big((s,a),\beta\big)\right]
=\sup_{g\in \G}\E_{(s,a)\sim\cD}\Big[f\big((s,a), g(s,a)\big)\Big].
\label{eq: interchange v}
\end{equation}
\end{proposition}

% \begin{proof}[Proof Sketch of Proposition~\ref{prop: functional}]
% This is supported by the \textit{interchange of minimization and integration} property of decomposable spaces in \cite{rockafellar2009variational}, which is applied to replace pointwise optimal conditions with optimality in a function space. We find a uniform bound on the pointwise optimal value $\beta^\star$ and utilize the previous property to prove this proposition. Details in Appendix~\ref{prof: prop functional}. 
% \end{proof}
The proof is provided in Appendix~\ref{prof: prop functional}. The results allow us to solve a single optimization problem over the function $g$ instead of the $\lvert\cD\rvert$ scalar optimization problems. In practice, $g$ is learned jointly across the dataset, substantially reducing training time while preserving robustness.

Based on Proposition~\ref{prop: functional}, we define a \textit{functional} DR soft Bellman operator by replacing the scalar $\beta$ with a function $g(s,a)$ and removing the inner optimization. For any function $g\in\G$ and mapping $Q:\S\times\A\to\left[0, (R_{\max}+\a\log\lvert\A\rvert)/(1-\gamma)\right]$, let 
\begin{equation}\begin{aligned}
    \tpi_{\delta,g}Q(s,a) :=&
    \E[r] +\gamma\cdot f\big((s,a), g(s,a)\big)\\
     =&\E[r] 
    +\gamma\cdot \left\{-g(s,a)\log\left(\E_{p_{s,a}^0 }\left[\exp\left(-\frac{V(s')}{g(s,a)}\right)\right]\right) - g(s,a)\d\right\}, 
\label{eq: q_bellman dual functional}
\end{aligned}\end{equation}
where $V(s) = \E_{a\sim \pi}\left[Q(s,a) - \a\cdot\log\pi(a\mid s)\right].$
% From Proposition~\ref{prop: functional}, we have a direct conclusion that $\lVert\tpi_\d Q -  \tpi_{\delta,g^\star} Q\rVert_{1,\mu}=0$, where $g^\star = \argsup_{g\in\G}\E_{(s,a)\sim\cD}\Big[f\big((s,a), g(s,a)\big)\Big]$. 
% In standard SAC algorithm, the soft Q-function is trained to minimize the the mean squared error of current estimate and target Q-value from Bellman equation:
% $$
% J_Q = \E_{(s,a)\sim\cD}\left[Q^\pi_{\M}(s,a) - \mathcal{T}^\pi Q^\pi_{\M}(s,a)\right]^2.
% $$
% In robust MDP $\M_\d$, we replace Bellman update with its distributionally robust version and change target Q-value accordingly. The soft Q-function loss becomes:
% $$
% J_Q^{\text{rob}} = \E_{(s,a)\sim\cD}\left[Q^\pi_{\M}(s,a) - \mathcal{T}^\pi_{\d} Q^\pi_{\M_\d}(s,a)\right]^2.
% $$
% To make training process more efficient, we replace it with
% $$
% J^{\text{rob}}_Q =  \E_{(s,a)\sim\cD}\left[\qr(s,a) -  \tpi_{\d,g^\star}\qr(s,a)\right]^2.
% $$
% where 
% \begin{equation}
% \label{eq: opt g}
% g^\star = \arg\max_{g\in\G}\E_{(s,a)\sim\cD}\Big[f\big((s,a),g(s,a)\big)\Big]
% \end{equation} 

% \gh{This seems intractable because the decision variables are infinite dimensional? }
% \mx{g is infinite dimensional, in practical algorithm we use neural network to approximate it. we do need a better way to explain.} \gh{ok }

% Algorithm with new loss function requires solving three optimization problems in each step, instead of an independent one for each $(s,a)$ pair.

\paragraph{Generative Modeling for Nominal Distributions.}
\label{sec:gen model}
% In offline RL problem, we assume the nominal distributions $\cP^0$ is unknown and no simulator is available to generate additional samples. Note that Empirical Risk Minimization (ERM) is not applicable in our case since operator $\tpi_{\delta,g^\star}$ returns to the non-robust one. To empirically apply operator $\tpi_{\delta,g}$ in discrete space, the nominal distribution can be estimated as $\widetilde{P}(s'\mid s,a) = N(s,a,s') / \lvert\cD\rvert$, where $N(s,a,s')$ is the number of tuple $(s,a,s')$ in $\cD$. Similar idea has been used in \cite{panaganti2022gen, Bound-DR-VR-Qlearning, shi2024modelbased}. \duo{Those should be moved to related works} In continuous state and action space, we incorporate a generative distribution model into RL framework and utilize self-generated data to solve the optimization problem empirically. \mx{rethink the statement logic.}

% Variational Auto Encoder (VAE) is one of the most popular methods to learn complex distributions and has showed superior performance in generating different type of data. In VAE, the encoder maps data tuple $(s,a,s')\in\cD$ into a latent space $z$ by assuming Gaussian prior distribution and approximating the posterior distribution with neural network. The decoder then reconstruct next state $s'$ by sampling $z$ from latent space and feed $(s,a,z)$ to decoder.

In offline RL, we assume that the nominal distributions $\cP^0$ are unknown, and no simulator is available to generate additional samples. Under the KL-constrained uncertainty set, the dual optimization problem in the DR soft Bellman operator (both original and functional) is non-linear. Directly estimating the required expectations from the offline dataset $\cD$ suffers from the \textit{double-sampling issue}, making empirical risk minimization inapplicable. A detailed discussion is provided in Appendix~\ref{sec:discuss vae}.

To enable practical implementation of operator $\tpi_{\delta,g}$ in continuous space, we incorporate a generative model to estimate the nominal transition distributions. To be specific, we train a variational autoencoder (VAE) model on collected data $(s,a,s')\in\cD$ to learn transition $p_{s,a}^0$. The trained VAE generates next-state samples $\{\tilde{s}_i'\}_{i=1}^m$ and construct an empirical measure $\tilde{p}_{s,a}^0$. For any function $h:\S\mapsto\R$, the empirical expectation is defined as  $\E_{s'\sim\tilde{p}_{s,a}^0}[h(s')] = \frac1m \sum_{i=1}^m h(\tilde{s}'_i)$. We define the \textit{empirical} DR soft Bellman operator as
\begin{equation}
    \widetilde{\mathcal{T}}^\pi_{\delta,g}Q(s,a) :=
     \E[r] + \gamma\cdot \widetilde{f}\big((s,a), g(s,a)\big),
\label{eq: q_bellman dual functional empirical}
\end{equation}
where 
\begin{equation}
    \widetilde{f}\big((s,a), \beta\big) = 
-\beta\log\left(\E_{\widetilde{p}_{s,a}^0 }\left[\exp\left(-\frac{V(s')}{\beta}\right)\right]\right) - \beta\d.
\label{eq: empirical beta opt problem}
\end{equation}
% We discuss why other model-free methods are not applicable in our case in Appendix~\ref{sec:discuss vae}. Also, our algorithm is still model-free since it does not require storing every transition function and building an empirical MDP. Updating rules in \eqref{eq: empirical beta opt problem} can be implemented for each $(s,a)$-pair separately, and all generated samples $\tilde{s}_i'$ can be discarded immediately.

%\huan{But we also have no guarantees? What if the VAE is a bad one? What is the assumption made on VAE to make our method work? Our VAE is just a empirical trick? I am a bit confused with the VAE part.}\duo{VAE is an important component in the dual optimization, without the nominal distribution, it will degrade to un-robust algorithm.}
% \yx{several related work about using VAE to estimate nominal distribution in non-robust offline RL: \cite{caselles2018continual,lyu2022mildly,fujimoto2019bcq,xu2022constraints,wei2021boosting}, see also section~3 of \cite{chen2024deep} for more related works.  It may be helpful to include these works in the related work section and this section to illustrate why applying a VAE in our setting can lead to strong empirical performance.}

\subsection{Distributionally Robust Soft Actor-Critic}
\label{sec:algorithm}
We now extend the action space to the continuous setting and use neural networks to approximate the DR soft value function and policy. We consider RMDP $\M_\d$ and omit subscripts in $V$ and $Q$. Our algorithm includes a value network $V_\psi(s)$, $Q$-networks $Q_\theta(s,a)$ and a stochastic policy $\pi_\phi(a\mid s)$, parametrized by $\psi, \theta$ and $\phi$. $\bar{\psi}$ and $\bar{\theta}$ are the target network parameters to stabilize training~\citep{mnih2015soft_update}. Let $\varphi$ be the parameters of VAE model. We use a parametrized neural network $\G_\eta$ to approximate the function set $\G$.

The core idea of our DR-SAC algorithm is to alternate between \textit{empirical} DR soft policy evaluation and DR soft policy improvement. The loss of $Q$-network is
\begin{equation}
\label{eq: dr q loss}
    J^{\text{DR}}_Q(\theta) = \E_{(s,a)\sim\cD}\left[\frac12\left(Q_\theta(s,a) - \tpi_{\d, \widetilde{g}^\star}Q_\theta(s,a)\right)^2\right],
\end{equation}
where
\begin{equation}
\label{eq: empirical opt g}
    {\widetilde{g}}^\star = \argsup_{g\in\G_\eta} \E_{(s,a)\in\cD}\left[\widetilde{f}((s,a),g(s,a))\right].
\end{equation}
The loss functions of $\psi,\,\phi$ and $\alpha$ are the same as SAC, and the loss function of $\varphi$ is the standard VAE evidence lower bound (ELBO) loss.
We adopt the SAC-v1 algorithm~\citep{haarnoja2018softv1} with explicit $V$-function, as we observe empirically that including a $V$-network reduces sensitivity to the behavior policy underlying the offline dataset. Ablation studies are presented in Appendix~\ref{sec:ablation use v}. 
To mitigate overestimation bias, we employ multiple Q-functions $Q_{\theta_i},(i\in[n])$, train independently, and use the minimum in updating the value and policy networks. This has been shown to outperform the clipped double $Q$-learning ($n=2$) in offline RL~\citep{an2021sac-n}. We formally present the Distributionally Robust Soft Actor-Critic in Algorithm~\ref{alg:DR-SAC}. Detailed loss functions are provided in Appendix~\ref{sec:algo detail}. We also derived a regret bound in Appendix~\ref{sec:bound}.

\begin{algorithm}[h]
	\caption{Distributionally Robust Soft Actor-Critic (DR-SAC)} 
    \label{alg:DR-SAC}
	\begin{algorithmic}[1]
    \Require {
    Offline dataset $\mathcal{D} = \{(s_i,a_i,r_i,s'_i)^N _{i=1}\}$, 
    $V$-function network weights $\psi$, 
    $Q$-function network weights $\theta_i, i \in [n]$,
    policy network weights $\phi$, 
    transition VAE network weights $\varphi$,
    weight $\tau$ for moving average,
    function class $\G_\eta$}
    \State {$\bar{\psi}\leftarrow\psi,
            \bar{\theta}_i \leftarrow \theta_i$ for $i\in[n]$}
    \Comment {Initialize target network weights for soft update}
		\For {each gradient step}
            \State{$\varphi \leftarrow \varphi - \lambda_\varphi\hat{\nabla}_\varphi J_{\text{VAE}}(\varphi)$} %for $i\in\{r,s\}$}
            \Comment{Update transition VAE weights}

            \State{Generate samples $\{\tilde{s}'_i\}_{i=1}^m$ from VAE, form empirical measures $\widetilde{p}_{s,a}^0$}
            
		\State {Compute optimal function $\widetilde{g}^\star$ according to \eqref{eq: empirical opt g}}

            \State{$\psi \leftarrow \psi - \lambda_\psi\hat{\nabla}_\psi J_V(\psi)$} 
            \Comment{Update $V$-function weights} 
            
            \State{$\theta_i \leftarrow \theta_i - \lambda_Q\hat{\nabla}_{\theta_i}J^{\text{DR}}_Q(\theta_i)$ for $i\in [n]$ } 
            \Comment{Update $Q$-function weights}

            \State{$\phi \leftarrow \phi - \lambda_\pi\hat{\nabla}_\phi J_\pi(\phi)$} 
            \Comment{Update policy weights}

            \State{$\alpha \leftarrow \alpha - \lambda_\alpha\hat{\nabla}_\alpha J(\alpha)$} 
            \Comment{Adjust temperature}

            \State{$\bar{\psi} \leftarrow \tau\psi + (1-\tau)\bar{\psi}$, $\bar{\theta}_i \leftarrow \tau\theta_i + (1-\tau)\bar{\theta}_i$ for $i\in[n]$} 
            \Comment{Update target network weights}
            
		\EndFor

        \Ensure{ $\phi$}
	\end{algorithmic} 
\end{algorithm}
%\vspace{-1em}
\section{Experiments}
\label{sec:exp}
The goal of our experiments is to demonstrate the robustness of DR-SAC under environmental uncertainties in offline RL tasks. 
We measure performance by the average episode rewards under different perturbations, and compare DR-SAC with non-robust baselines and RFQI. To the best of our knowledge, RFQI is the only offline DR-RL algorithm applicable to continuous action spaces. 
Moreover, extensive ablation studies demonstrate that VAE-based DR-SAC with functional approximation achieves the best trade-off between robustness and computational efficiency. 
%This performance can be attributed to three factors: (i) functional approximation does not compromise robustness in practice; (ii) VAE-based uncertainty modeling is stable and efficient; and (iii) the resulting optimization problem remains simple compared to alternative robust formulations.

\subsection{Settings}
 \vspace{-0.5em}
We implement SAC and DR-SAC based on the multiple critic version SAC-N \citep{an2021sac-n}. Besides RFQI, we also compare DR-SAC with Fitted Q-Iteration (FQI), Deep Deterministic Policy Gradient (DDPG; \cite{lillicrap2015ddpg}), and Conservative Q-Learning (CQL; \cite{kumar2020cql}).

We consider \pv, \cp, \lld, \rvs and \hcs environments from Gymnasium \citep{towers2024gymnasium}. For \cp, we use the continuous action space version in \cite{mehta2021continuous-cp}. For \lld, we also adopt a continuous action space setting. All algorithms are trained in the nominal environment and evaluated under various perturbations. In our experiments, perturbations include environment parameter changes, random noise added to observed states and random actuator noise applied to actions. Detailed experimental settings are provided in Appendix~\ref{sec:exp_setting}

\subsection{Performance Analysis}
\label{sec:experiment performance}

 \vspace{-2em}
\begin{figure}[bh]
  \centering
  \subfloat[\centering Length Perturbation \\ \pv]
  {%
    \includegraphics[width=0.33\textwidth,valign=t]{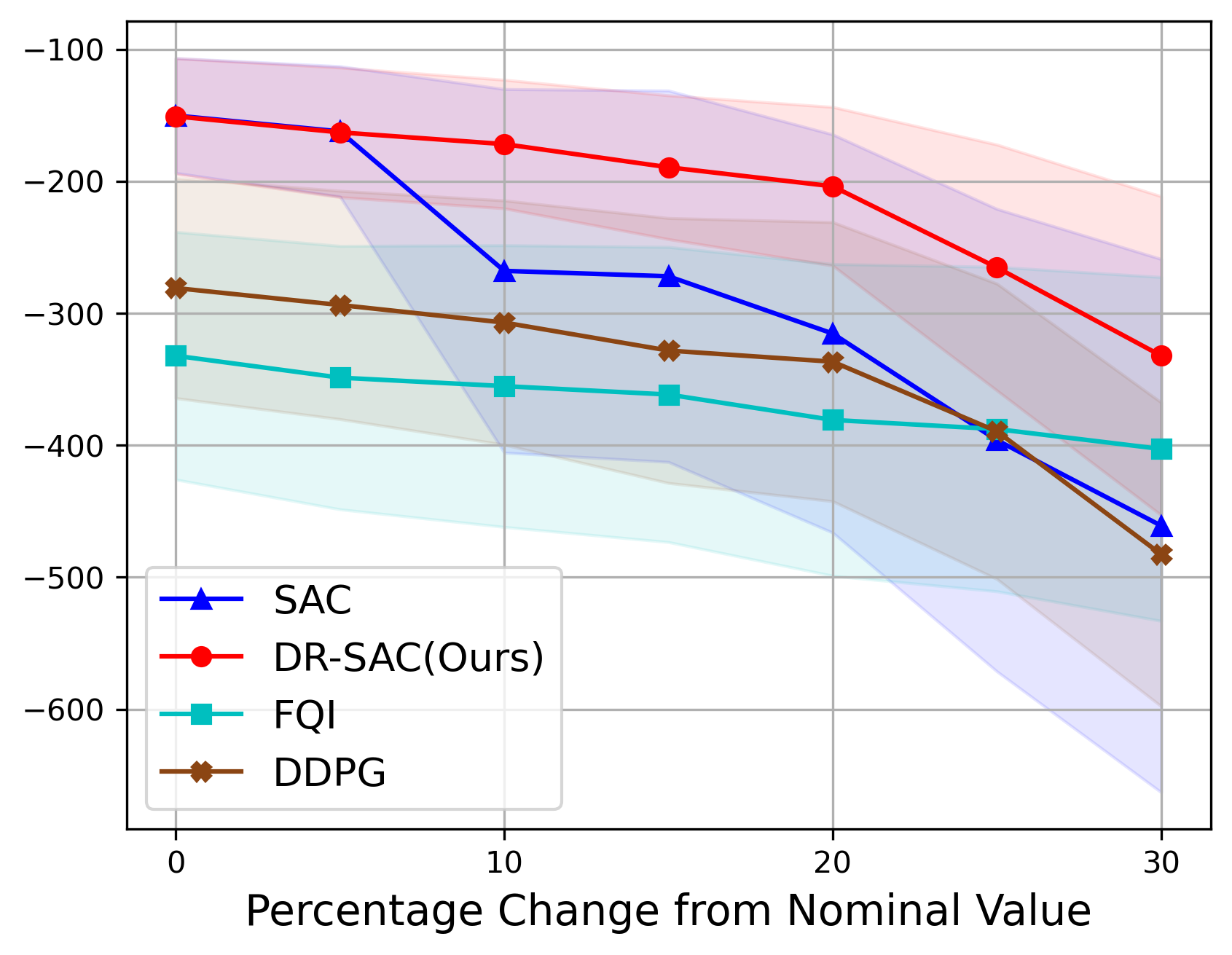}%
  } \hfill
  \subfloat[\centering Action Perturbation \\ \cp]
  {%
    \includegraphics[width=0.33\textwidth,valign=t]{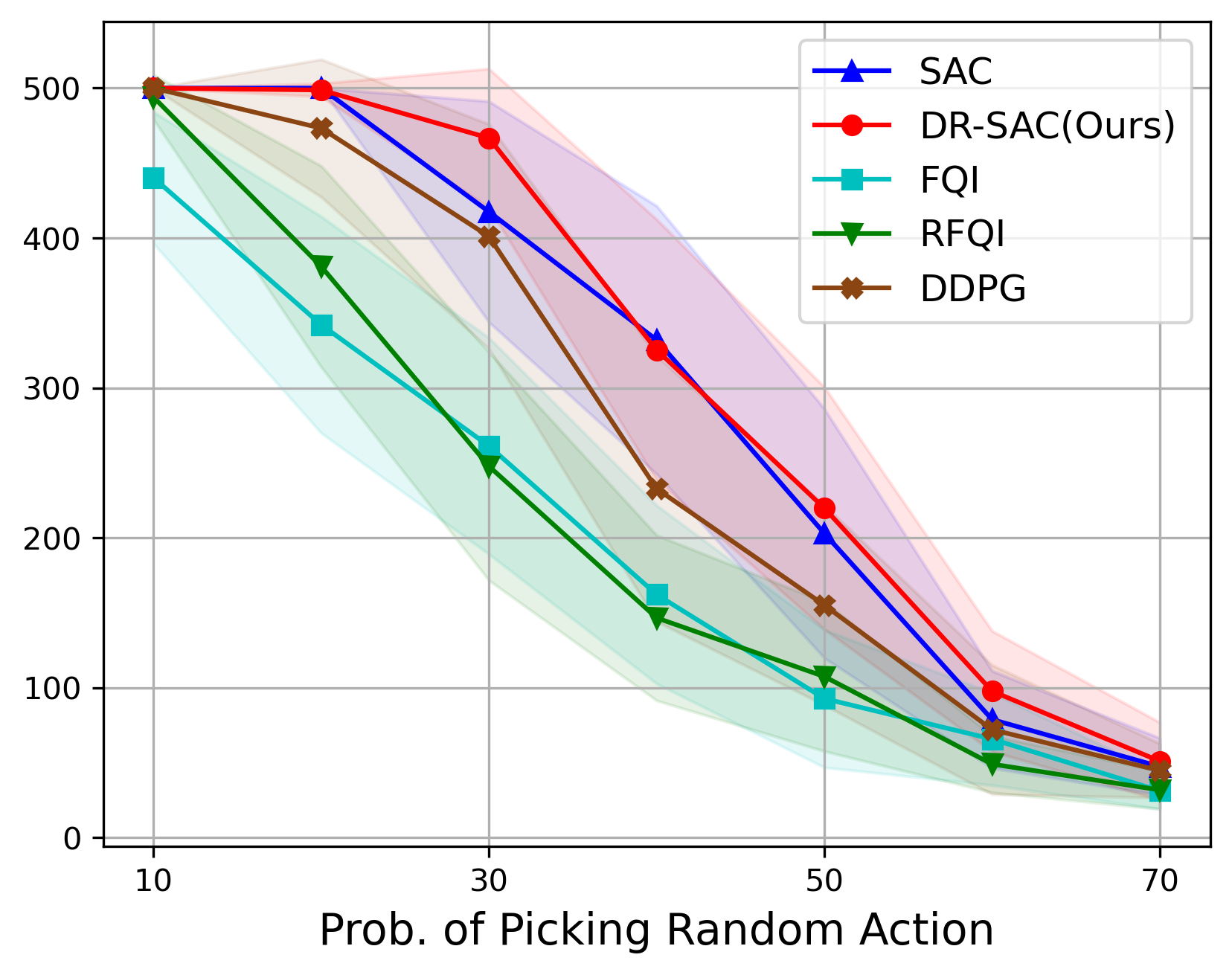}%
  } \hfill
  \subfloat[\centering Engine Perturbation \\ \lld]
  {%
    \includegraphics[width=0.33\textwidth,valign=t]{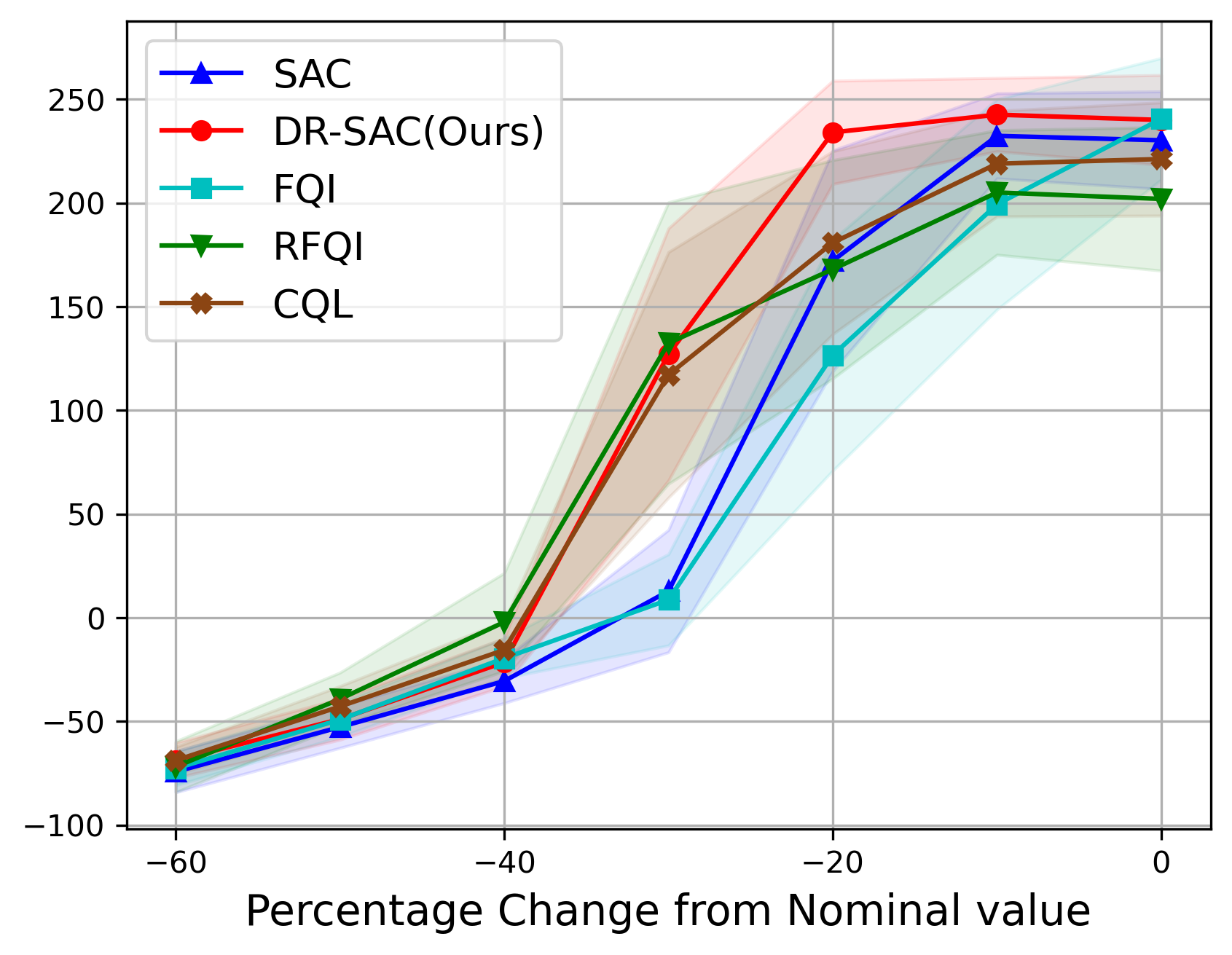}%
  }
    \vspace{-1em}
  \subfloat[\centering Observation Perturbation\\ \rv]
  {%
    \includegraphics[width=0.33\textwidth,valign=t]{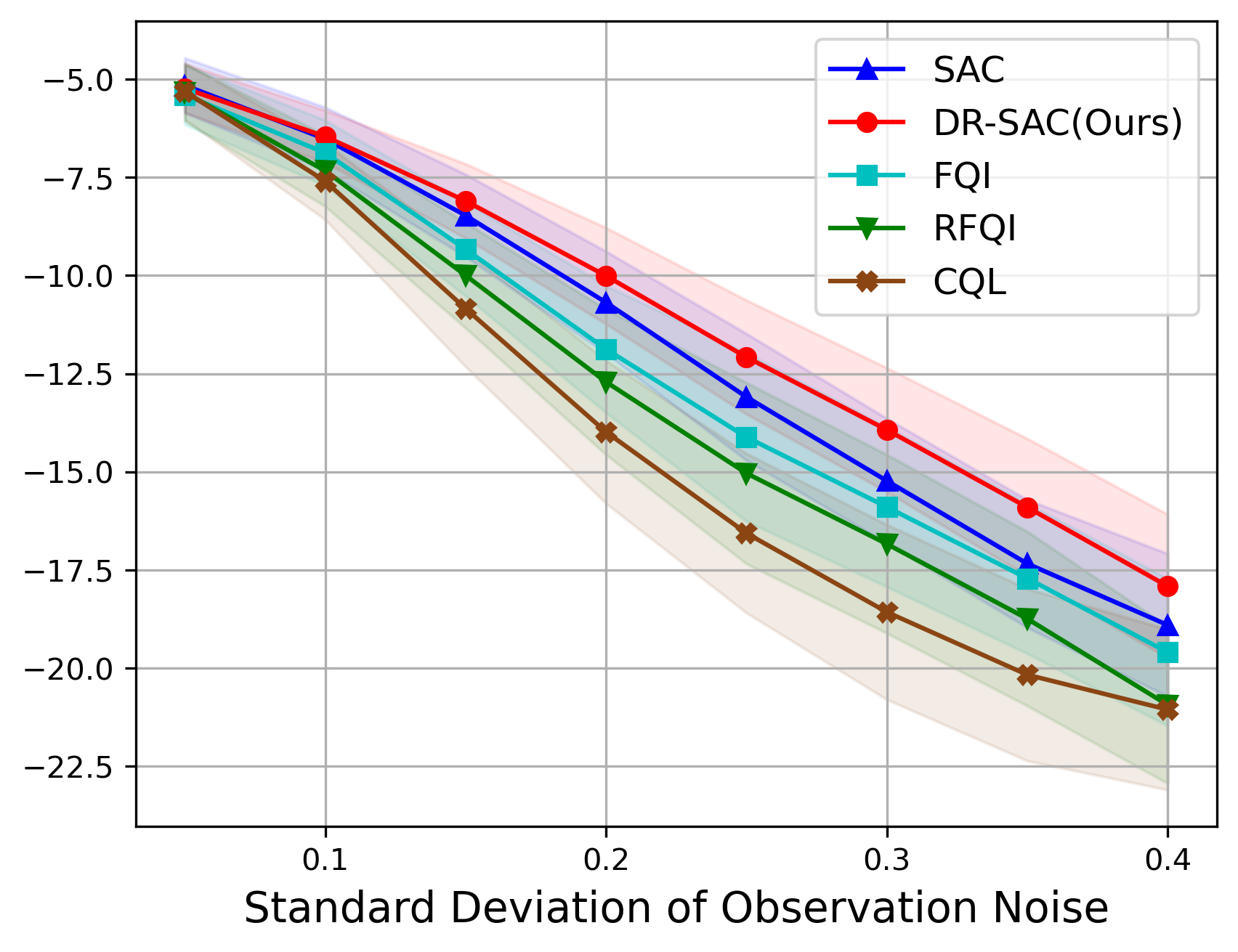}%
  } \hfill
  \subfloat[\centering Damping Perturbation \\ \rv]
  {%
    \includegraphics[width=0.33\textwidth,valign=t]{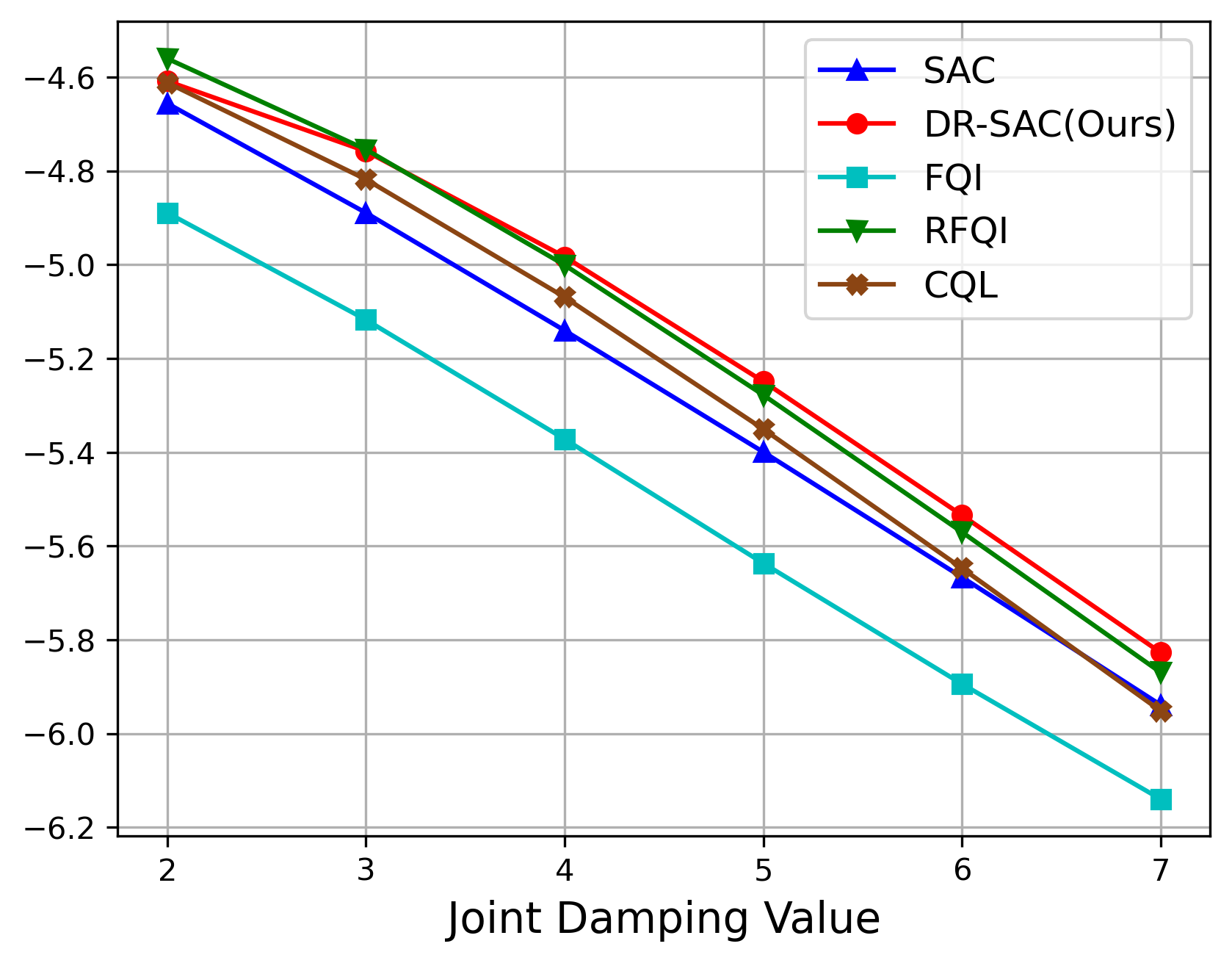}%
  } \hfill
  \subfloat[\centering Back Damping Perturbation\\ \hc]
  {%
    \includegraphics[width=0.33\textwidth,valign=t]{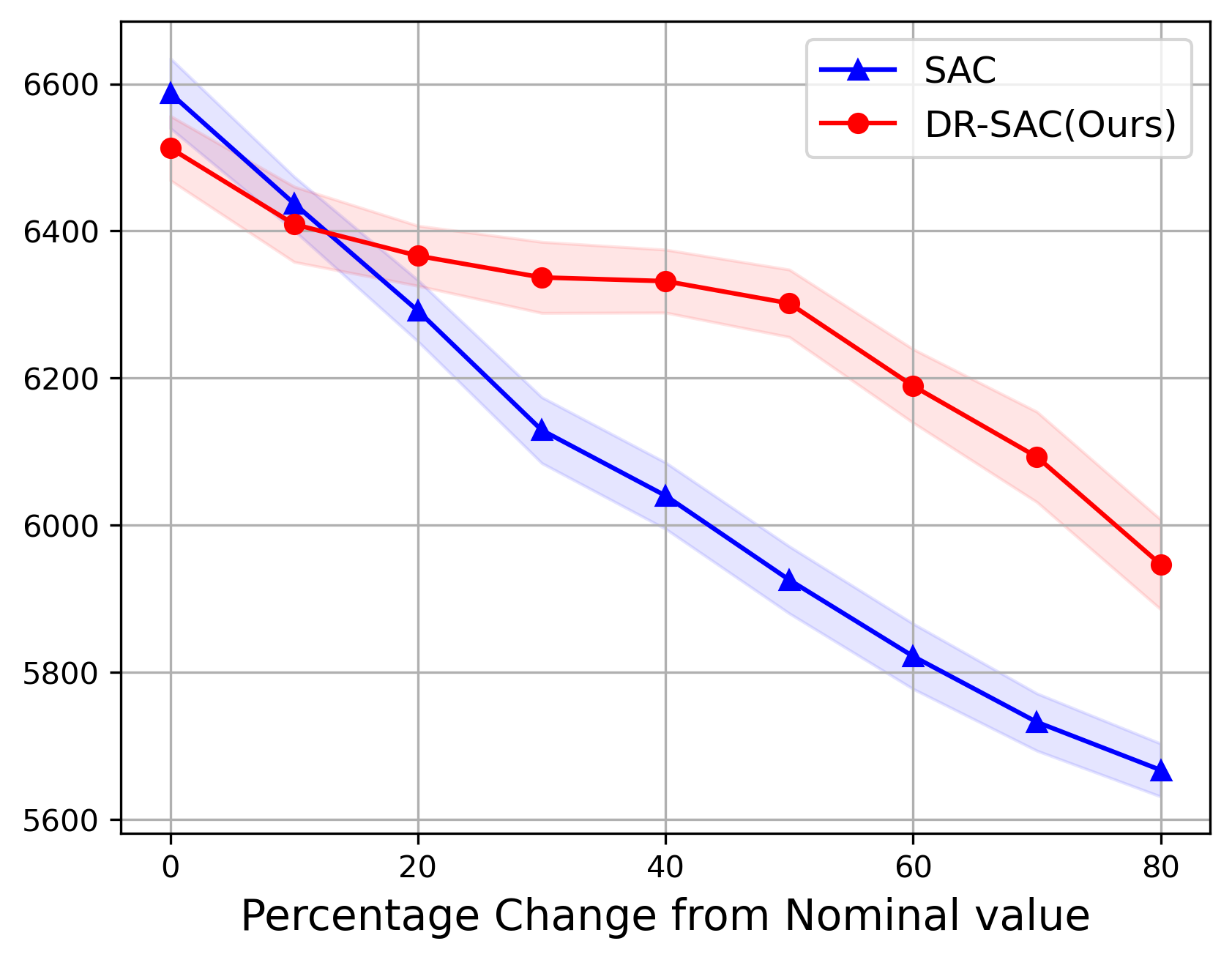}%
  }
\caption{\textbf{Robustness performance comparison.} 
The curves show the average reward over 50 episodes, with shaded regions indicating $\pm 0.5$ standard deviation (see Table~\ref{tabel:reacher std} for standard deviations in sub-figure~(e)). Environmental perturbations include parameter shifts, state and actuator noise.}

%The curves show the average reward over 50 episodes, shaded by $\pm$0.5 standard deviation (Figure (e) see Table~\ref{tabel:reacher std}). In \textit{Pendulum}, the environment parameter \textit{length} changes. In \textit{Cartpole}, random actions are taken by the actuator. In \textit{LunarLander}, the environment parameters \textit{main\_engine\_power} and \textit{side\_engine\_power} change together. In \textit{Reacher}, a Gaussian noise is added to nominal states; and the environment parameter \textit{joint\_damping} changes. In \textit{HalfCheetah}, environment parameter \textit{back\_damping} changes.}
\label{fig:performance}
\end{figure}

This section reports selected experiment results. Additional experiments are provided in Appendix~\ref{sec:extra_exp}.
In the \pvs environment, we evaluate robustness against parameter $\textit{length}$ perturbations. RFQI is omitted due to poor performance in the nominal environment. In Figure~\ref{fig:performance}(a), DR-SAC performance outperforms SAC by $35\%$ when the length changes by 20\%. 
In the \cps environment, the actuator is perturbed by taking random actions with different probabilities. DR-SAC consistently outperforms RFQI, especially when the probability of random action is below $50\%$. 
In the \llds environment, we jointly perturb \textit{main\_engine\_power} and \textit{side\_engine\_power} to model engine power disturbance. DR-SAC shows consistently robust performance compared to other algorithms. 
In Figure~\ref{fig:performance}(c), under $20\%$ perturbation, DR-SAC achieves an average reward of $240$ while the rewards of all other algorithms drop under $180$. Moreover, DR-SAC achieves $9.8\times$ higher reward than SAC when parameters reduce by $30\%$.

To further evaluate robustness in more complex environments, we conduct experiments in \hcs and \rvs from MuJoCo~\citep{todorov2012mujoco}. In the \rvs environment, we consider Gaussian observation noise and parameter \textit{joint\_damping} perturbation. In Figure~\ref{fig:performance}(d), DR-SAC shows the best performance across all observation noise levels. In Figure~\ref{fig:performance}(e), DR-SAC clearly outperforms SAC. In the \hcs environment, we present the experiments of SAC and DR-SAC due to the poor performance of FQI and RFQI. When \textit{back\_damping} varies within $50\%$, DR-SAC maintains a stable average reward of over $6300$, while the average reward of SAC keeps decreasing below $5950$.

\paragraph{Discussion on FQI Failure.}
It is worth noting that FQI and RFQI perform poorly even in unperturbed \pvs and \hcs environments. One possible reason is that offline RL algorithms exhibit different sensitivities to dataset distributions. SAC works well when the dataset provides a broad coverage over the action space~\citep{kumar2019bear}. In contrast, FQI is implemented on Batch-Constrained Deep Q-learning (BCQ;~\citet{fujimoto2019bcq}), which restricts the agent to selecting actions close to the behavior policy. This conflicts with the epsilon-greedy-method data generation process in our experiments. One primary goal of our experiments is to demonstrate that DR-SAC improves robustness over SAC under common environmental perturbations. Investigating the sensitivity of offline RL algorithms to dataset distribution is out of the scope of this work.

\subsection{Ablation Studies}
\label{sec:ablation}

To better understand the design choices in DR-SAC, we conduct a series of ablation studies focusing on computational efficiency and generative model selection. Specifically, we examine (i) the impact of functional approximation on robustness and training time, (ii) the optimization efficiency compared to RFQI, and (iii) the sensitivity of DR-SAC to different generative modeling choices. These studies aim to validate that the proposed design achieves a favorable trade-off between robustness and efficiency.

\subsubsection{Training Efficiency of DR-SAC.}

\paragraph{Comparison with Accurate Bellman Operator}
In Section \ref{section: func approx}, we approximate the Bellman operator $\tpi_\d$ with $\tpi_{\d,g}$ to avoid solving optimization problems for each $(s,a)$ pair. To evaluate the impact of this approximation, we additionally implement an algorithm using the accurate operator. We refer to this variant as \textit{DR-SAC-Accurate} and denote Algorithm \ref{alg:DR-SAC} as \textit{DR-SAC-Functional} in this section. As shown in Figure~\ref{fig:pv acc}, \textit{DR-SAC-Functional} achieves comparable and even better robustness performance while requiring less than $2\%$ training time. These results validate that functional approximation significantly improves computational efficiency without sacrificing robustness. Variant algorithm details and training time are provided in Appendix~\ref{sec:ablation efficiency}.

\begin{figure}[h]
  \centering
  \subfloat[\centering Mass Perturbation. \pv]
  {%
    \includegraphics[width=0.45\textwidth,valign=t]{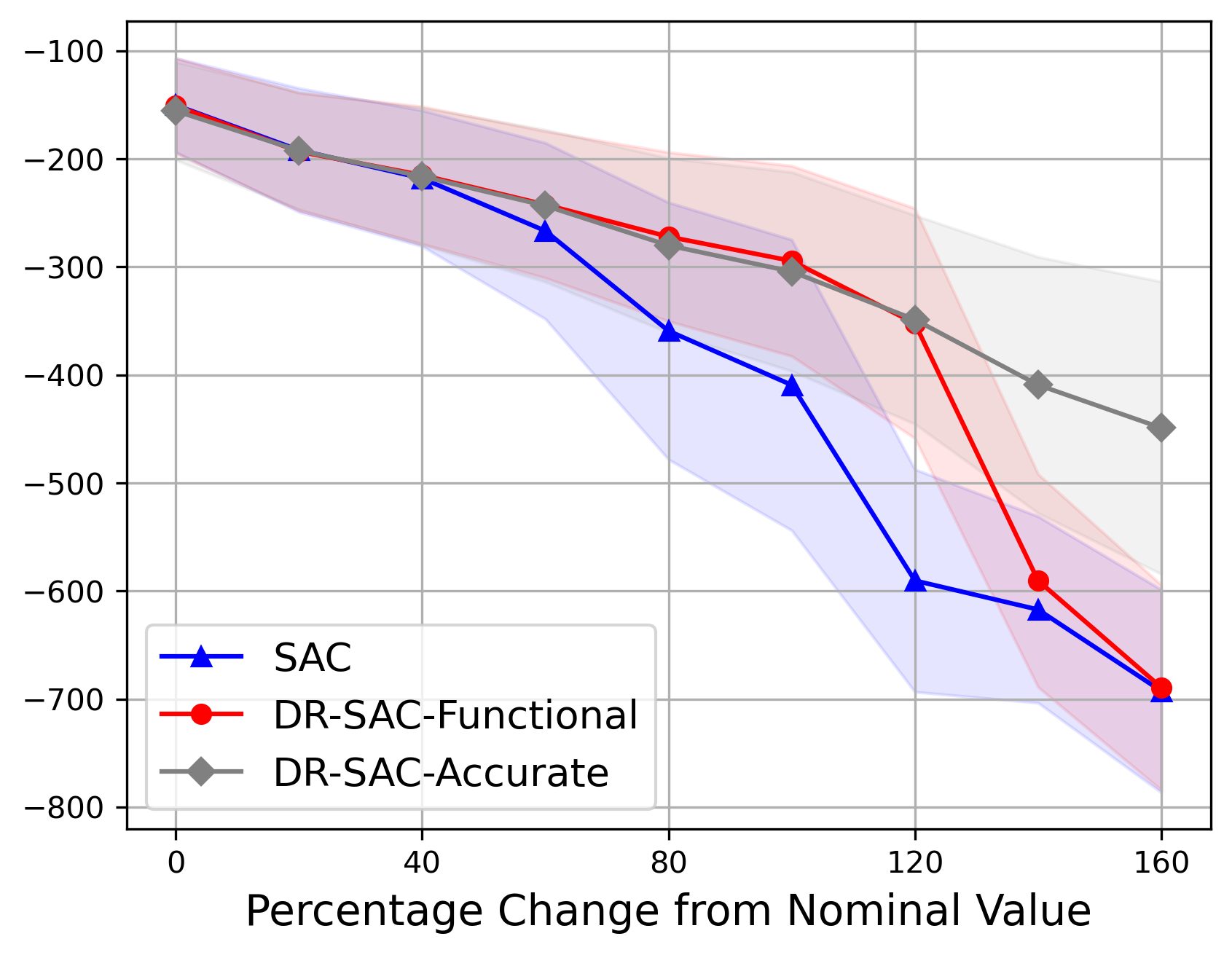}%
  } \hfill
  \subfloat[\centering Length Perturbation. \pv]
  {%
    \includegraphics[width=0.45\textwidth,valign=t]{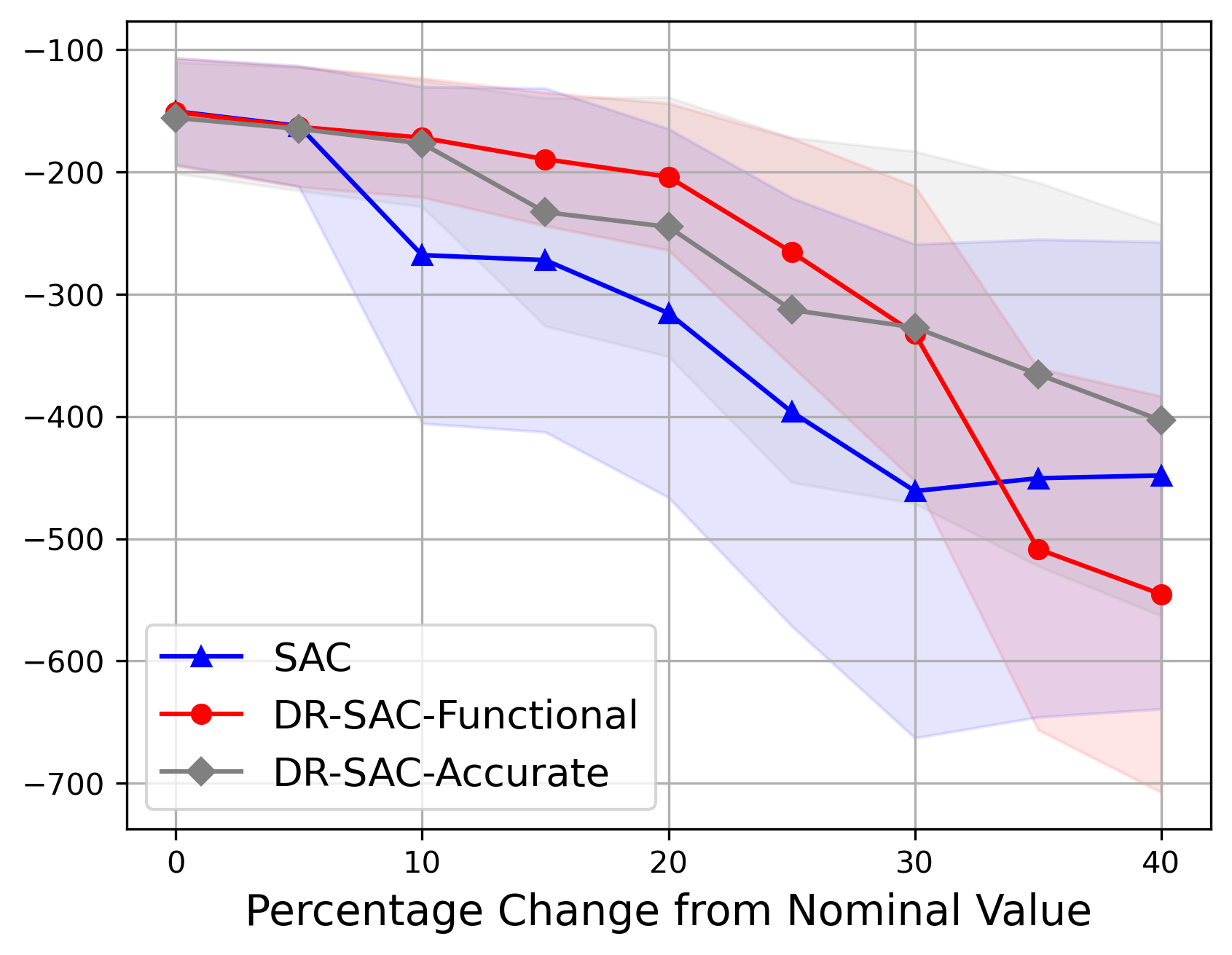}%
  }
\caption{\textbf{Efficiency-Robustness Trade-Off.} These figures show that DR-SAC with functional approximation maintains robustness.}
\label{fig:pv acc}
\end{figure}

\paragraph{Comparison with RFQI}
In Section \ref{sec:experiment performance}, RFQI achieves comparable robustness to DR-SAC in certain environments. However, DR-SAC demonstrates substantial improvement in training efficiency. Table~\ref{table:training time fqi} shows that RFQI requires up to  $23.2\times$ the training time of DR-SAC. Compared with their non-robust counterparts, RFQI requires at least $11.3\times$ the training time of FQI, while DR-SAC training time is at most $2.6\times$ that of SAC.

Further analysis suggests that this efficiency gap primarily stems from optimization complexity. Although RFQI involves a similar functional approximation step as Equation~\eqref{eq: empirical opt g}, it requires 1000 gradient descent (GD) steps in each update to find the optimal function. In contrast, DR-SAC requires only 5 GD steps to achieve comparable performance. Empirically, reducing the number of GD steps in RFQI leads to a severe performance drop, even in unperturbed environments (results see Appendix~\ref{sec:ablation efficiency}). This indicates that the structure of RFQI's loss function inherently results in slower convergence and more optimization steps.

\begin{table}[h]
\centering
\caption{Training time in different environments (minute)}

\label{table:training time fqi}
\begin{tabular}{ c|c|c|c|c } \hline
\texttt{Env} & \texttt{SAC}  & \texttt{DR-SAC} & \texttt{FQI}  & \texttt{RFQI}  \\ \hline
\cp & 2  & 4 & 7  & 93  \\ %\hline
\lld & 16  & 36 & 17  & 238  \\ %\hline
\rv & 13  & 32 & 14  & 159  \\ \hline
\end{tabular}
\end{table}

 \vspace{-0.5em}
\subsubsection{Selection of Generative Model}
Although the VAE models inevitably introduce estimation error when constructing empirical transition measures, we find that DR-SAC is largely insensitive to such modeling choices. As shown in Figure~\ref{fig:gen_type}(a), varying the VAE latent dimension between 5 and 20 in \pvs does not degrade robustness, and DR-SAC consistently outperforms the SAC baseline.

To demonstrate the choice of VAE over other generative models, we implemented diffusion models and normalizing flows as alternative generative models in DR-SAC. Ablation studies in Figure~\ref{fig:gen_type}(b) and (c) show that flow-based models exhibit unstable performance, even in the unperturbed \pvs environment. The diffusion-based model achieves comparable robustness but requires at least $4.5\times$ the training time of the VAE-based model.
We emphasize that the VAE is not necessarily the optimal choice in all settings. Our DR-SAC algorithm can incorporate alternative transition models depending on the task and computational constraints.

\vspace{-1em}
\begin{figure}[h]
  \centering
 \subfloat[\centering Mass Perturbation. \pv]
  {%
    \includegraphics[width=0.329\textwidth,valign=t]{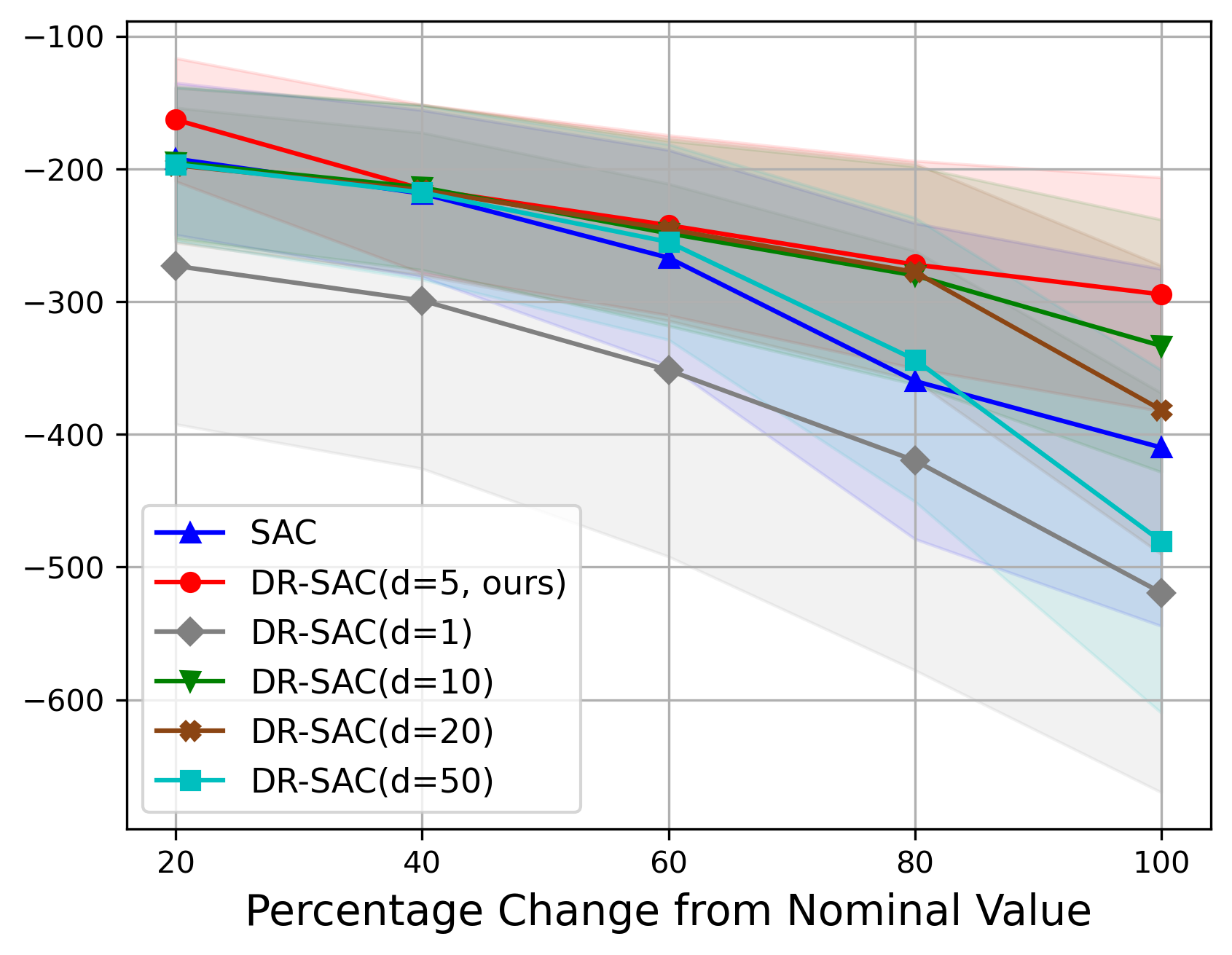}%
  } \hfill
  \subfloat[\centering Length Perturbation. \pv]
  {%
    \includegraphics[width=0.329\textwidth,valign=t]{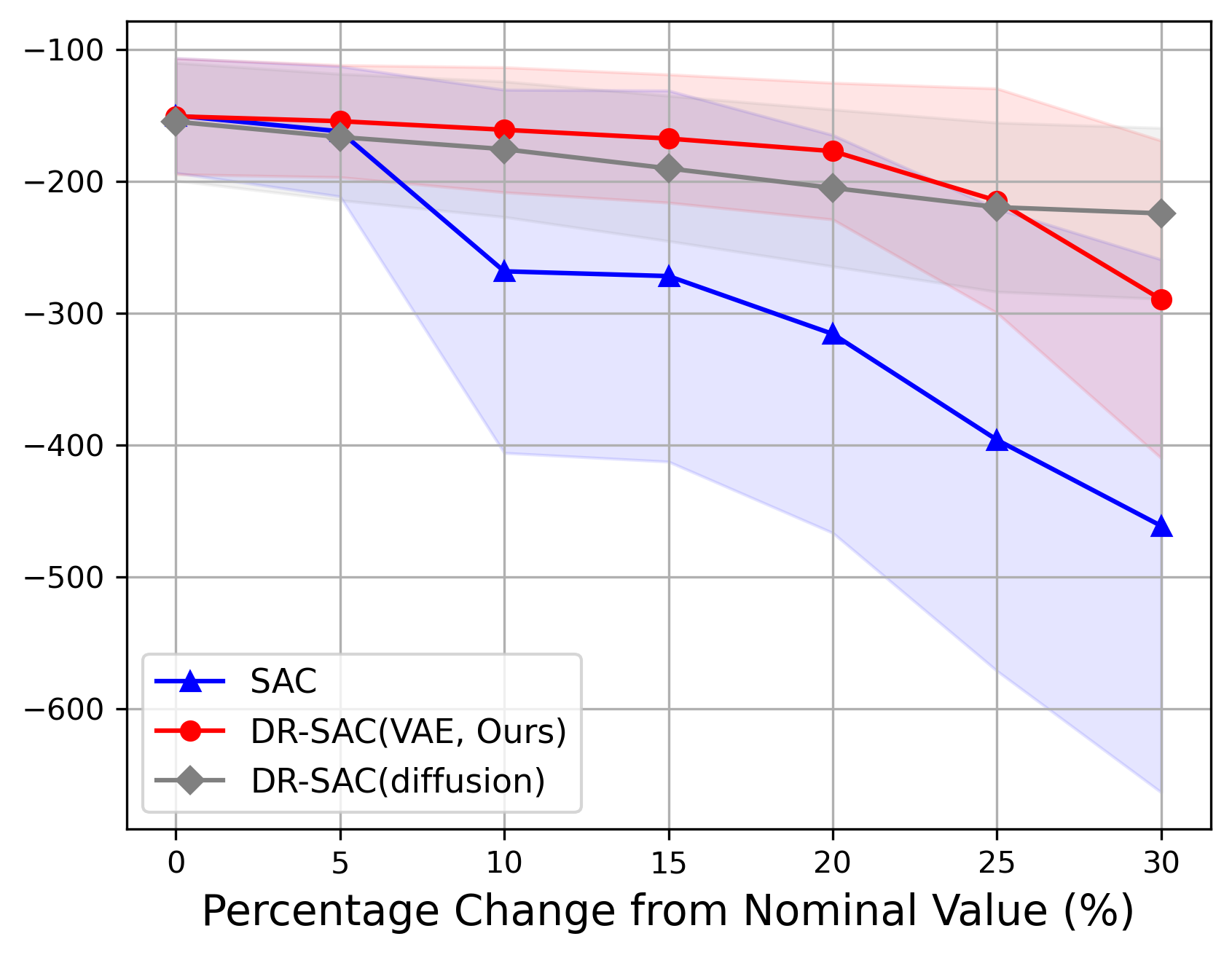}%
  } \hfill
  \subfloat[\centering Observation Noise. \cp]
  {%
    \includegraphics[width=0.329\textwidth,valign=t]{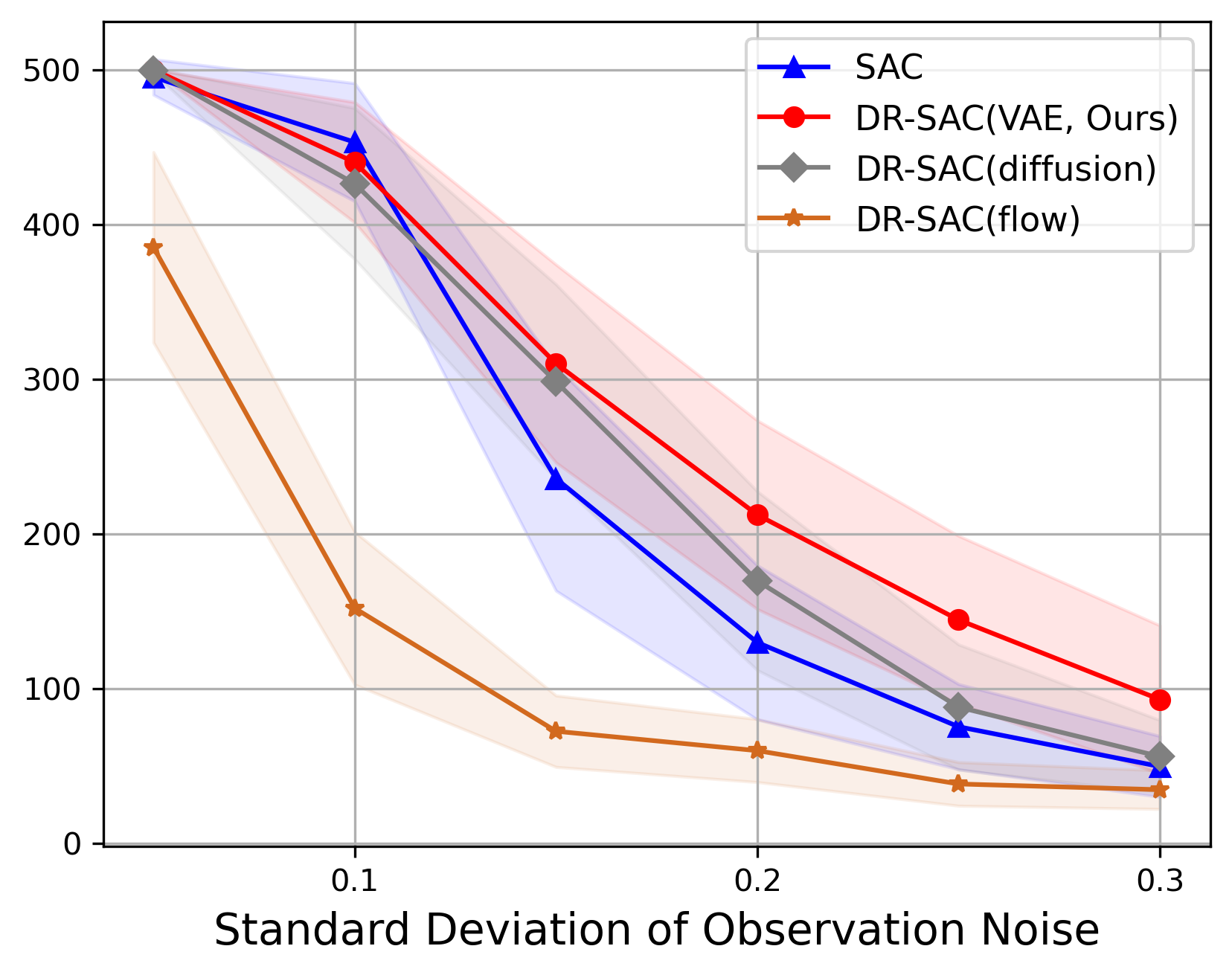}%
  }
\caption{\textbf{Generative Model Selection.} Subfigure (a) shows that VAE-based DR-SAC is insensitive to the latent dimension. Subfigures (b) and (c) compare robustness performance with diffusion and flow-based models.}
\label{fig:gen_type}
\end{figure}

% \vspace{-0.5em}
% \subsubsection{Usage of V-Network.}
% In the DR-SAC algorithm, we include a $V$-network following the SAC-v1 design \citep{haarnoja2018softv1} to improve the applicability across a wider range of offline datasets. Although the $V$-network is removed in SAC-v2 \citep{haarnoja2018softv2}, this version is indeed on-policy, while our setting is off-policy. We observe empirically that SAC with a $V$-network is less sensitive to the behavior policy used in dataset generation. Details are discussed in Appendix~\ref{sec:ablation use v}.
%\section{Related Works}

%\duo{these are future works}

% Future Direction
% sample complexity
% soft bellman iteration in inverse RL
% DR-DSAC 
\section{Conclusions}
\label{sec:conclusion}
\vspace{-0.5em}
We propose DR-SAC, the first actor-critic-based DR-RL algorithm for offline learning in continuous action spaces. Our framework establishes distributionally robust soft policy iteration with convergence guarantees, reduces training time by over 80.0\% compared to RFQI through functional optimization, and resolves the double-sampling issue in estimating nominal distributions via generative modeling. Experimental results show that DR-SAC attains up to $9.8\times$ higher reward than SAC under perturbations, highlighting both robustness and efficiency in practical offline RL tasks.

\newpage

\paragraph{Acknowledgement.} Grani A. Hanasusanto is supported in part by NSF (CCF2343869 and ECCS-2404413). Huan Zhang is supported in part by the AI2050 program at Schmidt Sciences (AI2050 Early Career
Fellowship) and NSF (IIS-2331967). Zhengyuan Zhou is supported in part by NSF (CCF-2312205, ECCS-2419564), ONR-13983263, and the 2027 New York University Center for Global Economy and Business grant. We thank Yunfan Zhang for valuable discussions and the anonymous reviewers for constructive feedback.

\paragraph{Ethics Statement.} All authors of this submission have read and adhered to the ICLR Code of Ethics.

\paragraph{Reproducibility Statement.} We provide our code with detailed comments in the supplementary materials. The detailed experiment settings, dataset processing steps and the devices used in our experiments are provided in Appendix~\ref{sec:exp_details} to ensure reproducibility.

\paragraph{The Use of Large Language Models.} The authors use Large Language Models (LLMs) to assist with grammar checking and language polishing in this submission. LLMs do not play a significant role in research ideation or writing to the extent that they could be regarded as a contributor.

% \paragraph{Limitation}
% The limitation of this work is that we only consider the uncertainty set defined by KL divergence. In future work, we will explore theoretical guarantees and algorithm design when transition functions are constrained by other divergences, e.g., $\chi^2$ divergence, total variation divergence. 

% \paragraph{Broader Impacts}
% Deep RL algorithms may fail due to environmental perturbations. DR-SAC shows more robust performance and are more applicable to real-life applications.

\newpage
\bibliography{reference}
\bibliographystyle{iclr2026_conference}

%%%%%%%%%%%%%%%%%%%%%%%%%%%%%%%%%%%%%%%%%%%%%%%%%%%%%%%%%%%%

\appendix
\appendix
\onecolumn

\textbf{\Large Appendix}

\tableofcontents

\addtocontents{toc}{\protect\setcounter{tocdepth}{2}}
\newpage
\section{Discussion}

\subsection{Necessity of Generative Model}
\label{sec:discuss vae}
In this section, we explain why model-free empirical risk minimization (ERM) is not applicable under a KL-divergence-constrained uncertainty set and why a generative model (VAE) is necessary.

In offline RL, the nominal transition distributions $\cP^0$ are unknown, and no simulator is available to generate additional samples. Under KL-based uncertainty sets, the dual formulation of the DR soft Bellman operator is nonlinear in the transition distribution. As a result, naive empirical estimation from the dataset $\cD$ leads to the well-known \textit{double-sampling issue}~\citep{baird1995residual}, caused by the nested structure of inner and outer expectations. In our algorithm, we want to apply operator $\tpi_{\delta,g^\star}$ where $g^\star = \argsup_{g\in\G}\E_{(s,a)\sim\cD}\Big[f\big((s,a), g(s,a)\big)\Big]$. Define the objective function as 
\begin{equation}\begin{aligned}
J(g) :=& \E_{(s,a)\sim\cD}\Big[f\big((s,a), g(s,a)\big)\Big]\\
=& \E_{(s,a)\sim\cD}\left[-g(s,a)\log\left(\E_{p_{s,a}^0 }\left[\exp\left(\frac{-V(s')}{g(s,a)}\right)\right]\right) - g(s,a)\d\right].
\label{eq:func opt problem population}
\end{aligned}\end{equation}
% We rewrite the offline dataset as $\cD=\{(s_i,\,a_i,\,\{s'_{ij}\}_{j=1}^{m_i})\}_{i=1}^N$, where $s'_{ij}\sim p_{s_i,a_i}^0$ are the potential next states transited from $(s_i,a_i)$. 

To construct a consistent empirical estimator of $J(g)$, we would need independent samples to approximate both the outer expectation over $(s,a)$ and the inner expectation over $s'\sim p_{s,a}^0$. This requires splitting the dataset into disjoint subsets $\mathcal{D}_{\text{outer}}$ and $\mathcal{D}_{\text{inner}}$. For each $(s,a) \in \mathcal{D}_{\text{inner}}$, we aggregate the corresponding samples starting from $(s,a)$ contained in $\mathcal{D}_{\text{outer}}$, denoted by $\mathcal{D}_{(s,a)}$, and the empirical risk of objective $J(g)$ becomes
{\small
\begin{equation}\begin{aligned}
\widehat{J}(g) :=& \frac1{\lvert \mathcal{D}_{\text{out}}\rvert}\sum_{(s,a,s') \in  \mathcal{D}_{\text{out}}}\left[-g(s,a)\log\left(\frac{1}{\lvert \mathcal{D}_{(s,a)}\rvert}\sum_{(\bar{s},\bar{a},\bar{s}') \in \mathcal{D}_{(s,a)}}\exp\left(\frac{-V(\bar{s}')}{g(s,a)}\right)\right) - g(s,a)\d\right] .
\label{eq:func opt problem erm}
\end{aligned}\end{equation}
}

However, in continuous state and action spaces, it is nearly impossible to revisit the exact same state–action pair, leading to empty conditional sample sets $\mathcal{D}_{(s,a)} = \emptyset$ and making ERM infeasible.

% If we assume $m_i=1$ for all $i\in[N]$, equation \eqref{eq:func opt problem erm} simplifies to
% \begin{equation}\begin{aligned}
% \widehat{J}(g) =& \frac1N\sum_{i=1}^N\left[-g(s_i,a_i)\log\left(\exp\left(\frac{-V(s_{i}')}{g(s_i,a_i)}\right)\right) - g(s_i,a_i)\d\right] \\
% =& \frac1N\sum_{i=1}^N\left[V(s_{i}') - g(s_i,a_i)\d\right].
% \end{aligned}\end{equation}

% \yx{
% the limit of~\eqref{eq-ERM} as $N\to +\infty$ is given by\begin{align*}
%     \E_{(s,a) \sim D, s' \sim p^0(\cdot \lvert s,a)} \bigg[-g(s,a) \log\bigg[\exp(\frac{-V(s')}{g(s,a)})\bigg]- g(s,a)\delta \bigg].
% \end{align*}
% A consistent approximation to the~\eqref{eq-population-obj} should be \begin{align*}
% \frac1{N_1}\sum_{i=1}^{N_1}\left[-g(s_i,a_i)\log\left(\exp\left(\frac{-V(s_i')}{g(s_i,a_i)}\right)\right) - g(s_i,a_i)\d\right]
% \end{align*}
% }

% Since all functions are non-negative in the function set $\G$, the optimal dual function $\hat{g}^\star$ under this empirical objective function is the zero function, and operator $\tpi_{\delta,\hat{g}^\star}$ will become non-robust operator $\tpi$.

Importantly, this difficulty arises from the nonlinear structure of the KL-based dual formulation rather than from our functional optimization approach. Even if we abandon the functional approximation and use the exact dual Bellman operator, the same double-sampling issue persists due to the nested expectation:
\begin{equation}\begin{aligned}
\mathcal{L}_Q :=& \E_{(s,a)\in\cD}\left[Q(s,a) - \tpi_\d Q(s,a)\right] \\
=&  \E_{(s,a)\in\cD}\left[Q(s,a) - \E[r] 
        -\gamma\cdot \sup_{\beta\ge0}\left\{-\beta\log\left(\E_{p_{s,a}^0 }\left[\exp\left(\frac{-V(s')}{\beta}\right)\right]\right) - \beta\d\right\}\right].
\label{eq: beta opt problem population}
\end{aligned}\end{equation}
In contrast, under TV-based uncertainty sets, the dual formulation is linear in the transition distribution~\citep{RFQI}, which avoids this nested expectation and therefore does not suffer from the same issue.

% If we keep the assumption that $m_i=1,\,\forall i \in[N]$, the empirical risk approximation of equation \eqref{eq: beta opt problem population} simplifies to the non-robust case again:
% \begin{equation}\begin{aligned}
% \widehat{\mathcal{L}}_Q :=& \frac1N\sum_{i=1}^N\left[Q(s_i,a_i) - r_i 
%         -\gamma\cdot \sup_{\beta>0}\left\{-\beta\log\left(\exp\left(\frac{-V(s_i')}{\beta}\right)\right) - \beta\d\right\}\right]\\
%         =&\frac1N\sum_{i=1}^N\left[Q(s_i,a_i) - r_i 
%         -\gamma\cdot \sup_{\beta>0}\left\{V(s_i') - \beta\d\right\}\right] \\
%         =& \frac1N\sum_{i=1}^N\Big[Q(s_i,a_i) - r_i 
%         -\gamma\cdot V(s_i') \Big] .
% \label{eq: beta opt problem erm}
% \end{aligned}\end{equation}
Existing KL-based distributionally robust RL algorithms overcome this difficulty by using a Monte-Carlo rollouts~\citep{liu2022distributionally, Bound-DR-Qlearning}, estimation from transition frequency~\citep{Bound-DR-VR-Qlearning}, or direct estimation of nominal expectations~\citep{single-DR-Qlearning}. However, these approaches rely on either simulators or repeated visitation of state–action pairs and are therefore not applicable in continuous offline RL settings.

\subsection{Algorithm Details}
\label{sec:algo detail}
In this section, we present a detailed description of the DR-SAC algorithm. In our algorithm, we use neural networks $V_\psi(s)$, $Q_\theta(s,a)$ and $\pi_\phi(a\mid s)$ to approximate the value function, the $Q$-function and the stochastic policy, respectively, with $\psi, \theta, \phi$ as the network parameters. We also utilize target network $V_{\bar{\psi}}(s)$ and $Q_{\bar{\theta}}(s,a)$, where parameters $\bar{\psi}$ and $\bar{\theta}$ are the exponential moving average of respective network weights. Similar to SAC-v1 algorithm \citep{haarnoja2018softv1}, the loss function of $V$-network is
\begin{equation}
    J_V(\psi) = \E_{s\sim\cD} \left[\frac12\left(V_\psi(s) - \E_{a\sim\pi_\phi}\left[Q_{\bar{\theta}}(s,a) - \a\log\pi_\phi(a\mid s)\right]\right)^2\right].
\end{equation}
As introduced in Section~\ref{sec:algorithm}, in our algorithm, we modify the loss function of $Q$-network to 
$$
J^{\text{DR}}_Q(\theta) = \E_{(s,a)\sim\cD}\left[\frac12\left(Q_\theta(s,a) - \tpi_{\d, \widetilde{g}^\star}Q_\theta(s,a)\right)^2\right],
$$
where 
\begin{equation}\begin{aligned}
 {\widetilde{g}}^\star &= \argsup_{g\in\G_\eta} \E_{(s,a)\in\cD}\left[\widetilde{f}((s,a),g(s,a))\right]\\
 &= \argsup_{g\in\G_\eta} \E_{(s,a)\in\cD}\left[-g(s,a)\log\left(\E_{\widetilde{p}_{s,a}^0 }\left[\exp\left(\frac{-V_{\bar{\psi}}(s')}{g(s,a)}\right)\right]\right) - g(s,a)\d\right].
\end{aligned}\end{equation}
Optimal dual function ${\widetilde{g}}^\star$ can be found with backpropagation through $\eta$. We also keep the assumption of policy network in the standard SAC algorithm by reparameterizing the policy using a neural network transformation $a = f_\phi(\epsilon; s)$, where $\epsilon$ is an input noise vector sampled from a spherical Gaussian. The loss of policy is 
\begin{equation}
    J_\pi(\phi) = \E_{s\sim\cD, \epsilon\sim\mathcal{N}} \Big[\alpha\log\pi_\phi(f_\phi(\epsilon;s) \mid s) - Q_{\bar{\theta}}(s, f_\phi(\epsilon;s))\Big].
\end{equation}
In the SAC-v2 algorithm \citep{haarnoja2018softv2}, the authors propose an automated entropy temperature adjustment method by using an approximate solution to a constrained optimization problem. The loss of temperature is 
\begin{equation}
J(\alpha) = \E_{a\sim\pi_\phi}\left[-\alpha\log\pi_\phi(a\mid s) - \alpha\bar{\H}\right],
\end{equation}
where $\bar{\H}$ is the desired minimum expected entropy and is usually implemented as the dimensionality of the action space.

In addition, we incorporate generative model into our algorithm. VAE is one of the most popular methods to learn complex distributions and has shown superior performance in generating different types of data. In the DR-SAC algorithm, we use VAE to learn the transition function $P^0(s' \mid s, a)$ by modeling the conditional distribution of next states. It assumes a standard normal prior over the latent variable, $ p(z) = \mathcal{N}(0, I) $. The encoder maps $(s, a, s')$ to an approximate posterior $ q(z \mid s,a,s') $, and the decoder reconstructs $s'$ from the latent sample $z$ and input $(s,a)$. The training loss is the evidence lower bound (ELBO):
\begin{equation}
J_{\text{VAE}}(\varphi) = \mathbb{E}_{q(z \mid s,a,s')}\left[ \Vert s' - \hat{s}' \Vert^2 \right] + D_{\text{KL}}\left( q(z \mid s,a,s') \,\middle\Vert\, \mathcal{N}(0, I) \right),
\end{equation}
where $ \hat{s}' $ are the reconstructed states from the decoder.

\newpage
\section{Proofs}
\label{sec:proof}
\subsection{Proof of Proposition \ref{prop: q_bellman dual}}
\label{prof: prop q_bellman dual}
We first provide an established result in DRO to compute the worst-case expectation under perturbation in a KL-divergence constrained uncertainty set.

\begin{lemma}[\cite{hu2013KL}, Theorem 1] \label{lemma: KL dual}
    Suppose $G(X)$ has a finite moment generating function in the neighborhood of zero. Then for any $\delta>0$,
    \begin{equation}
        \sup_{P:\DKL(P\|P_0)\leq\delta}\mathbb{E}_P[G(X)]
        =\inf_{\beta\geq0}\left\{\beta\log\left(\mathbb{E}_{P_0}\left[\exp\left(\frac{G(X)}{\beta}\right)\right]\right) + \beta\delta\right\}
    \label{eq: KL dual}
    \end{equation}

\end{lemma}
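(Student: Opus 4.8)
The plan is to establish the identity \eqref{eq: KL dual} by Lagrangian duality applied to the primal maximization over probability measures, solving the inner problem explicitly through exponential tilting, and then invoking strong duality. Throughout I work with the likelihood ratio $L = dP/dP_0 \ge 0$, so that $\E_P[G(X)] = \E_{P_0}[L\,G(X)]$, the KL constraint $\DKL(P\|P_0)\le\delta$ becomes $\E_{P_0}[L\log L]\le\delta$, and the requirement that $P$ be a probability measure becomes $\E_{P_0}[L]=1$. The objective is linear in $L$ and the feasible set is convex, since $L\mapsto\E_{P_0}[L\log L]$ is convex; hence the primal is a convex program.

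Next I would form the Lagrangian, dualizing the KL constraint with a multiplier $\beta\ge0$ and the normalization with a free multiplier $\lambda\in\R$:
\[
\mathcal{L}(L,\beta,\lambda) = \E_{P_0}[L\,G(X)] - \beta\big(\E_{P_0}[L\log L]-\delta\big) - \lambda\big(\E_{P_0}[L]-1\big).
\]
Because the integrand decouples pointwise in the value $L(x)$, the inner maximization $\sup_{L\ge0}\mathcal{L}$ is solved by setting the derivative of $G(x)L-\beta L\log L-\lambda L$ with respect to $L$ to zero, yielding the exponential-tilting solution $L^\star(x)\propto\exp(G(x)/\beta)$. Substituting $L^\star$ back and using $\E_{P_0}[L^\star]=1$ to pin down $\lambda$ causes the terms involving $\E_{P_0}[L^\star G]$ to cancel, collapsing the dual function to $\beta\log\E_{P_0}[\exp(G(X)/\beta)] + \beta\delta$, which is precisely the bracketed expression on the right-hand side of \eqref{eq: KL dual}.

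The remaining step is strong duality, namely that the primal value $\sup_{L}\inf_{\beta,\lambda}\mathcal{L}$ equals the dual value $\inf_{\beta\ge0}\sup_{L\ge0}\mathcal{L}$. Since $\delta>0$, the nominal measure $P_0$ (i.e. $L\equiv1$) is strictly feasible, so Slater's condition holds for the convex primal, giving zero duality gap and attainment of the optimal $\beta$. The finite moment-generating-function hypothesis on $G(X)$ guarantees that $\E_{P_0}[\exp(G(X)/\beta)]<\infty$ for all $\beta$ in the relevant range, so the dual objective is finite and the infimum over $\beta\ge0$ is well posed.

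I expect the main obstacle to be the careful treatment of the boundary case $\beta=0$, where the exponential tilting degenerates and the dual objective must be read as the limit $\beta\log\E_{P_0}[\exp(G(X)/\beta)]\to\esssup G(X)$ as $\beta\downarrow0$; reconciling this limit with the concentration of mass onto the essential supremum of $G$ under the KL constraint, together with rigorously justifying the interchange of supremum and infimum (via Slater-based strong duality or a minimax theorem), is the technically delicate part. The pointwise optimization and the back-substitution that produces the log-MGF form are routine once these convex-analytic foundations are secured.
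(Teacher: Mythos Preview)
The paper does not prove this lemma at all: it is simply quoted as Theorem~1 of \cite{hu2013KL} and then applied as a black box in the proof of Proposition~\ref{prop: q_bellman dual}. So there is no ``paper's own proof'' to compare against here.

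That said, your sketch is the standard derivation and is essentially the argument in the cited reference: dualize the KL constraint and the normalization, solve the inner problem pointwise to obtain the exponential tilt $L^\star\propto\exp(G/\beta)$, substitute back to get the log-MGF dual, and close the duality gap via Slater (strict feasibility at $L\equiv1$ since $\delta>0$). Your identification of the delicate points---the $\beta\downarrow0$ limit yielding $\esssup G$, and the role of the finite-MGF hypothesis in keeping the dual objective finite---is accurate. Nothing is missing from the outline; if anything, for the purposes of this paper you could simply cite the result as the authors do.
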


\begin{proof}[Proof of Proposition \ref{prop: q_bellman dual}]
\begin{align*}
\mathcal{T}_\pi^\delta Q(s,a) &= \mathbb{E}[r] + \gamma \cdot \inf_{p \in \mathcal{P}_{s,a}(\delta)} \left\{ \mathbb{E}_{s' \sim p(\cdot|s,a)} \left[ \mathbb{E}_{a' \sim \pi(\cdot|s')} [ Q(s',a') - \alpha \log \pi(a'|s') ] \right] \right\}\\
    %&= \mathbb{E}[r(s,a)] + \gamma \cdot \inf_{p \in \mathcal{P}_{s,a}(\delta)} \left\{ \mathbb{E}_{s' \sim p(\cdot|s,a)} [V^\pi(s')] \right\} \\
    &= \mathbb{E}[r] - \gamma \cdot \sup_{p \in \mathcal{P}_{s,a}(\delta)} \left\{ \mathbb{E}_{s' \sim p(\cdot|s,a)} [-V(s')] \right\} \\
    &= \mathbb{E}[r] - \gamma \cdot \inf_{\beta\geq0}\left\{\beta\log\left(\mathbb{E}_{s' \sim p^0(\cdot|s,a)}\left[\exp\left(\frac{-V(s')}{\beta}\right)\right]\right) + \beta\delta \right\} \tag{Lemma~\ref{lemma: KL dual}}\\
    &= \mathbb{E}[r] + \gamma \cdot \sup_{\beta\geq0}\left\{-\beta\log\left(\mathbb{E}_{s' \sim p^0(\cdot|s,a)}\left[\exp\left(\frac{-V(s')}{\beta}\right)\right]\right) - \beta\delta\right\}
\end{align*}
To apply Lemma~\ref{lemma: KL dual}, let $P = p(\cdot|s,a)$, $P_0 = p^0(\cdot|s,a)$, and $G(X) = G(s') = -V(s')$. We assume rewards $r = R(s,a)$ are bounded, and the discount factor $\gamma \in [0,1)$. 
% As a consequence of Assumption~\ref{assump: finite action space}, $Q(s,a)$ and thus $V(s')$ are bounded. \yf{Specifically, they are bounded by ...} 
Since $Q(s,a)$ is bounded, we know $V(s')$ is bounded as well. This implies that $G(s') = -V(s')$ has a finite moment generating function (MGF) under the nominal distribution $p^0(\cdot|s,a)$, i.e., $\mathbb{E}_{s' \sim p^0(\cdot|s,a)}[e^{\lambda G(s')}] < \infty$, for $\lambda$ in a neighborhood of zero. This ensures that $G(s')$ has a finite MGF under $P_0$ as required by Lemma~\ref {lemma: KL dual}.
\end{proof}

\subsection{Proof of Proposition \ref{prop: soft policy evaluation}}
\label{prof: prop policy evaluation}

Before providing the proof of Proposition \ref{prop: soft policy evaluation}, we present the optimality conditions of Lemma \ref{lemma: KL dual}.
\begin{lemma}[\cite{hu2013KL}, Proposition 2]
Let $\beta^\star$ be an optimal solution of the optimization problem in Equation~\eqref{eq: KL dual}. Let $H = \esssup_{X\sim P_0}G(X)$ and $\kappa = \P_{X\sim P_0}(G(X) = H)$. Suppose the assumption in Lemma \ref{lemma: KL dual} still holds, then $\beta^\star=0$ or $G(X)$ has a finite moment generating function at $1/\beta^\star$. Moreover, $\beta^\star=0$ if and only if $H<\infty$, $\kappa>0$ and $\log\kappa+\delta\ge0$.
\label{lemma: KL opt condition}
\end{lemma}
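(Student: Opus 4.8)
The plan is to study the dual objective
\[
\phi(\beta) := \beta\log\left(\E_{P_0}\left[\exp\left(\frac{G(X)}{\beta}\right)\right]\right) + \beta\delta, \qquad \beta>0,
\]
extended continuously to $\beta=0$ by $\phi(0):=H=\esssup_{X\sim P_0}G(X)$, as a convex function of $\beta$, and to characterize its minimizers by a first-order analysis on $[0,\infty)$. Convexity is the entry point: the cumulant generating function $t\mapsto\log\E_{P_0}[\exp(tG)]$ is convex, and $\beta\log\E_{P_0}[\exp(G/\beta)]$ is its perspective evaluated at argument $1$, hence convex in $\beta$; adding the linear term $\beta\delta$ preserves convexity. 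Thus $\phi$ is convex on $(0,\infty)$, and optimality at the left endpoint is governed by the sign of the right-derivative.

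For the first claim, I would first note that the infimum in \eqref{eq: KL dual} is finite: the hypothesis that $G$ has a finite moment generating function near $0$ means $\E_{P_0}[\exp(tG)]<\infty$ for small $t$, i.e. for large $\beta$, so $\phi(\beta)<\infty$ for all sufficiently large $\beta$. Consequently, if an optimal $\beta^\star>0$ exists it attains this finite infimum, which forces $\E_{P_0}[\exp(G/\beta^\star)]<\infty$, i.e. $G$ has a finite moment generating function at $1/\beta^\star$. This is exactly the dichotomy ``$\beta^\star=0$ or finite MGF at $1/\beta^\star$.''

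The core of the second claim is the identity
\[
\phi'(\beta) = \delta - \DKL(Q_\beta \,\|\, P_0),
\]
where $Q_\beta$ is the exponentially tilted measure with $\tfrac{dQ_\beta}{dP_0}\propto\exp(G/\beta)$. I would obtain it by differentiating under the integral sign on the interior of the finiteness domain: the derivative of $\beta\log\E_{P_0}[\exp(G/\beta)]$ equals $\log\E_{P_0}[\exp(G/\beta)] - \tfrac{1}{\beta}\E_{Q_\beta}[G]$, while substituting the tilted density into the definition of KL divergence gives $\DKL(Q_\beta\|P_0)=\tfrac{1}{\beta}\E_{Q_\beta}[G]-\log\E_{P_0}[\exp(G/\beta)]$, and the two combine. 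By convexity, $\beta^\star=0$ is optimal if and only if $\phi'(0^+)\ge0$, i.e. $\lim_{\beta\to0^+}\DKL(Q_\beta\|P_0)\le\delta$.

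It then remains to compute $\lim_{\beta\to0^+}\DKL(Q_\beta\|P_0)$, and this concentration analysis is where the main work lies. As $\beta\to0^+$ the tilted law $Q_\beta$ concentrates on the essential-supremum set $A=\{G=H\}$. When $H<\infty$ and $\kappa=P_0(A)>0$, a Laplace-type dominated-convergence argument (factoring $\exp(H/\beta)$ out of both $\E_{Q_\beta}[G]/\beta$ and $\log\E_{P_0}[\exp(G/\beta)]$) shows $Q_\beta\to P_0(\cdot\mid A)$ and $\DKL(Q_\beta\|P_0)\to-\log\kappa$, so $\phi'(0^+)=\delta+\log\kappa$ and $\beta^\star=0$ holds iff $\log\kappa+\delta\ge0$. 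When $H<\infty$ but $\kappa=0$, the limiting mass lies on a $P_0$-null set and $\DKL(Q_\beta\|P_0)\to+\infty$, forcing $\phi'(0^+)=-\infty$ so that $\beta=0$ is never optimal; and when $H=\infty$ we have $\phi(0)=+\infty$, ruling out $\beta^\star=0$ directly. Collecting the three cases yields the stated equivalence, with the delicate step being the precise evaluation of the tilted KL limit to $-\log\kappa$ and its divergence when the supremum is attained only on a null set.
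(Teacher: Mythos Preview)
The paper does not supply a proof of this lemma: it is quoted verbatim as Proposition~2 of \cite{hu2013KL} and used as a black box in the proof of Proposition~\ref{prop: soft policy evaluation}. There is therefore no ``paper's own proof'' to compare against.

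That said, your proposed argument is a sound and standard route to the result. The convexity of $\phi$ via the perspective of the cumulant generating function is correct, the derivative identity $\phi'(\beta)=\delta-\DKL(Q_\beta\|P_0)$ is exactly the computation one needs, and the Laplace-type evaluation of $\lim_{\beta\to0^+}\DKL(Q_\beta\|P_0)=-\log\kappa$ when $H<\infty,\ \kappa>0$ goes through by writing $G=H-(H-G)$ and applying dominated convergence to $(Y/\beta)e^{-Y/\beta}\le e^{-1}$ with $Y=H-G\ge0$. The only point I would tighten is the case $H<\infty,\ \kappa=0$: there you assert $\DKL(Q_\beta\|P_0)\to+\infty$, which is true, but the clean way to see it is that $-\log\E_{P_0}[e^{-(H-G)/\beta}]\to+\infty$ while the remaining term $-\tfrac{1}{\beta}\E_{Q_\beta}[H-G]$ is nonpositive but does not diverge fast enough to cancel (indeed the whole expression equals $\DKL\ge0$, and one can bound it below by $-\log\E_{P_0}[e^{-(H-G)/\beta}]-e^{-1}/\E_{P_0}[e^{-(H-G)/\beta}]$, which still tends to $+\infty$). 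With that small refinement the argument is complete.
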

This lemma tells us the optimal solution is unique when $\beta^\star=0$. This happens if and only if there is a large enough probability mass on the finite essential supremum of $X$, under the distribution center $P_0$. We use this lemma to discuss either $\beta^\star=0$ or $\beta^\star>0$ in the following proof.

\begin{proof}[Proof of Proposition \ref{prop: soft policy evaluation}]
Similar to the standard convergence proof of policy evaluation, we want to prove that the operator $\tpi_\delta$ is a $\gamma$-contraction mapping. Suppose there are two mappings $Q_{1,2}:\S\times\A\to\R$ and define $V_i(s) = \E_{a\sim\pi}[Q_i(s,a)] - \alpha \H(\pi(s)),\,i=1,2$. For any state $s\in\S$, we have 
$$
\lvert V_1(s) - V_2(s)\rvert
= \lvert\E_{a\sim\pi}[Q_1(s,a) - Q_2(s,a)]\rvert
\le \lVert Q_1  - Q_2\rVert_\infty.
$$
Thus, $\lVert V_1  - V_2\rVert_\infty\le\lVert Q_1  - Q_2\rVert_\infty$. 

Next, for any $\beta>0$ and $(s,a)$ fixed, define function 
\begin{equation}
F_\beta(V) := -\beta\log\E_{p_{s,a}^0}\left[\exp\left(-\frac{V(s')}{\beta}\right)\right] - \beta\delta.
\end{equation}
Let $\lVert V_1  - V_2\rVert_\infty = d$. Then for any $s'\in\S$, $V_2(s') - d \le V_1(s') \le V_2(s') + d$. After exponential, expectation, and logarithm operations, monotonicity is preserved. We have
$$\begin{aligned}
-\beta\log\E_{p_{s,a}^0}\left[\exp\left(-\frac{V_2(s')}{\beta}\right)\right] - d
\le&
-\beta\log\E_{p_{s,a}^0}\left[\exp\left(-\frac{V_1(s')}{\beta}\right)\right]\\
\le& 
-\beta\log\E_{p_{s,a}^0}\left[\exp\left(-\frac{V_2(s')}{\beta}\right)\right] + d.
\end{aligned}$$
This gives us $\lvert F_\beta(V_1)  - F_\beta(V_2)\rvert\le\lVert V_1  - V_2\rVert_\infty$.

Lastly, we reformulate DR soft Bellman operator as 
\begin{equation}
\tpi_\delta Q(s,a) = \E[r] + \gamma\cdot\sup_{\beta\ge0}F_\beta(V).
\end{equation}
Let $\beta_i^\star$ be an optimal solution of $\sup_{\beta\ge0}F_\beta(V_i),\,i=1,2$. From Lemma \ref{lemma: KL opt condition}, we know $\beta_i^\star$ is unique when $\beta_i^\star=0$ is optimal. And the optimal value is the essential infimum $H_i$ when $\beta_i^\star=0$. We want to show $\lvert F_{\beta_1^\star}(V_1) - F_{\beta_2^\star}(V_2)\rvert$ is bounded in all cases of $\beta_i^\star$.

\begin{itemize}[leftmargin=0pt,labelsep=0.5em]
\item Case 1: $\beta_1^\star = \beta_2^\star = 0$.

In this case, the optimal value is the essential infimum value for both $V_i$. We have
$$
\lvert F_{\beta_1^\star}(V_1) - F_{\beta_2^\star}(V_2)\rvert
= \left\lvert \essinf_{s'\sim P_{s,a}^0}V_1(s') - \essinf_{s'\sim P_{s,a}^0}V_2(s')\right\rvert
\le\lVert V_1  - V_2\rVert_\infty. \\
$$
The last inequality holds because monotonicity is preserved after taking the essential infimum.

\item Case 2: $\beta_1^\star = 0,\,\beta_2^\star > 0$, WLOG.

In this case, we know from optimality that 
$$
H_1 = \essinf_{s'\sim P_{s,a}^0}V_1(s') \ge F_{\beta_2^\star}(V_1),\,
H_2 = \essinf_{s'\sim P_{s,a}^0}V_2(s') \le F_{\beta_2^\star}(V_2).
$$
Then we have 
$$\begin{aligned}
&H_1 - F_{\beta_2^\star}(V_2) \le H_1 - H_2 \le \lVert V_1  - V_2\rVert_\infty, \\
&F_{\beta_2^\star}(V_2) - H_1 \le F_{\beta_2^\star}(V_2) -F_{\beta_2^\star}(V_1) \le \lVert V_1  - V_2\rVert_\infty.
\end{aligned}$$
Thus, $\lvert F_{\beta_1^\star}(V_1) - F_{\beta_2^\star}(V_2)\rvert
= \lvert H_1 - F_{\beta_2^\star}(V_2)\rvert
\le\lVert V_1  - V_2\rVert_\infty$.

\item Case 3: $\beta_1^\star >0,\,\beta_2^\star > 0$.

Suppose $F_{\beta_1^\star}(V_1) \le F_{\beta_2^\star}(V_2)$, WLOG. Then 
$$
\lvert F_{\beta_1^\star}(V_1) - F_{\beta_2^\star}(V_2)\rvert
= F_{\beta_2^\star}(V_2) - F_{\beta_1^\star}(V_1)
\le F_{\beta_2^\star}(V_2) - F_{\beta_2^\star}(V_1)
\le\lVert V_1  - V_2\rVert_\infty,
$$
where the first inequality comes from the optimality of $\beta_1^\star$. 
\end{itemize}
Thus for any $(s,a)$ pair, we have
we
$$\begin{aligned}
\left\lvert\tpi_\delta Q_1 (s,a)- \tpi_\delta Q_2(s,a) \right\rvert 
=& \gamma\cdot\left\lvert\sup_{\beta_1\ge0}F_{\beta_1}(V_1) - \sup_{\beta_2\ge0}F_{\beta_2}(V_2)\right\rvert\\
\le& \gamma\cdot\lVert V_1  - V_2\rVert_\infty\\
\le& \gamma\cdot\lVert Q_1  - Q_2\rVert_\infty.
\end{aligned}$$
Since $\tpi_\delta$ is a $\gamma$-contraction mapping, the Banach Fixed-Point Theorem implies that the sequence $\{Q^k\}$ convergences to the unique fixed-point of $\tpi_\delta$. From~\citet{iyengar2005robustdp, xu2010robustmdp},this fixed point corresponds to the distributionally robust soft $Q$-value.
% \yf{Can add a lemma to show this. We can add entropy into the reward and use the existing results.}
\end{proof}

\subsection{Proof of Proposition \ref{prop: soft policy improvement}}
\label{prof: prop policy improvement}
%The proof of Proposition \ref{prop: soft policy improvement} is similar to the proof of Lemma 2 in \cite{haarnoja2018softv1}, except that it replaces the soft Bellman equation with the DR version. We present it here for completeness.

\begin{proof}
Given $\pi_k\in\Pi$, let $Q^{\pi_k}_{\Mdelta}$ and $V^{\pi_k}_{\Mdelta}$ be the corresponding DR soft $Q$-function and value function. Denote the function for determining the new policy as
\begin{equation}
J_\pi(\pi'(\cdot\mid s)) := D_{\text{KL}} \left(\pi'(\cdot\mid s)) 
\,\middle\Vert\,
\exp\left(\frac{1}{\alpha}Q^{\pi}_{\Mdelta}(s,\cdot) - \log Z^{\pi_k}(s)\right)\right).
\end{equation}
According to Equation \eqref{eq: dr policy improvement}, $\pi_{k+1} = \argmin_{\pi'\in\Pi}J_{\pi_k}(\pi')$ and $J_{\pi_k}(\pi_{k+1})\le J_{\pi_k}(\pi_k)$. Hence 
$$\begin{aligned}
&\E_{a\sim\pi_{k+1}}[\alpha\log\pi_{k+1}(a\mid s) - Q^{\pi_k}_{\Mdelta}(s,\cdot) + \alpha\log Z^{\pi_k}(s)] \\
\le&
\E_{a\sim\pi_k}[\alpha\log\pi_k+(a\mid s) - Q^{\pi_k}_{\Mdelta}(s,\cdot) + \alpha\log Z^{\pi_k}(s)],
\end{aligned}$$
and after deleting $Z^{\pi_k}(s)$ on both sides, the inequality is reformulated to 
$$
\E_{a\sim\pi_{k+1}}\left[Q^{\pi_k}_{\Mdelta}(s,\cdot) - \alpha\log\pi_{k+1}(a\mid s) \right]
\ge V^{\pi_k}_{\Mdelta}(s).
$$
Next, consider the DR soft Bellman equation:
\begin{equation}\begin{aligned}
Q^{\pi_k}_{\Mdelta}(s,a) 
=& \E[r] + \gamma \cdot\inf_{p_{s,a}\in\cP_{s,a}(\delta)}\left\{\E_{s'\sim p_{s,a}}\left[V^{\pi_k}_{\Mdelta}(s')\right]\right\} \\
\le& \E[r] + \gamma \cdot\inf_{p_{s,a}\in\cP_{s,a}(\delta)}\left\{\E_{s'\sim p_{s,a}}\left[\E_{a'\sim\pi_{k+1}}\left[Q^{\pi_k}_{\Mdelta}(s',a') - \alpha\log\pi_{k+1}(a'\mid s')\right]\right]\right\} \\
=& \mathcal{T}^{\pi_{k+1}}_\delta\left(Q^{\pi_k}_{\Mdelta}\right)(s,a)\\
\vdots \\
\le& Q^{\pi_{k+1}}_{\Mdelta}(s,a),\,\forall(s,a)\in\S\times\A
\end{aligned}\end{equation}
where operator $\mathcal{T}^{\pi_{k+1}}_\delta$ is repeatedly applied to $Q^{\pi_k}_{\Mdelta}$ and its convergence is guaranteed by Proposition~\ref{prop: soft policy evaluation}. 
\end{proof}

\subsection{Proof of Theorem \ref{thm: soft policy iteration}}
\label{prof: thm policy iteration}

\begin{proof}
By Proposition~\ref{prop: soft policy improvement}, $Q^{\pi_k}_{\Mdelta}$ is non-decreasing with $k$. Since function $Q^{\pi_k}_{\Mdelta}$ is bounded by $(R_{\text{max}}+\alpha\log\lvert\A\rvert)/(1-\gamma)$, sequence $\{Q^{\pi_k}_{\Mdelta}\}$ converges. Thus policy sequence $\{\pi_k\}$ convergences to some $\pi^\star$. It remains to show that $\pi^\star$ is indeed optimal. According to Equation \eqref{eq: dr policy improvement}, $J_{\pi^\star}(\pi^\star)\le J_{\pi^\star}(\pi),\,\forall \pi\in\Pi$. Using the same argument in proof of Proposition~\ref{prop: soft policy improvement}, we can show that $Q^\pi_{\Mdelta}(s,a)\le Q^{\pi^\star}_{\Mdelta}(s,a)$ for any $\pi\in\Pi$ and $(s,a)\in\S\times\A$. Hence $\pi^\star$ is an optimal policy.
\end{proof}

\subsection{Proof of Proposition \ref{prop: functional}}
\label{prof: prop functional}
Before providing the proof, we first introduce two technical lemmas. Specifically, Lemma~\ref{lemma: interchange} establishes the \textit{interchange of minimization and integration} property in decomposable spaces. This property has wide applications in replacing point-wise optimality conditions by optimization in a functional space \citep{shapiro2017interchange_app, RFQI}.

\begin{lemma}[\cite{rockafellar2009variational}, Exercise 14.29]
Function $f : \Omega \times \R^n \mapsto \R$ (finite-valued) is a normal integrand if  $f(\omega, x)$ is measurable in $\omega$ for each $x$ and continuous in $x$ for each $\omega$.
\label{lemma: normal integrand}
\end{lemma}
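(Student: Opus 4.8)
The plan is to prove the classical fact that a Carath\'eodory integrand --- one that is measurable in $\omega$ and continuous in $x$ --- is a normal integrand, by verifying the defining epigraphical condition directly. Recall (Rockafellar--Wets, Definition 14.27) that $f$ is a normal integrand exactly when the epigraphical multifunction
\[
S_f(\omega) := \operatorname{epi} f(\omega,\cdot) = \{(x,\alpha)\in\R^n\times\R : f(\omega,x)\le\alpha\}
\]
is closed-valued and measurable as a set-valued map $\Omega\rightrightarrows\R^{n+1}$. Closed-valuedness is the easy half: for each fixed $\omega$ the map $x\mapsto f(\omega,x)$ is continuous by hypothesis, so its epigraph $S_f(\omega)$ is a closed subset of $\R^{n+1}$, and hence $S_f$ is closed-valued. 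The substance of the argument is therefore the measurability of $S_f$.

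For measurability I would use the standard open-set criterion: $S_f$ is measurable iff $S_f^{-1}(G):=\{\omega : S_f(\omega)\cap G\ne\emptyset\}$ is measurable for every open $G\subseteq\R^{n+1}$. Fix a countable dense set $D=\{x_j\}\subseteq\R^n$ and let $\mathbb{Q}$ denote the rationals. The key claim is the reduction identity
\[
S_f^{-1}(G) = \bigcup_{\substack{x_j\in D,\ q\in\mathbb{Q}\\ (x_j,q)\in G}} \{\omega : f(\omega,x_j)\le q\}.
\]
The inclusion $\supseteq$ is immediate, since any witness $(x_j,q)\in G$ with $f(\omega,x_j)\le q$ lies in $S_f(\omega)\cap G$. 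For $\subseteq$, I would start from an arbitrary $(x,\alpha)\in S_f(\omega)\cap G$, fix a ball $B((x,\alpha),r)\subseteq G$, and use continuity of $f(\omega,\cdot)$ at $x$ together with the density of $D$ to produce a nearby pair $(x_j,q)\in(D\times\mathbb{Q})\cap G$ still satisfying the epigraph inequality: choose $x_j$ close enough to $x$ that $f(\omega,x_j)$ exceeds $f(\omega,x)\le\alpha$ by only a small margin, then pick a rational $q$ squeezed just above that value while remaining within $r/2$ of $\alpha$, so that $(x_j,q)$ stays inside the ball and hence in $G$.

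Given the identity, the conclusion is routine: $D\times\mathbb{Q}$ is countable, and each slice $\{\omega : f(\omega,x_j)\le q\}$ is measurable by the hypothesized measurability of $\omega\mapsto f(\omega,x_j)$, so the right-hand side is a countable union of measurable sets and thus measurable. Since $G$ was arbitrary, $S_f$ is a measurable closed-valued multifunction, and $f$ is a normal integrand. I expect the delicate step to be the perturbation argument in the $\subseteq$ inclusion: one must coordinate the density of $D$ against the continuity modulus of $f(\omega,\cdot)$ and the choice of the rational $q$ so that the approximating pair $(x_j,q)$ simultaneously lands inside the open set $G$ and continues to satisfy $f(\omega,x_j)\le q$. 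Everything else --- closedness of the epigraph and countability of the union --- is standard.
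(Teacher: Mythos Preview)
The paper does not supply its own proof of this lemma; it simply quotes the result as Exercise~14.29 of Rockafellar--Wets and invokes it as a black box in the proof of Proposition~\ref{prop: functional}. Your proposal is therefore not competing with any argument in the paper --- it is filling in a textbook exercise the authors chose to cite rather than prove. The route you take (verify that the epigraphical multifunction $S_f$ is closed-valued from continuity in $x$, then establish measurability of $S_f$ by reducing $S_f^{-1}(G)$ to a countable union over a dense set $D\times\mathbb{Q}$) is exactly the standard Carath\'eodory-integrand argument, and your outline is correct. The perturbation step you flag as delicate does go through: once $x_j$ is chosen with $|x_j-x|<\min(\delta,r/2)$ so that $f(\omega,x_j)<\alpha+r/4$, the interval $\bigl(\max\{f(\omega,x_j),\,\alpha-r/2\},\,\alpha+r/2\bigr)$ is nonempty and contains a rational $q$ with both $(x_j,q)\in B((x,\alpha),r)\subseteq G$ and $f(\omega,x_j)\le q$. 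Nothing is missing.
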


\begin{lemma}[\cite{rockafellar2009variational}, Theorem 14.60, Exercise 14.61]
Let $f : \Omega\times\R\mapsto\R$ (finite-valued) be a normal integrand. Let $\M(\Omega,\A;\R)$ be the space of all measurable functions $x:\Omega\to\R$, $\M_f$ be the collection of all $x \in \M(\Omega,\A; \R)$ with $\int_{\omega\in\Omega}f(\omega,x(\omega))\mu(d\omega)<\infty$. Then, for any space with $\M_f\subset \X \subset \M(\Omega,\A;\R)$, we have   
$$
\inf_{x\in\X}\int_{\omega\in\Omega}f(\omega, x(\omega)) \mu(d\omega)
= \int_{\omega\in\Omega}\left(\inf_{x\in\R}f(\omega,x)\right) \mu(d\omega).
$$
\label{lemma: interchange}
\end{lemma}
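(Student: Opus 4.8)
The plan is to prove the equality by establishing the two inequalities separately, using the pointwise infimum
\[
p(\omega) := \inf_{x\in\R} f(\omega, x)
\]
as the bridge between the functional and pointwise problems. The first thing I would verify is that the right-hand integrand $p$ is measurable, so that $\int_\Omega p(\omega)\,\mu(d\omega)$ is well defined. Because $f$ is a normal integrand of the Carathéodory type described in Lemma~\ref{lemma: normal integrand} — measurable in $\omega$ for each $x$ and continuous in $x$ for each $\omega$ — the infimum over all of $\R$ reduces to an infimum over a countable dense set: $p(\omega) = \inf_{q\in\mathbb{Q}} f(\omega, q)$. Since each map $\omega\mapsto f(\omega,q)$ is measurable, $p$ is measurable as a countable pointwise infimum of measurable functions. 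This reduction is the key device: it lets me sidestep the general measurable-selection machinery that the abstract form of the theorem would otherwise require.

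Next I would dispatch the easy inequality $\inf_{x\in\X}\int_\Omega f(\omega,x(\omega))\,\mu(d\omega) \ge \int_\Omega p(\omega)\,\mu(d\omega)$. For any admissible $x\in\X\subset\M(\Omega,\A;\R)$, the function $\omega\mapsto f(\omega,x(\omega))$ is measurable and satisfies the pointwise bound $f(\omega, x(\omega)) \ge p(\omega)$ by the very definition of $p$. Integrating preserves the inequality, and taking the infimum over $x\in\X$ on the left yields the claim. Here the inclusion $\X\subset\M(\Omega,\A;\R)$ is what guarantees each competitor is measurable so that the comparison with $p$ is legitimate.

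The crux is the reverse inequality, which I would prove by explicitly constructing a measurable approximate minimizer. Fix $\e>0$, enumerate $\mathbb{Q}=\{q_1,q_2,\dots\}$, and set
\[
A_n := \Big\{\omega : f(\omega, q_n) \le p(\omega) + \e\Big\}\setminus\bigcup_{k<n} A_k,
\]
which are measurable (since $f(\cdot,q_n)$ and $p$ are), pairwise disjoint, and exhaust $\Omega$ because for every $\omega$ some $q_n$ satisfies $f(\omega,q_n)\le p(\omega)+\e$ by the definition of $p$ as an infimum over $\mathbb{Q}$. The piecewise-constant selection $x_\e(\omega) := \sum_{n} q_n\,\mathbf{1}_{A_n}(\omega)$ is then measurable and obeys $f(\omega, x_\e(\omega)) \le p(\omega) + \e$ everywhere. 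The hypothesis $\M_f\subset\X$ is precisely what certifies admissibility: since $f(\cdot,x_\e(\cdot))$ is dominated by $p+\e$, its integral is finite (using that $\int_\Omega p(\omega)\,\mu(d\omega)<\infty$ and $\mu$ is finite in our application, where $f$ is bounded), so $x_\e\in\M_f\subset\X$. Consequently
\[
\inf_{x\in\X}\int_\Omega f(\omega,x(\omega))\,\mu(d\omega) \le \int_\Omega f(\omega,x_\e(\omega))\,\mu(d\omega) \le \int_\Omega p(\omega)\,\mu(d\omega) + \e\,\mu(\Omega),
\]
and letting $\e\downarrow 0$ closes the reverse inequality, hence the equality.

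I expect the main obstacle to be exactly the measurable selection in the third step: in the fully general normal-integrand setting one must extract a measurable $\e$-minimizer from the epigraphical multifunction via a selection theorem. Exploiting continuity of $f$ in its second argument collapses this to the countable construction above, so the genuine work is verifying that $\{A_n\}$ exhausts $\Omega$ and that $x_\e\in\X$; in the latter, the assumption $\M_f\subset\X$ plays the structural role that decomposability plays in the classical Theorem~14.60. The only remaining care is the degenerate case $\int_\Omega p(\omega)\,\mu(d\omega) = -\infty$, in which both sides are $-\infty$ and must be argued separately; since $f$ is bounded in our application, this case does not arise.
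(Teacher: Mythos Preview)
The paper does not prove this lemma at all: it is stated as a citation to \cite{rockafellar2009variational}, Theorem~14.60 and Exercise~14.61, and then used as a black box in the proof of Proposition~\ref{prop: functional}. So there is no ``paper's own proof'' to compare against.

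That said, your argument is a correct self-contained proof for the Carath\'eodory case that the paper actually needs (measurable in $\omega$, continuous in $x$), and it is more elementary than the general measurable-selection machinery underlying the Rockafellar--Wets result. Your reduction of the pointwise infimum to a countable one via density of $\mathbb{Q}$, and the explicit piecewise-constant $\e$-selector built on the disjoint sets $A_n$, are exactly the standard simplifications available under continuity; this bypasses the epigraphical multifunction and selection theorems that the cited theorem invokes in full generality. You are also careful to flag the two places where the general statement would need more work --- admissibility of $x_\e$ in $\M_f$ when $\mu(\Omega)=\infty$, and the degenerate case $\int p\,d\mu=-\infty$ --- and correctly observe that neither arises in the paper's application, where $f$ is bounded and $\mu$ is a probability measure. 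The only minor quibble is that your admissibility check for $x_\e\in\M_f$ relies on $\mu(\Omega)<\infty$ rather than holding in the full generality of the lemma as stated; but since the paper only uses the lemma under those hypotheses, this is harmless.
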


\begin{proof}[Proof of Proposition \ref{prop: functional}]
First we want to prove $\beta^\star = \argsup_{\beta\ge0} f((s,a),\beta)$ is bounded in interval $\I_\beta :=\left[0,\frac{R_{\max}+\alpha\log\lvert\A\rvert}{(1-\gamma)\delta}\right]$ for any $(s,a) \in\S\times\A$. Rewriting the optimization problem to its primal form, it is clear that 
$$
f((s,a),\beta^\star) = \inf_{p_{s,a}\in \cP_{s,a}(\d)}\E\left[V(s')\right] \ge 0.
$$ 
When $\beta$ is greater than $\frac{R_{\max}+\alpha\log\lvert\A\rvert}{(1-\gamma)\delta}$, it can never be optimal since 
$$\begin{aligned}
f((s,a),\beta) 
=& -\beta\log\left(\E_{p_{s,a}^0}\left[\exp\left(-\frac{V(s')}{\beta}\right)\right]\right) - \beta\d \\
\le& -\beta\log\left(\exp\left(-\frac{R_{\max}+\alpha\log\lvert\A\rvert}{(1-\gamma)\beta}\right)\right) - \beta\d \\
=& \frac{R_{\max}+\alpha\log\lvert\A\rvert}{1-\gamma} - \beta\d < 0.
\end{aligned}$$

Now we know that $f((s,a), \beta)$ is a finite-valued function for each $(s,a)\in\S\times\A$ and $\beta\in\I_\beta$. Also, it is $\Sigma(\S\times\A)$-measurable in $(s, a) \in \S \times \A$ for each $\beta \in \I_\beta$ and is continuous in $\beta$ for each $(s, a)\in\S \times\A$. From Lemma \ref{lemma: normal integrand}, we know that $f((s,a), \beta)$ is a normal integrand.

Moreover, all functions in $\G$ is upper bounded and measurable so $\M_{f}\subset \G \subset \M((\S\times\A), \Sigma(\S\times\A); \R)$. Proposition \ref{prop: functional} is a direct conclusion of Lemma \ref{lemma: interchange}.
\end{proof}
\newpage
\section{Experiment Details}
\label{sec:exp_details}

\subsection{More Setting details}
\label{sec:exp_setting}

To allow for comparability of results, all tools were evaluated on equal-cost hardware, a Ubuntu 24.04 LTS system with one Intel(R) Core(TM) i7-6850K CPU, one NVIDIA GTX 1080 Ti GPU with 11 GB memory, and 64 GB RAM. All experiments use 12 CPU cores and 1 GPU. 

We implement FQI and RFQI algorithms from \url{https://github.com/zaiyan-x/RFQI}. DDPG and CQL are implemented from the offline RL library \textit{d3rlpy}~\citep{d3rlpy}.

\paragraph{Hyperparameter Selection} 
Across all environments, we use $\gamma=0.99$ for discount rate, $\tau=0.005$ for both $V$ and $Q$ critic soft-update, $\alpha=0.12$ as initial temperature, $\lvert B\rvert=256$ for mini-batch size, $\lvert \cD\rvert=10^6$ for data buffer size. Actor, Q and V critic and VAE networks are multilayer perceptrons (MLPs) with $[256, 256]$ as hidden dimension. In the \hcs and \rvs environments, we use two hidden layers in the actor and critic networks. All other networks have one hidden layer. 

There are multiple learning rates in our algorithm. Learning rate for VAE network $\lambda_\varphi$ is $5\times10^{-5}$ in the \pvs environment and $5\times10^{-4}$ in others. In Step 5 of Algorithm~\ref{alg:DR-SAC}, optimal function $\widetilde{g}^\star$ is found via backpropagation with learning rate $\lambda_\eta$. All other learning rates $\lambda_\psi$, $\lambda_\theta$, $\lambda_\phi$ and $\lambda_\alpha$ are the same in each environment and represented by $\lambda_\psi$. 

Value of learning rates $\lambda_\psi$ and $\lambda_\eta$, number of Q-critics and latent dimensions in VAE are separately tuned in each environment and presented in Table \ref{table:hyperparams}.

\begin{table}[h]
\centering
\caption{Hyper-parameters selection in SAC and DR-SAC algorithm training.}
\label{table:hyperparams}
\begin{tabular}{ c|c|c|c|c } \hline
 Environment & $\lambda_\psi$  & $\lambda_\eta$  & Q-Critic Number & latent dimensions \\ \hline
 \pv & $5\times10^{-4}$ & $5\times10^{-5}$   & 2 & 5 \\ %\hline
 \cp & $3\times10^{-4}$ & $5\times10^{-4}$   & 2 & 5 \\ %\hline
 \lld & $5\times10^{-4}$ & $5\times10^{-4}$   & 2 & 10 \\ %\hline
 \hc & $3\times10^{-4}$  & $5\times10^{-5}$  & 5 & 32 \\ %\hline
 \rv & $3\times10^{-4}$  & $5\times10^{-5}$  & 5 & 10 \\ \hline
\end{tabular}
\end{table}

\paragraph{Offline Dataset}
To ensure a fair performance comparison, all models within each environment are trained on the same offline dataset. Each datasets contains $10^6$ samples, generated by first training a behavior policy and applying the epsilon-greedy method. For most environments, the behavior policy is trained by the Twin Delayed DDPG (TD3;~\cite{fujimoto2018td3}) implemented from the \textit{d3rlpy} library \citep{d3rlpy}. In the \cps environment we use SAC to train the behavior policy. To ensure fair robustness evaluation, all models are trained to achieve the same performance ($500$, the maximum reward) in the unperturbed \cps environment. Datasets generated by TD3-trained (or SAC-trained) behavior policies are denoted as TD3-datasets (or SAC-datasets). Additional details, including the algorithm to train behavior policy, training steps and the random-action probability $\epsilon$ are presented in Table~\ref{table:dataset}. 

\begin{table}[h]
\centering
\caption{Experiment details in dataset generation}
\label{table:dataset}
\begin{tabular}{ c|c|c|c } \hline
 Environment & Behavior Policy Algorithm  &  Training Steps & Random-Action Probability $\epsilon$ \\ \hline
 \pv & TD3   & $5\times10^4$ & $0.5$ \\ %\hline
 \cp & SAC   & $5\times10^5$ & $0.5$ \\ %\hline
 \lld & TD3   & $3\times10^5$ & $0.5$ \\ %\hline
 \hc & TD3   & $10^6$ & $0.3$ \\ %\hline
 \rv & TD3   & $10^6$ & $0.3$ \\ \hline
\end{tabular}
\end{table}

% \begin{itemize}[leftmargin=0pt,labelsep=0.5em]
% \item{TD3-Dataset} For \pv, \lld, \hc and \rv environment, we train TD3 \cite{fujimoto2018td3} policies for $5\times10^4$ and $3\times10^5$ steps respectively and use epsilon-greedy method ($\epsilon=0.5$) to generate $10^6$ data.

% \item{SAC-Dataset} For \cp environment, we first use the online version of SAC algorithm to train a model for $2\times10^5$ steps and use epsilon-greedy method ($\epsilon=0.5$) to generate $10^6$ data. To ensure fairness in evaluating the robustness under perturbation, all models are trained to have the same performance (500, the highest possible reward in this environment) when no changes are made.
% \end{itemize}

\subsection{Extra Experiment Results}
\label{sec:extra_exp}
%%%%%%%%%%%%%%%%%%%%%%%%%%%%%%%%%%%%%%%%%% Pendulum
\paragraph{\pv}
In the \pvs environment, we compare DR-SAC with SAC, FQI, and DDPG. All models are trained on TD3-dataset. The robust algorithm RFQI does not perform well even in the nominal environment. To evaluate the robustness of trained models, we change the environment parameters \textit{length}, \textit{mass}, and \textit{gravity}, with nominal values as $1.0$, $1.0$ and $10.0$ respectively. To further show model performance under heavy-tailed perturbation, we also add Cauchy-distributed noise to state observations. The distribution of noise is defined as standard Cauchy distribution multiplied by a parameter \textit{noise scale}. We grind search $\delta\in\{0.1, 0.2, \cdots, 1.0\}$ and find model under $\d=0.5$ have the best overall robustness.

DR-SAC shows consistent robustness improvement compared to all other algorithms. The performance under length perturbation is presented in Figure~\ref{fig:performance}(a). In the mass perturbation test, DR-SAC has the best performance in all cases. For example, the average reward is over $40\%$ higher than SAC when mass changes $120\%$. In Figure \ref{fig:pv} (b), there is a notable gap between DR-SAC and SAC performance when gravity acceleration changes $40\%$. In Figure \ref{fig:pv} (c), DR-SAC achieves consistent the best performance when \textit{noise scale} increases.

\begin{figure}[h]
  \centering

  \subfloat[Mass Perturbation.]
  {%
    \includegraphics[width=0.329\textwidth,valign=t]{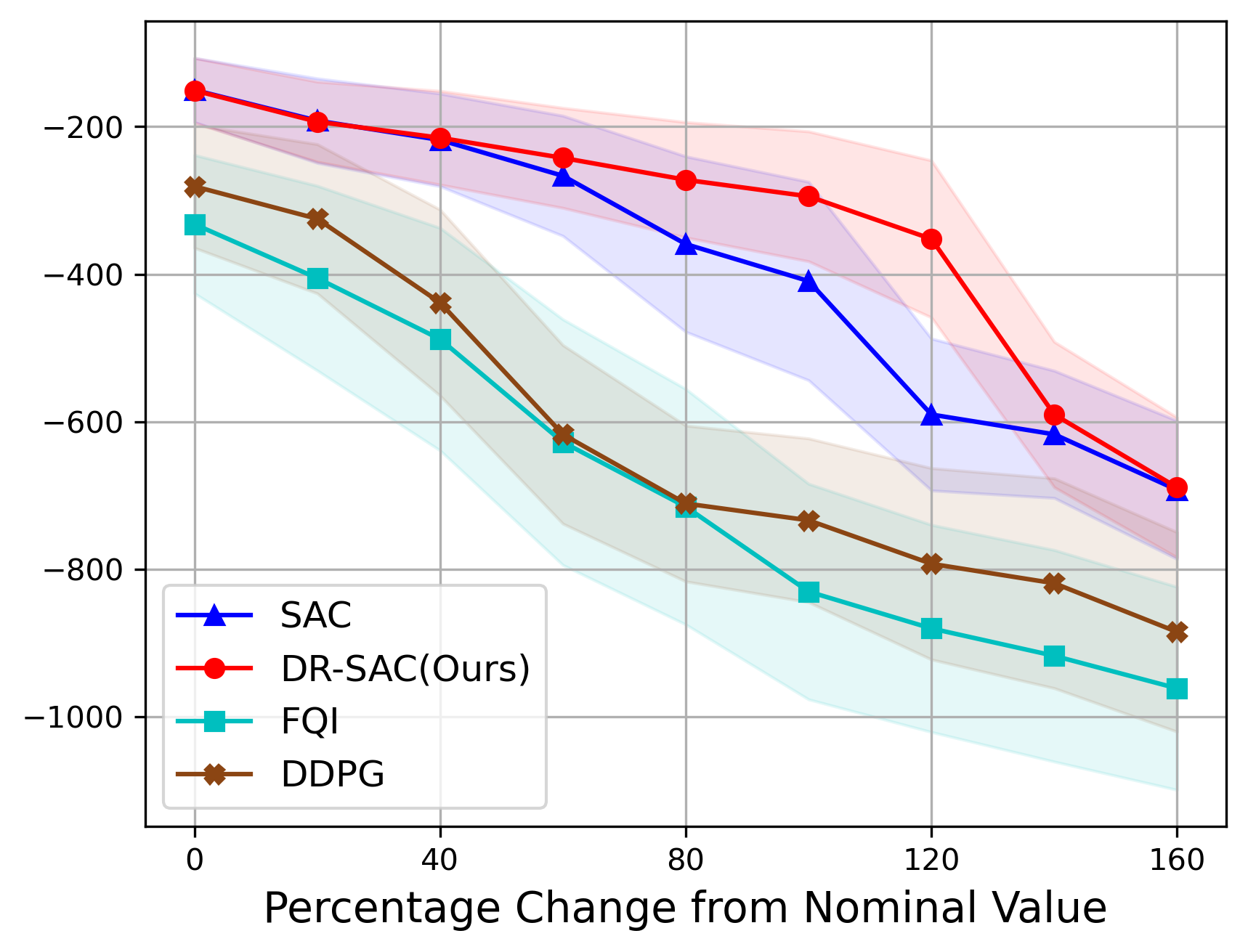}%
  } \hfill
  \subfloat[Gravity Perturbation.]
  {%
    \includegraphics[width=0.329\textwidth,valign=t]{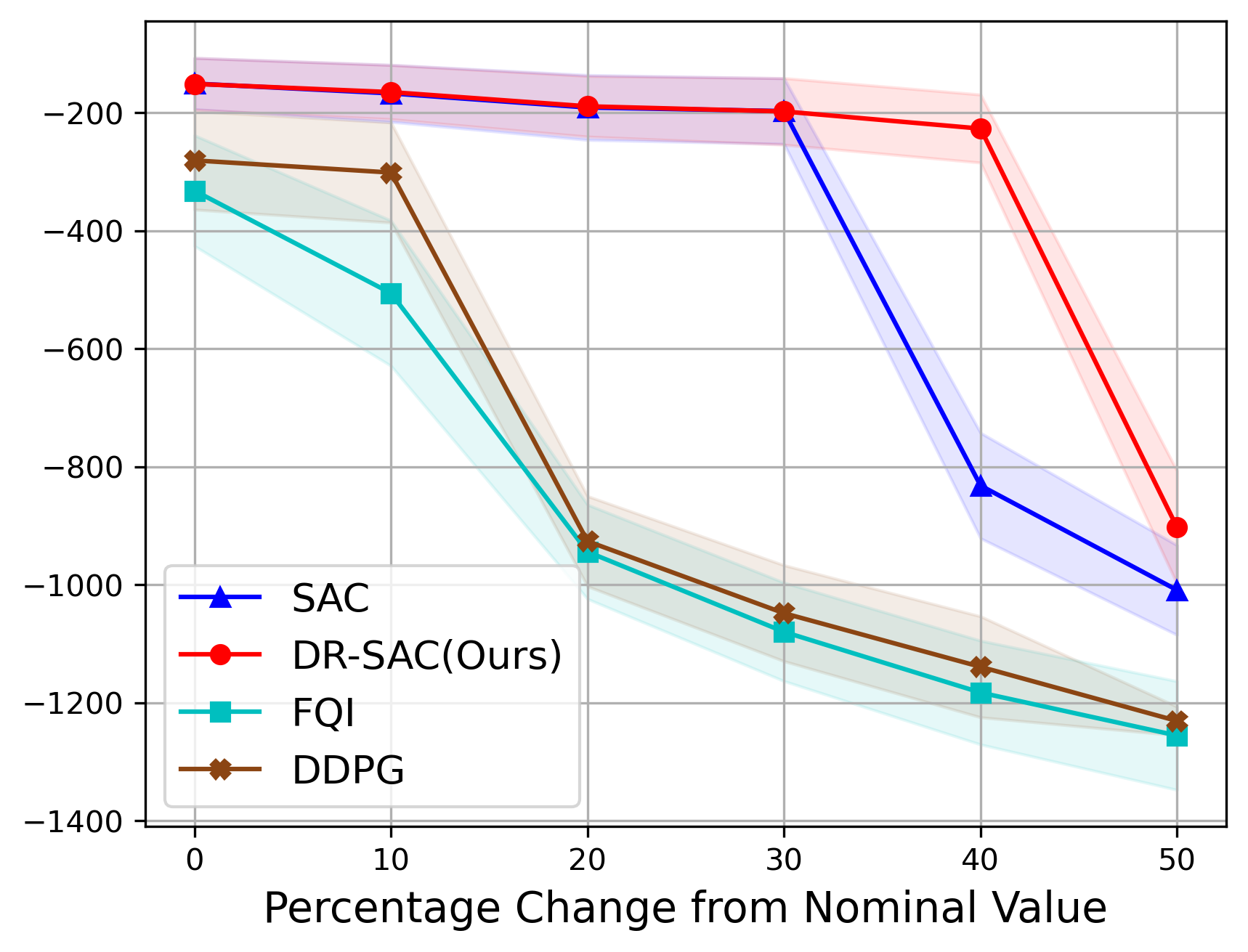}%
  }
  \subfloat[Observation Noise.]
  {%
    \includegraphics[width=0.329\textwidth,valign=t]{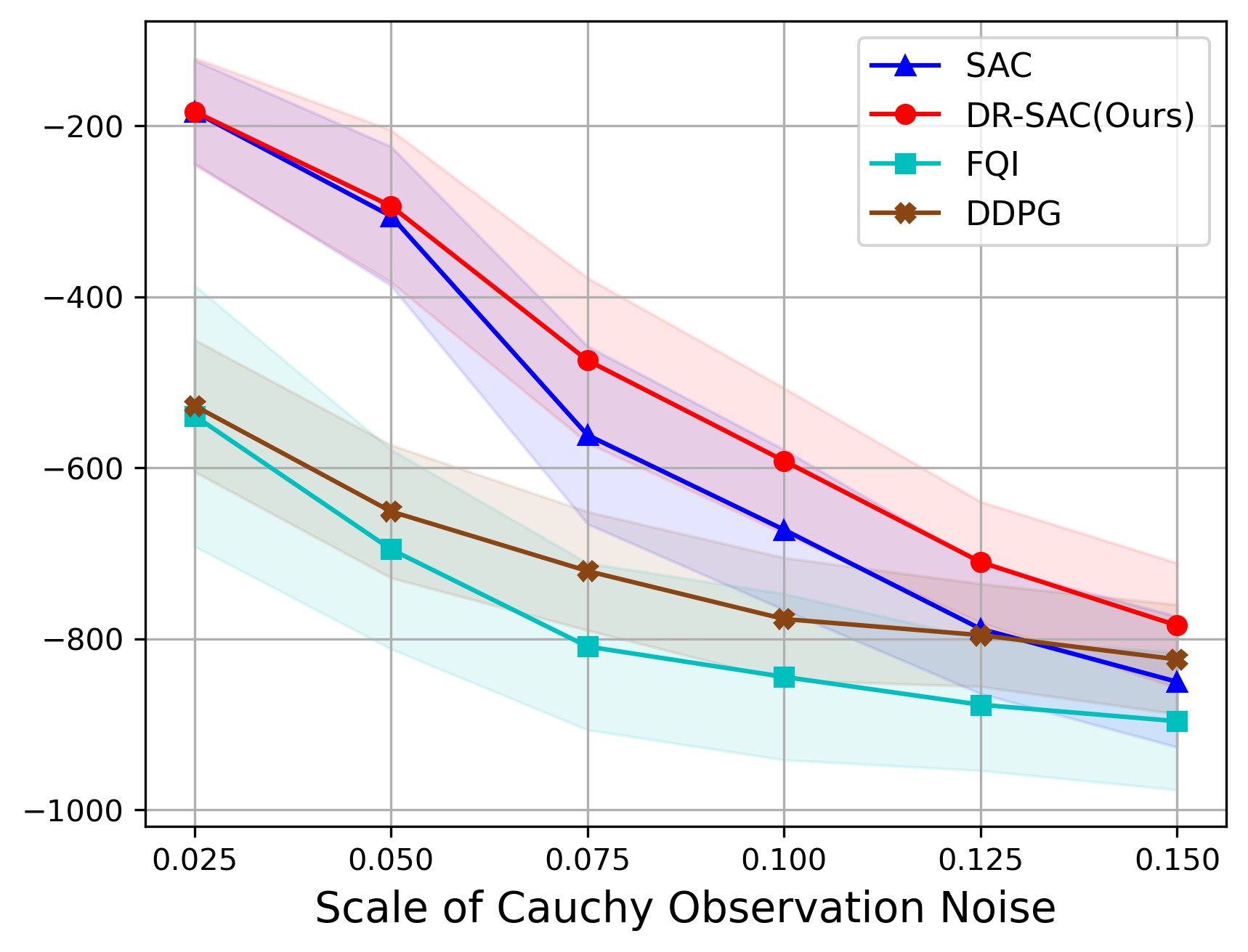}%
  }
\caption{Additional \pvs experiment results on TD3-dataset.}
\label{fig:pv}
\end{figure}

%%%%%%%%%%%%%%%%%%%%%%%%%%%%%%%%%%%%%%%%%% Cartpole
\paragraph{\cp}
In the \textit{Cartpole} environment, we compare the DR-SAC algorithm with non-robust algorithms SAC, DDPG, FQI, and robust algorithm RFQI. All algorithms are trained on the SAC-dataset. In the \textit{Cartpole} environment, the force applied to the cart is continuous and determined by the actuator's action and parameter \textit{force\_mag}. The highest possible reward is 500 in each episode. To ensure fair comparison, all models are trained to have average rewards of 500 in the nominal environment.

We test the robustness by introducing two changes to the environment: applying action perturbation and adding observation noise. In the action perturbation test, the actuator takes random actions with different probabilities. In the observation perturbation test, noise with zero mean and different standard deviations is added to the nominal states in each step. We grind search $\delta\in\{0.25,0.5, 0.75, 1.0\}$ and find DR-SAC has the best performance when $\delta=0.75$. We also use $\rho=0.75$ to train the RFQI model.

In the \cps environment, DR-SAC has the best overall performance under both type of perturbation. Figure \ref{fig:cp and lld}(a) extends Figure~\ref{fig:performance}(b). DR-SAC has performance improvement over 75\% compared to non-robust algorithms SAC and DDPG when the standard deviation of noise is 0.2 and 0.3.

\begin{figure}[h]
  \centering
  \subfloat[\centering Observation Perturbation \\ \cp]
  {%
    \includegraphics[width=0.485\textwidth,valign=t]{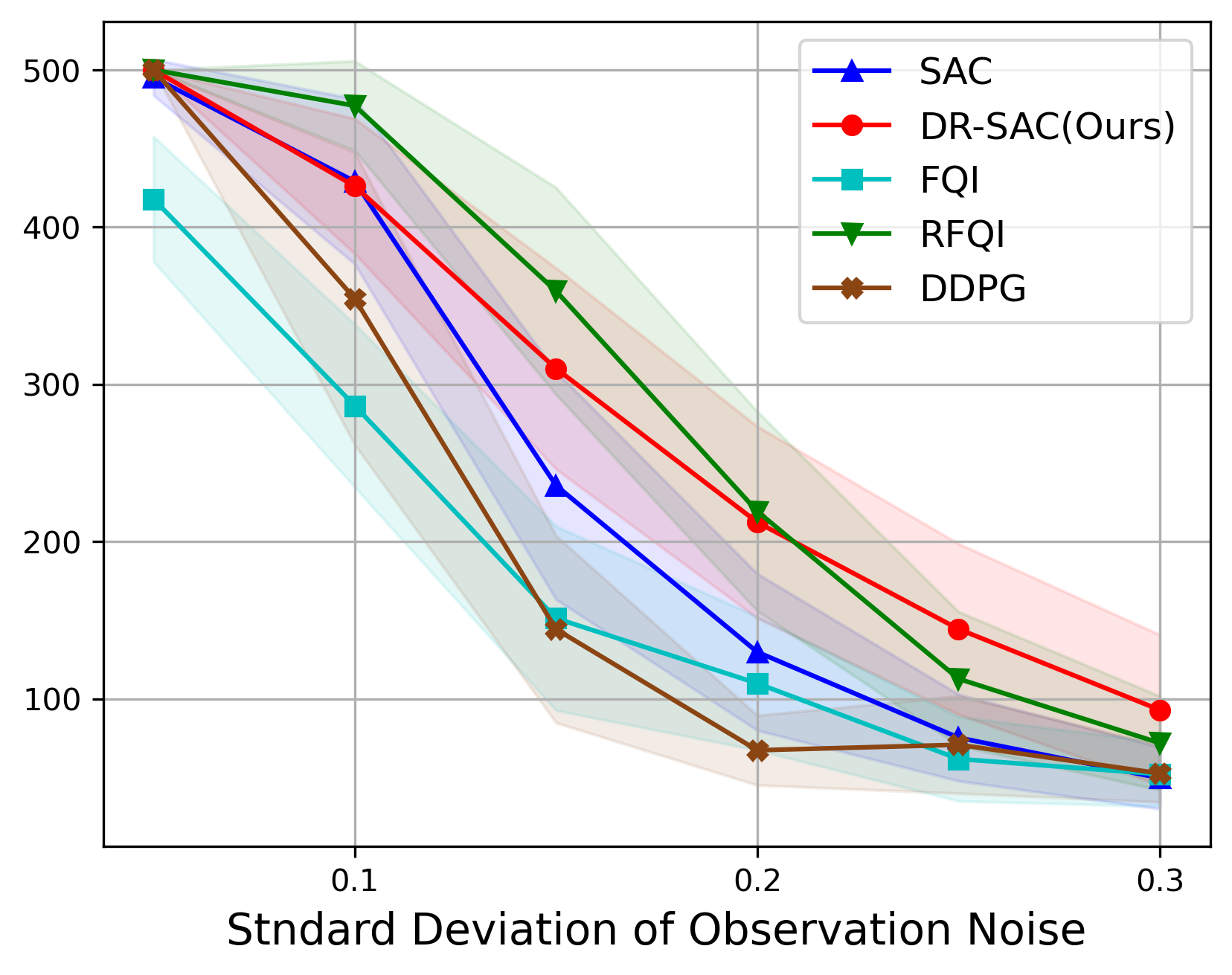}%
  } \hfill
  \subfloat[\centering Wind Power Perturbation \\ \lld]
  {%
    \includegraphics[width=0.485\textwidth,valign=t]{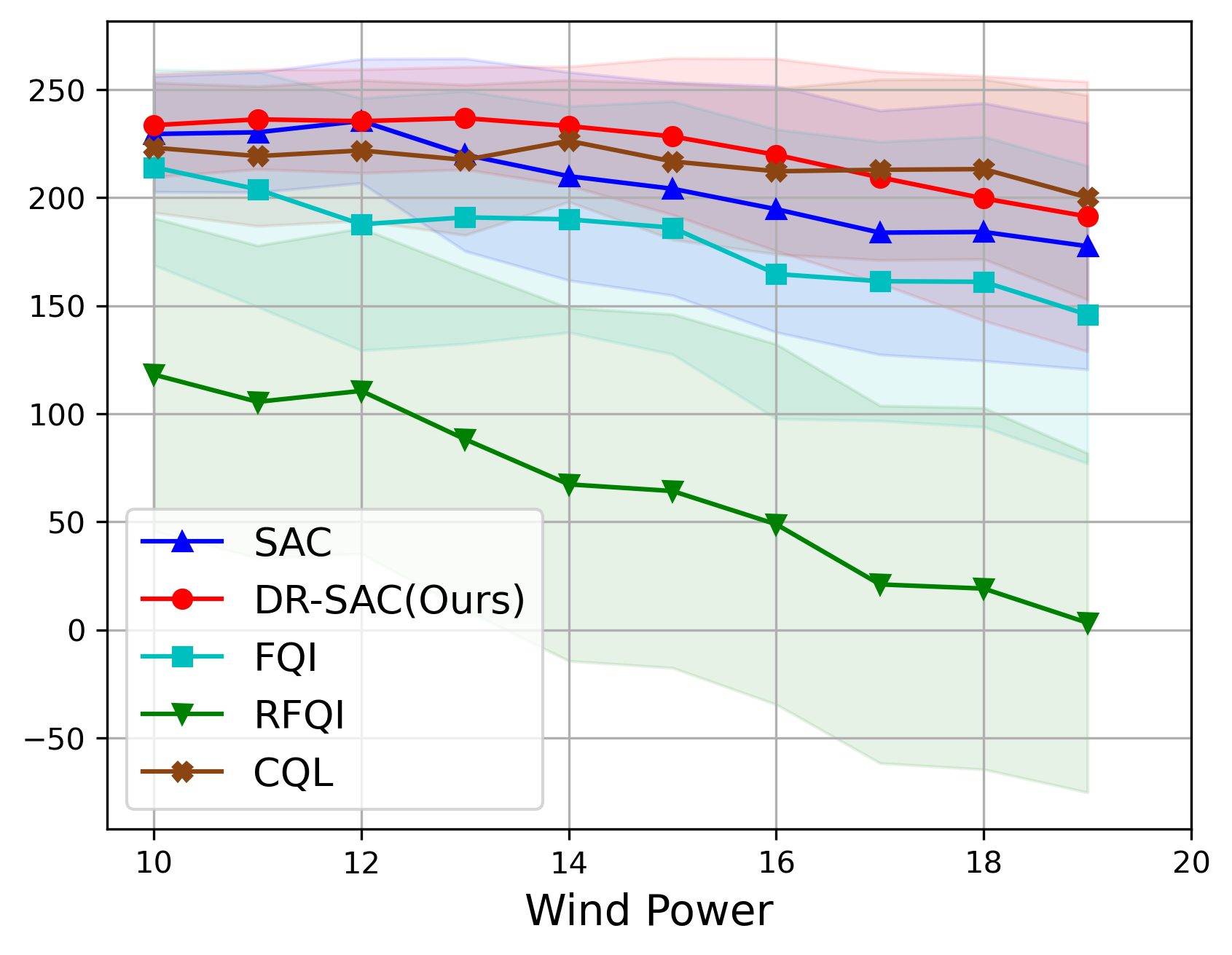}%
  }
\caption{Additional \textit{Cartpole} results on SAC-dataset and \llds results on TD3-dataset. }
\label{fig:cp and lld}
\end{figure}

%%%%%%%%%%%%%%%%%%%%%%%%%%%%%%%%%%%%%%%%%% Lunarlander
\paragraph{\lld}
In the \llds environment, we compare DR-SAC with non-robust algorithms SAC, CQL, FQI, and robust algorithm RFQI. All algorithms are trained on TD3-dataset. In the \llds environment, the lander has main and side engines, and the actuator can control the throttle of the main engine. We change environment parameters \textit{engine\_power} (main and side engine power) and \textit{wind\_power} (magnitude of linear wind) to validate algorithm robustness. We grind search $\delta\in\{0.25,0.5, 0.75, 1.0\}$ and find DR-SAC has the best performance when $\delta=0.25$. We also use $\rho=0.25$ to train the RFQI model.

% \begin{figure}%{0.5\textwidth}
%    \begin{center}
%     \includegraphics[width=0.48\textwidth]{figures/lunarlander/wind.png}
%   \end{center}
%   \caption{\llds results on TD3-dataset. The curves show the average reward of 50 episodes, shaded by $\pm$0.5 standard deviation.}
%   \label{fig:lld}
% \end{figure}

Under all types of perturbations, DR-SAC shows superior robustness compared to other algorithms. The performance under \textit{engine\_power} perturbation is presented in Figure~\ref{fig:performance}(c). In Figure~\ref{fig:cp and lld}(b), DR-SAC shows the highest average reward in most levels of wind perturbation. It is worth noting that the robust algorithm RFQI does not have an acceptable performance in this test, even compared to its non-robust counterpart FQI.

% \begin{figure}[h]
%   \centering

%   \subfloat[Engine Perturbation:  environment parameters main and side engine power change.]
%   {%
%     \includegraphics[width=0.485\textwidth,valign=t]{figures/lunarlander/engine.png}%
%   } \hfill
%   \subfloat[Wind Perturbation: linear wind applied.]
%   {%
%     \includegraphics[width=0.485\textwidth,valign=t]{figures/lunarlander/wind.png}%
%   }
% \caption{\llds results on TD3-Dataset. The curves show average reward of 50 episodes shaded by $\pm$0.5 standard deviation.}
% \label{fig:lld}
% \end{figure}

%%%%%%%%%%%%%%%%%%%%%%%%%%%%%%%%%%%%%%%%%% Reacher
\paragraph{\rv} 
In the \rvs environment, we compare DR-SAC with non-robust algorithms SAC, FQI, CQL, and robust algorithm RFQI. All algorithms are trained on TD3-dataset. In \rvs environment, the actuator controls a two-jointed robot arm to reach a target. We use \textit{joint\_damping} to denote the damping factor of both \textit{joint0} and \textit{joint1}, with default value as $1.0$. We grind search $\delta\in\{0.1,\,0.2,\,0.3\}$ and find DR-SAC has the best performance when $\delta=0.2$. We also use $\rho=0.2$ to train the RFQI model.

To test the robustness of all algorithms, we compare their performance after adding observation noise and changing parameters \textit{joint\_damping}. In the observation perturbation test, we add zero-mean Gaussian noise to the nominal state in dimensions $4-9$. The first 4 dimensions in state are trigonometric function values and are kept unperturbed. Performance under both perturbations is presented in Figure~\ref{fig:performance} (d) and (e). Moreover, in Figure~\ref{fig:performance} (e), the standard deviation regions were computed but omitted from the final plot because the overlapping shaded areas of multiple algorithms made the figure unreadable. We provide them in Table~\ref{tabel:reacher std}.

\begin{table}[h]
\centering
\caption{Standard Deviation of Model Performance under Damping Perturbation in \rv}
\label{tabel:reacher std}
\begin{tabular}{ c|c|c|c|c|c } \hline
 Joint Damping Value & SAC & DR-SAC & FQI & RFQI & CQL \\ \hline
 2.0 & 1.452   & 1.459 & 1.638 & 1.509 & 1.396  \\ %\hline
 3.0 & 1.534  & 1.540 & 1.754 & 1.601 & 1.456 \\ %\hline
 4.0 & 1.631   & 1.628 & 1.865 & 1.699 & 1.529  \\ 
 5.0 & 1.735 & 1.721 & 1.969 & 1.802 & 1.614\\
 6.0 & 1.841 & 1.819 & 2.057 & 1.910 & 1.711\\ 
 7.0 & 1.946 & 1.925 & 2.135 & 2.018 & 1.816 \\ \hline
\end{tabular}
\end{table}

%%%%%%%%%%%%%%%%%%%%%%%%%%%%%%%%%%%%%%%%%% HalfCheetah
\paragraph{\hc}
In the \hcs environment, we compare DR-SAC with SAC baseline only due to the unsatisfactory performance of FQI and RFQI. All algorithms are trained on TD3-dataset. In the \hcs environment, the actuator controls a cat-like robot consisting of $9$ body parts and $8$ joints to run.  We use \textit{front\_stiff} and \textit{front\_damping} to denote the stiffness and damping factor of joint \textit{fthigh}, \textit{fshin}, and \textit{ffoot}. Also, \textit{back\_stiff} and \textit{back\_damping} can be denoted in a similar way. The default value of these parameters can be found through the environmental assets of Gymnasium MuJoCo in \url{https://github.com/Farama-Foundation/Gymnasium/blob/main/gymnasium/envs/mujoco/assets/half_cheetah.xml}. We grind search $\delta\in\{0.1,\,0.2,\,0.3\}$ and find DR-SAC has the best performance when $\delta=0.2$. 

Performance of \textit{back\_damping} test is presented in Figure~\ref{fig:performance}(f). Combining it with Figure~\ref{fig:hc}, we can see DR-SAC has notable robustness improvement across all perturbation tests. For example, in \textit{front\_stiff} perturbation test, DR-SAC achieves an improvement as much as $10\%$ when the change is $80\%$. 

\begin{figure}[h]
  \centering

  \subfloat[Front Stiffness Perturbation]
  {%
    \includegraphics[width=0.485\textwidth,valign=t]{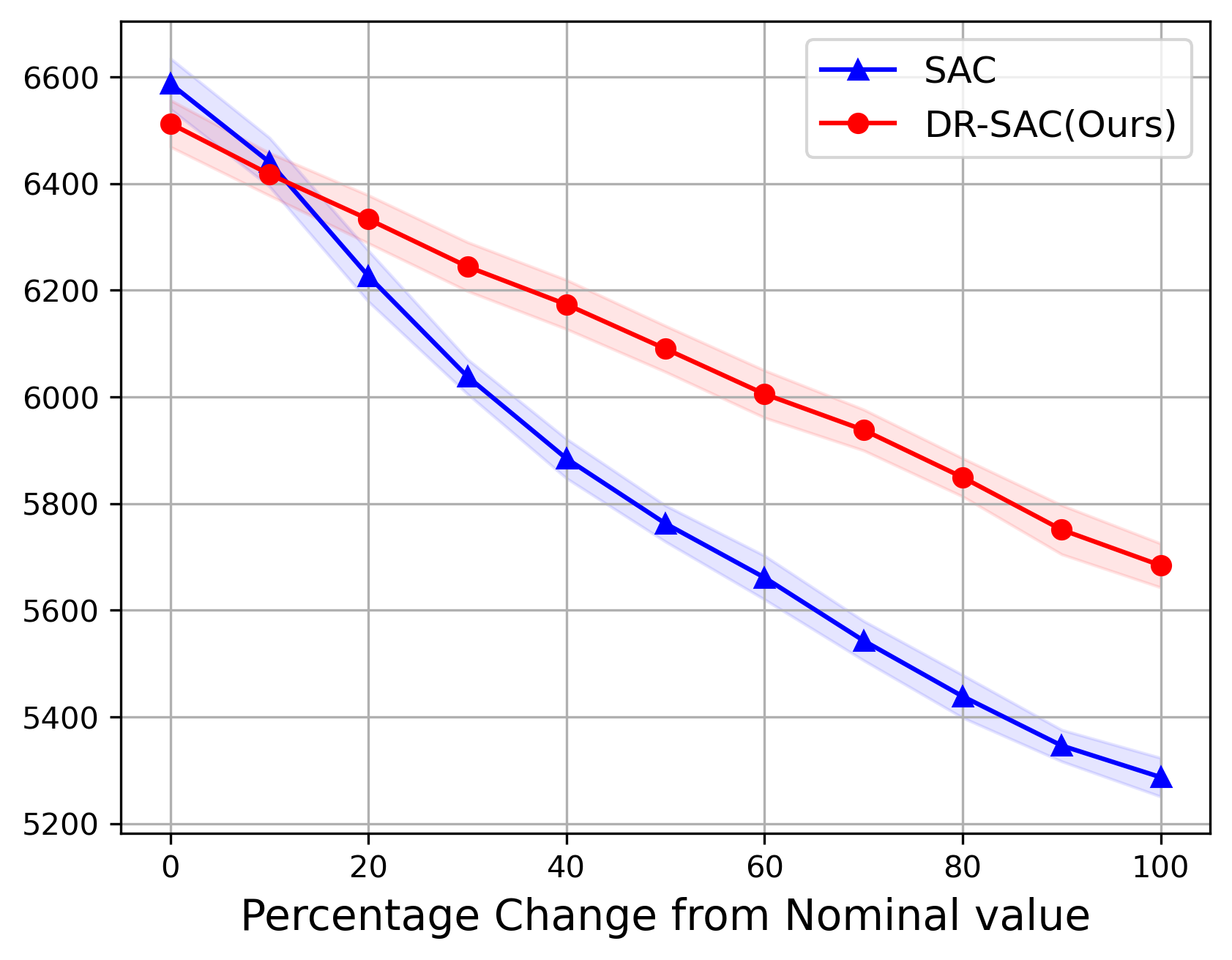}%
  } \hfill
  \subfloat[Front Damping Perturbation]
  {%
    \includegraphics[width=0.485\textwidth,valign=t]{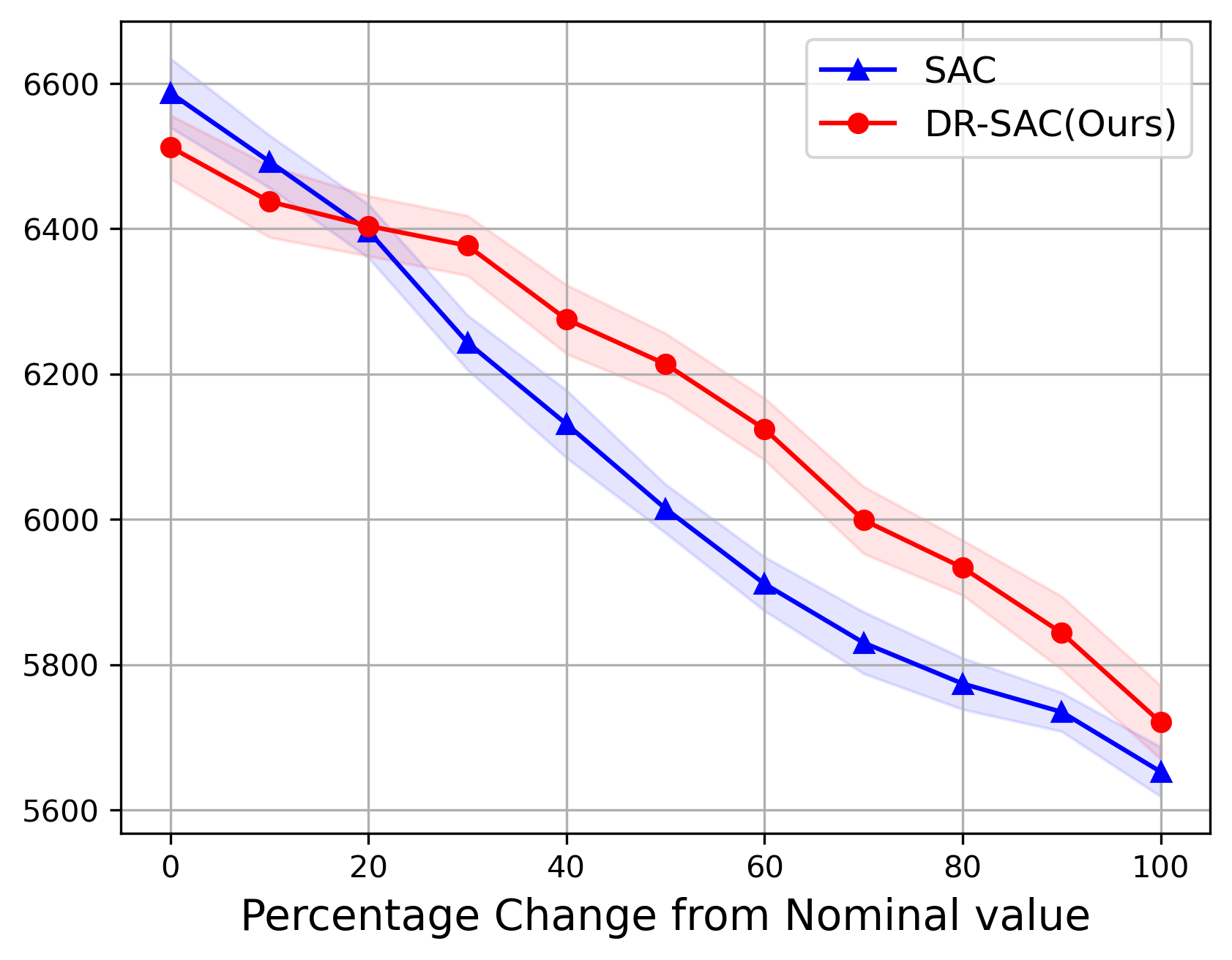}%
  }
\caption{Additional \hcs results on TD3-dataset.}
\label{fig:hc}
\end{figure}

\subsection{Ablation Study Details}

\subsubsection{Training Efficiency of DR-SAC.}
\label{sec:ablation efficiency}
In this section, we want to show that DR-SAC with functional optimization finds a good balance between efficiency and robustness. We compare training time and robustness of Algorithm \ref{alg:DR-SAC}, DR-SAC without functional optimization, and robust algorithm RFQI, to show our DR-SAC algorithm has the best overall performance.

\paragraph{Comparison with Accurate Bellman Operator}
We first introduce DR-SAC algorithm without functional optimization. Most steps are the same as Algorithm \ref{alg:DR-SAC}, instead of following modifications. Step 5 in Algorithm \ref{alg:DR-SAC} is removed. $Q$-network loss is replaced by 
\begin{equation}
J_Q^{\text{DR\_acc}} = \E_{(s,a)\sim\cD}\left[Q^\pi_{\M}(s,a) - \widetilde{\mathcal{T}}^\pi_{\d} Q^\pi_{\M_\d}(s,a)\right]^2,
\end{equation}
where $\widetilde{\mathcal{T}}^\pi_{\d}$ is the empirical version of $\mathcal{T}^\pi_{\d}$ by replacing $p_{s,a}^0$ with $\widetilde{p}_{s,a}^0$. We call this modified algorithm \textit{DR-SAC-Accurate} and call Algorithm \ref{alg:DR-SAC} \textit{DR-SAC-Functional} in this section. 

We train SAC, \textit{DR-SAC-Functional}, and \textit{DR-SAC-Accurate} algorithms in \pvs environment. The optimization problem in Equation \eqref{eq: q_bellman dual} is a problem over scalar $\beta>0$ and solved via \textit{Scipy} for each $(s,a)$ pair. Table \ref{tabel:train time accurate} presents the training steps and time for three algorithms. Training time of \textit{DR-SAC-Accurate} is over $150\times$ longer than standard SAC and over $50\times$ longer than \textit{DR-SAC-Functional}. Considering \pvs environment is relatively simple, \textit{DR-SAC-Accurate} algorithm is hard to utilize in large-scale problems.

\begin{table}[h]
\centering
\caption{Training steps and time for three algorithms in \pv}
\label{tabel:train time accurate}
\begin{tabular}{ c|c|c } \hline
 Algorithm & Training Steps  & Training Time (Minute)  \\ \hline
 SAC & $10$k   & 1.7  \\ %\hline
 \textit{DR-SAC-Functional} & $10$k  & 4.7  \\ %\hline
 \textit{DR-SAC-Accurate} & $8$k   & 260  \\ \hline
\end{tabular}
\end{table}

Moreover, we test the robustness of three algorithms by comparing their average reward under different perturbations. To be specific, we change \pvs environment parameters: \textit{length} and \textit{mass}. \textit{DR-SAC-Functional} and \textit{DR-SAC-Accurate} are trained with $\delta=0.5$. Figure \ref{fig:pv acc} shows that \textit{DR-SAC-Functional} achieves comparable and even better performance under small-scale perturbation. For example, \textit{DR-SAC-Functional} and \textit{DR-SAC-Accurate} have almost the same performance under \textit{mass} perturbation test when change is less than $120\%$. In \textit{length} perturbation test, \textit{DR-SAC-Functional} has better performance when the change is less than $30\%$.

% \begin{figure}[h]
%   \centering

%   \subfloat[Gravity Factor Perturbation.]
%   {%
%     \includegraphics[width=0.329\textwidth,valign=t]{figures/pendulum/gravity-time.png}%
%   } \hfill
%   \subfloat[Mass Factor Perturbation.]
%   {%
%     \includegraphics[width=0.329\textwidth,valign=t]{figures/pendulum/mass-time.png}%
%   } \hfill
%   \subfloat[Length Factor Perturbation.]
%   {%
%     \includegraphics[width=0.329\textwidth,valign=t]{figures/pendulum/length-time.png}%
%   }
% \caption{\pvs results on TD3-dataset. Curves show average reward of 50 episodes, shaded by $\pm$0.5 standard deviation. Algorithms are SAC, DR-SAC with and without functional approximation.}
% \label{fig:pv acc}
% \end{figure}

\paragraph{Efficiency Comparison with RFQI.}
In Section \ref{sec:experiment performance}, existing DR-RL algorithm RFQI also shows comparable performance under some perturbations. In this paragraph, we want to show that DR-SAC requires much less training time than RFQI, improving its applicability to large scale problems. Table \ref{table:training time fqi} lists the training time of SAC, DR-SAC, FQI, and RFQI algorithms in three testing environments. DR-SAC is demonstrated to be well-trained in at most $20\%$ time required by RFQI. Compared with each non-robust counterpart, the training time of DR-SAC is at most $3.6\times$ of SAC, while RFQI requires $10-13\times$ more training time than FQI. In Figure \ref{fig:training time rv}, we provide a plot of performance changes against the training time in the \rvs environment, where RFQI is shown to be under-trained when the curve of DR-SAC converges.

\begin{figure}[h]
    \centering
    \includegraphics[width=0.5\textwidth]{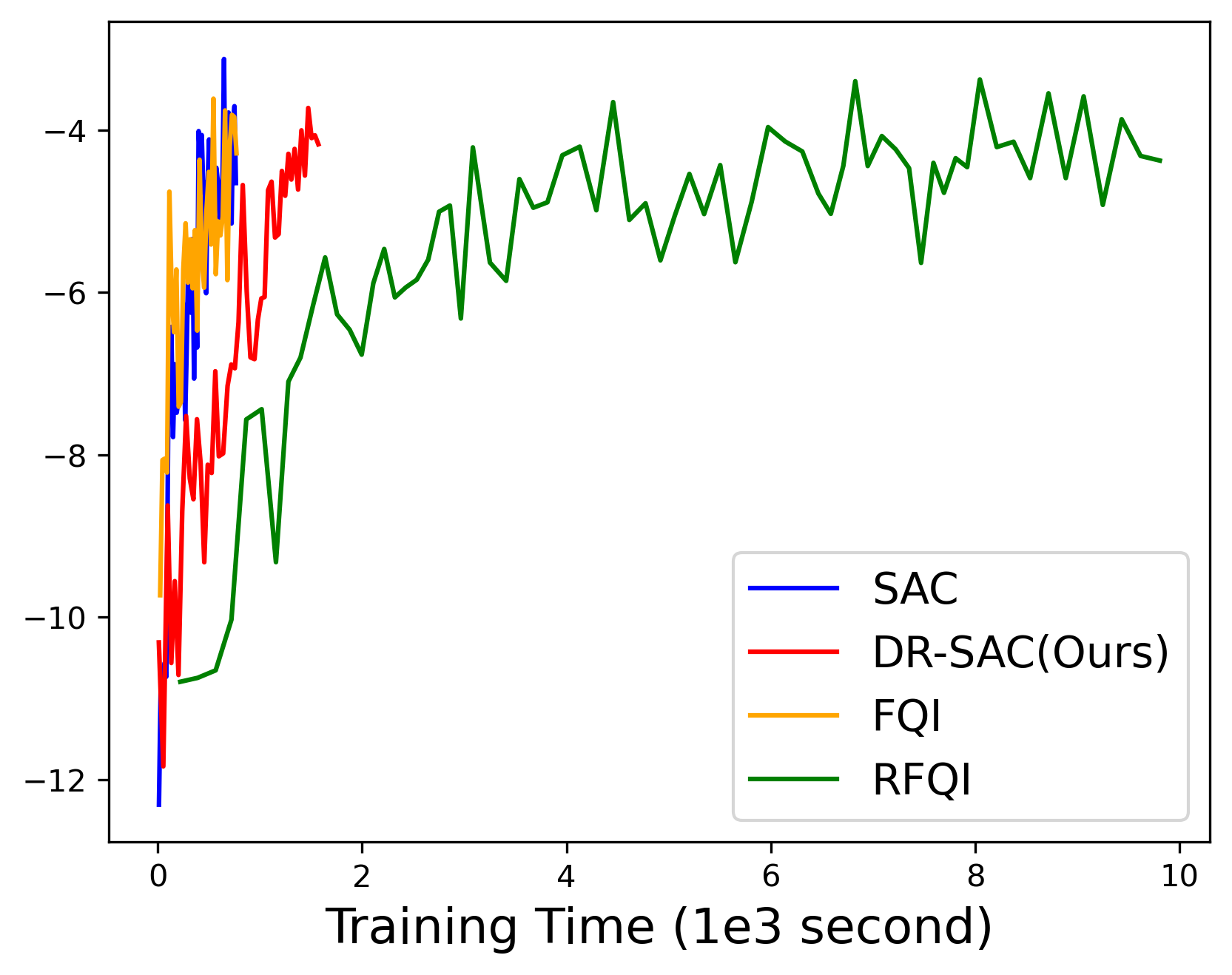}
    \captionof{figure}{Average Reward of 20 Episodes over Training Time in \rvs Environment.}
   \label{fig:training time rv}
\end{figure}

Moreover, this efficiency improvement does not solely arise from the functional approximation step, but also from the inherent optimization efficiency in the loss function structure. The RFQI algorithm considers the RMDP framework with uncertainty sets defined by the TV distance and is empirically built on the BCQ algorithm. In RFQI, there exists a step similar to Equation~\eqref{eq: empirical opt g} to find the optimal functional under empirical measurement. Experimental results show that the efficiency gap arises from the number of GD steps in solving this optimization problem. RFQI sets the default GD steps as $1000$ while DR-SAC achieves comparable robustness performance with only 5 steps. 
To further investigate, we vary the GD steps in RFQI to 5, 10 and 100 in the \llds environment and report the model performance in the unperturbed environment. As shown in Table~\ref{tabel:gd steps}, performance drops sharply when RFQI uses fewer GD steps, indicating that the loss function structure in RFQI inherently leads to slower convergence and requires more optimization steps. In our framework, the choice of actor-critic based non-robust baseline, KL divergence induced uncertainty set and generative modeling in nominal distribution estimation together yields a more optimization-friendly formulation, contributing to our method's practical efficiency.
%We believe this difference stems from the divergence choice, leading to different dual formulations and loss function structures. The KL divergence in our framework naturally yields a more optimization-friendly formulation, contributing to our method's practical efficiency.

\begin{table}[h]
\centering
\caption{GD steps, training time and performance in \lld}
\label{tabel:gd steps}
\begin{tabular}{ c|c|c|c|c|c } \hline
 Algorithm & DR-SAC	& RFQI	& RFQI	& RFQI	& RFQI (Used)  \\ \hline
 GD Steps & 5	& 5	& 10	& 100	& 1000 \\ 
 Training Time (min) & 36	& 12 & 21 & 139 & 238 \\ 
Nominal Env Performance & 240.0	& 175.9	& 181.9	& 192.9	& 201.2  \\ \hline
\end{tabular}
\end{table}

\subsubsection{Selection of Generative Model}
\paragraph{Robustness of VAE.}
A consistent challenge in DR-RL algorithm design is that unknown nominal distributions $p_{s,a}^0$ often appear in the loss function. In Section~\ref{sec:gen model} and Appendix~\ref{sec:discuss vae}, we review methods used in other model-free DR-RL algorithms and motivate the necessity of generative models in our setting. Although generative models inevitable introduce additional estimation error when constructing empirical measures $\tilde{p}_{s,a}^0$, our ablation studies demonstrate that DR-SAC is largely insensitive to the VAE modeling, therefore improving its applicability. In the \pvs environment, where the state and action space dimensions are $3$ and $1$ respectively, we train DR-SAC with VAEs of latent dimensions ${1, 5, 10, 20, 50}$ and evaluate performance under perturbed pendulum mass. As shown in Figure~\ref{fig:gen_type}(a), DR-SAC maintains superior robustness over the SAC baseline as long as the latent dimension lies within a reasonable range (between 5 and 20 in our experiments).

% \label{sec:ablation vae model}
% \begin{figure}[h]
%     \centering
%     \includegraphics[width=0.5\textwidth]{figures/pendulum/vae-dimension.png}
%     \captionof{figure}{\pvs results on TD3-dataset with mass perturbation and different VAE latent dimensions. The curves show the average reward of 50 episodes, shaded by $\pm$0.5 standard deviation.}
%    \label{fig:vae dimension}
% \end{figure}

\paragraph{Comparison with Other Generative Models.}
To demonstrate the choice of VAE over other generative models, we implemented Diffusion Probabilistic Models and Normalizing Flows as alternatives to the VAE in DR-SAC and conducted ablation studies on \pvs and \cp. Model performance is provided in Figure~\ref{fig:gen_type}(b), (c) and Figure~\ref{fig:add_gen_type}. Flow-based models showed unstable performance even in unperturbed \pvs environment. DR-SAC with Diffusion models achieved comparable robustness to the VAE in \pv. Crucially, the efficiency of sampling process with Diffusion models is a major bottleneck. Diffusion-based training is at least $4.5\times$ slower than VAE-based training.

\begin{figure}[h]
  \centering

  \subfloat[\centering Gravity Perturbation \\ \pv]
  {%
    \includegraphics[width=0.325\textwidth,valign=t]{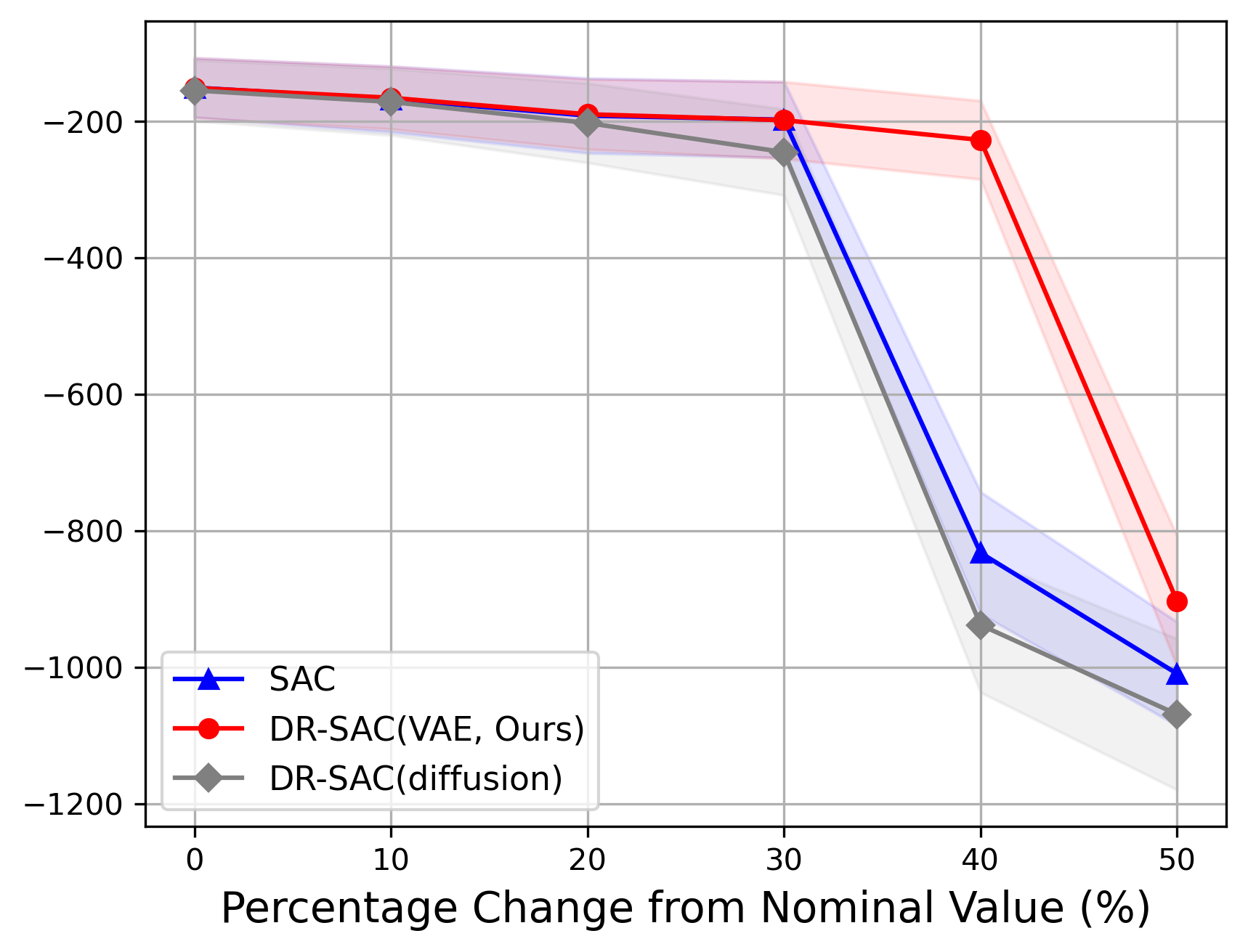}%
  } \hfill
  \subfloat[\centering Mass Perturbation \\ \pv]
  {%
    \includegraphics[width=0.325\textwidth,valign=t]{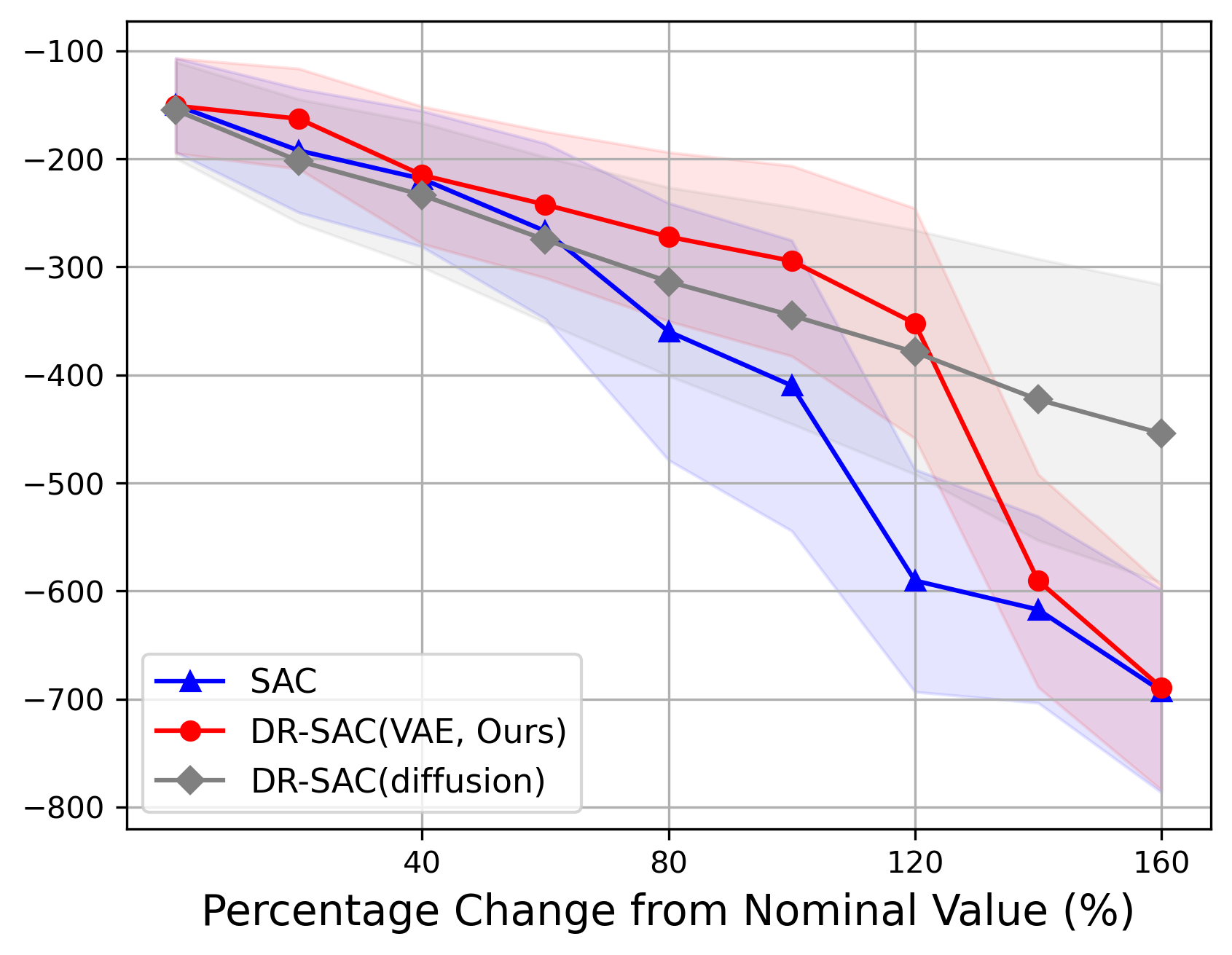}%
  }  \hfill
  \subfloat[\centering Action Perturbation \\ \cp]
  {%
    \includegraphics[width=0.325\textwidth,valign=t]{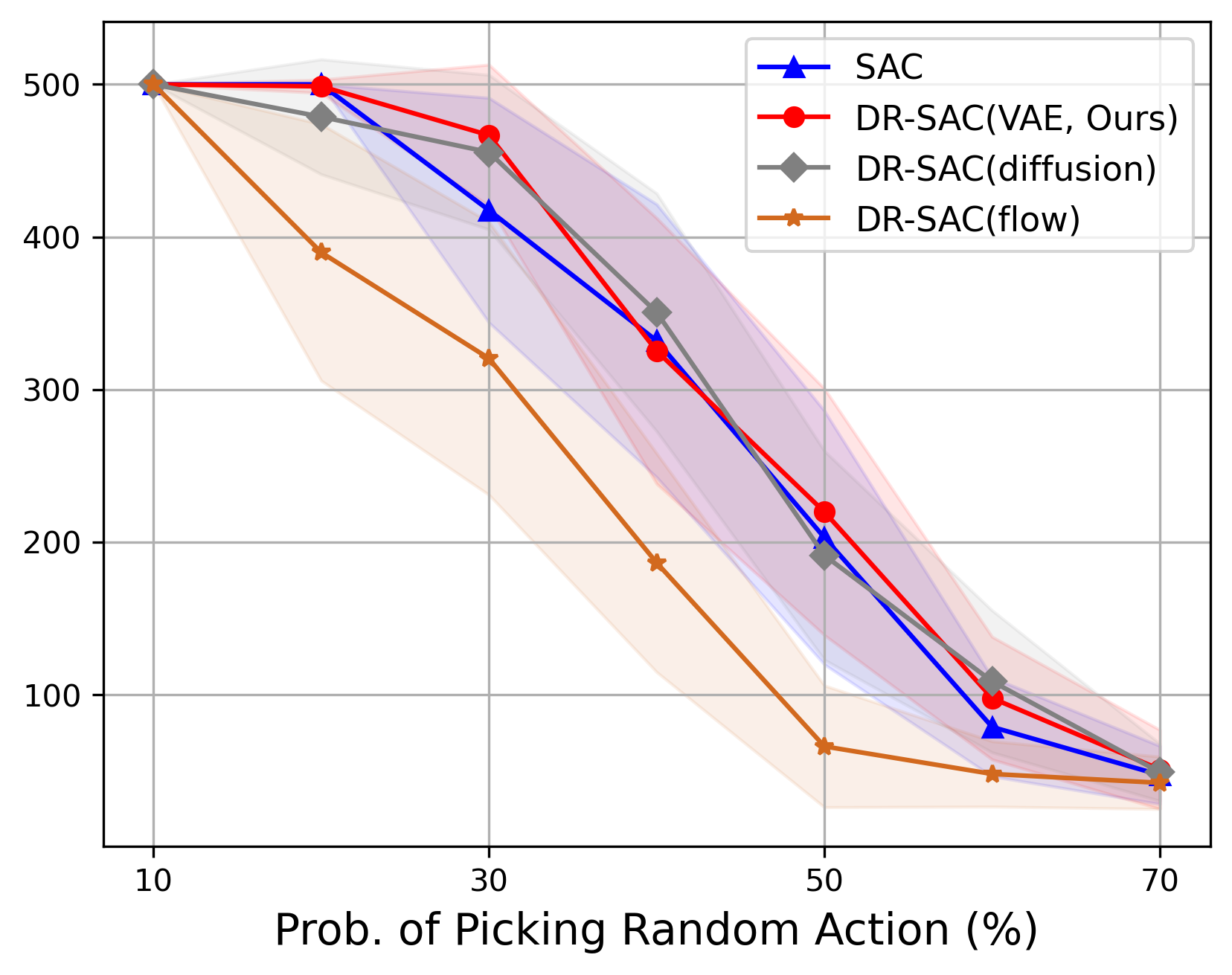}%
  }
\caption{Additional results with different generative models in \pvs and \cp.}
\label{fig:add_gen_type}
\end{figure}

% \begin{figure}[h]
%   \centering
%   \subfloat[Observation Perturbation.]
%   {%
%     \includegraphics[width=0.485\textwidth,valign=t]{figures/cartpole/action_gen_type.png}%
%   } \hfill
%   \subfloat[``Force\_mag'' Perturbation.]
%   {%
%     \includegraphics[width=0.485\textwidth,valign=t]{figures/cartpole/obs_noise_gen_type.png}%
%   }
% \caption{Additional \textit{Cartpole} results with different generative models  on SAC-dataset.}
% \label{fig:cp_gen_type}
% \end{figure}

\subsubsection{Usage of V-Network}
\label{sec:ablation use v}
In this section, we demonstrate that keeping the $V$-network in the SAC algorithm reduces the sensitivity on dataset distribution. As introduced in Appendix~\ref{sec:exp_setting}, offline datasets in this work are generated by first training a behavior policy and applying the epsilon-greedy method to collect data. Experimental results shows that SAC without the $V$-network exhibits unstable performance when the behavior policy differs across datasets.

Our experiments are conducted in the \pvs environment. We generate two datasets with behavior policy trained by an online version of SAC and TD3, denoted as SAC-dataset and TD3-dataset, respectively. Figure~\ref{fig:use v pv} presents the average reward of $20$ episodes against training steps in four scenarios: SAC-dataset vs. TD3-dataset, SAC algorithm with vs. without $V$-network. Removing the $V$-network shows minor influence on offline SAC learning using SAC-dataset. However, for TD3-dataset, SAC with $V$-network achieves a stable average reward around $-150$ quickly, but the average reward of SAC without $V$-network fluctuates intensely and never exceeds $-200$. This validates that SAC with a $V$-network is less sensitive to behavior policy and dataset distribution.

\begin{figure}[h]
    \centering
    \includegraphics[width=0.5\textwidth]{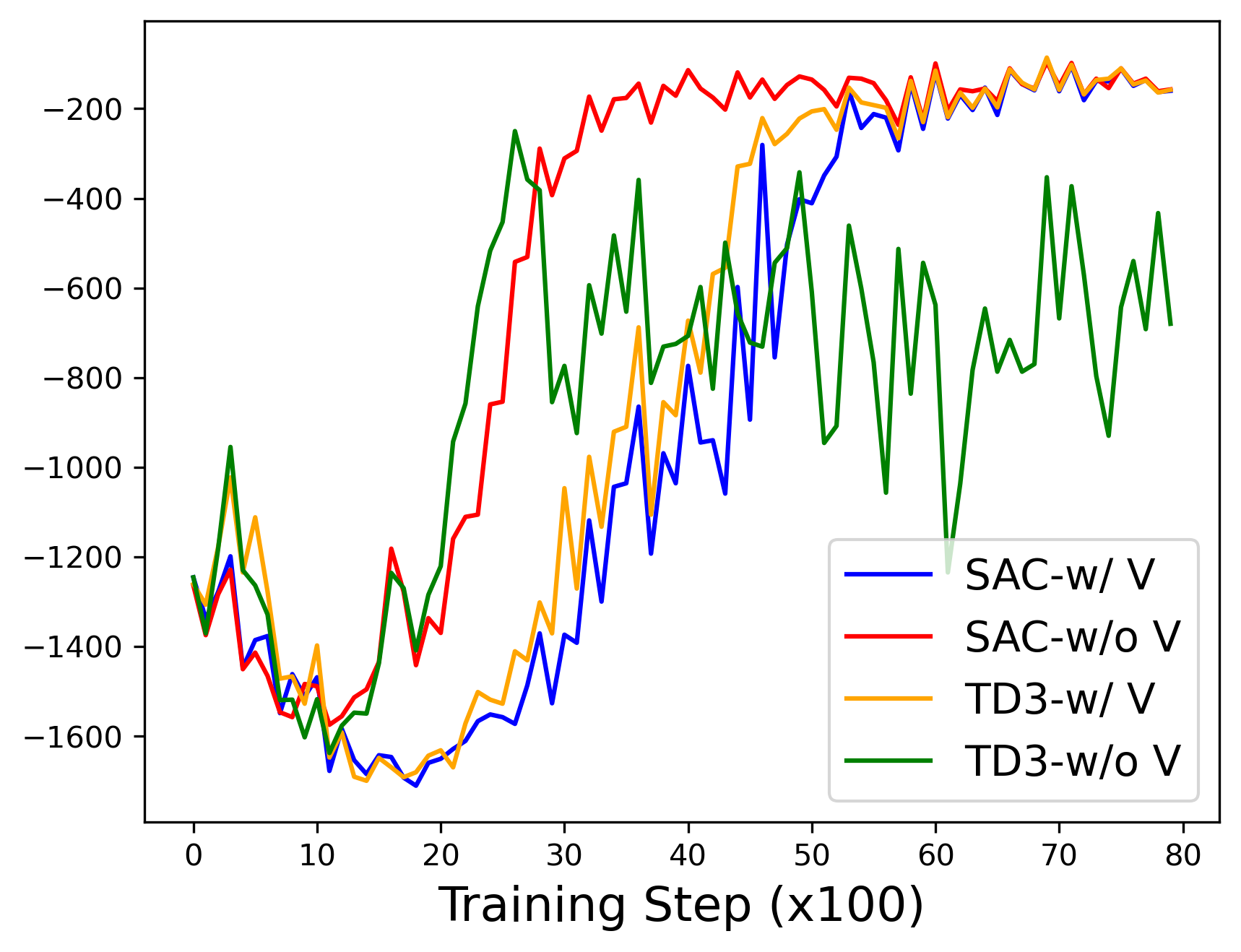}
    \captionof{figure}{Average Reward of 20 Episodes over Training Step in \pvs Environment.}
   \label{fig:use v pv}
\end{figure}

\newpage
\section{Regret Bound}
\label{sec:bound}
%-----------------------additional definition--------------------%
\begin{definition}
    The distributionally robust regret $R_{\mathcal{M}_\delta}(\pi)$ of a policy $\pi \in \Pi$ is defined as:
$$
R_{\mathcal{M}_\delta}(\pi):=\left\|V_{\mathcal{M}_\delta}^\star-V_{\mathcal{M}_\delta}^{\pi}\right\|_{\infty} .
$$
\end{definition}

For any policy $\pi$, the soft value and soft $Q$-functions satisfy:
$$V_{\mathcal{M}_\delta}^\pi(s)=\mathbb{E}_{a \sim \pi(\cdot \mid s)}\left[Q_{\mathcal{M}_\delta}^\pi(s, a)-\alpha \log \pi(a \mid s)\right].$$
The following inequality holds:
$$\left\|V_{\mathcal{M}_\delta}^\star-V_{\mathcal{M}_\delta}^{\pi}\right\|_{\infty} \le \left\|Q_{\mathcal{M}_\delta}^\star-Q_{\mathcal{M}_\delta}^{\pi}\right\|_{\infty} . $$

Based on the estimator $\hat p_{s,a}^0$, we define the corresponding estimate DR soft value function
$$\hat V_{\mathcal{M}_\delta}^\pi(s)=\inf_{\mathbf{p}\in\hat{\mathcal{P}}(\delta)} \mathbb{E}_{\mathbf{p}}\left[\sum_{t=1}^{\infty} \gamma^{t-1}\left(r_t+\alpha \cdot \mathcal{H}\left(\pi\left(s_t\right)\right)\right) \mid \pi, s_1=s\right],$$
where $\hat{\mathcal{P}}_{s, a}(\delta):=\left\{p_{s, a} \in \Delta(|\mathcal{S}|): D_{\mathrm{KL}}\left(p_{s, a} \| \hat p_{s, a}^0\right) \leq \delta\right\}$. Similarly, the estimate DR soft $Q$-function is given by
$$\hat Q_{\mathcal{M}_\delta}^\pi(s,a)=\inf_{\mathbf{p}\in\hat{\mathcal{P}}(\delta)} \mathbb{E}_{\mathbf{p}}\left[r_1+ \sum_{t=2}^{\infty} \gamma^{t-1}\left(r_t+\alpha \cdot \mathcal{H}\left(\pi\left(s_t\right)\right)\right) \mid \pi, s_1=s, a_1=a\right].$$
Define $\hat V_{\mathcal{M}_\delta}^\star=\max_{\pi\in\Pi}\hat V_{\mathcal{M}_\delta}^\pi$ and $\hat \pi_{\mathcal{M}_\delta}^\star\in\operatorname*{argmax}_{\pi\in\Pi} \hat V_{\mathcal{M}_\delta}^\pi$.

The estimate $\hat{\mathcal{T}}_\delta^\pi $ is defined as 
$$
\hat{\mathcal{T}}_\delta^\pi Q(s, a)=\mathbb{E}[r]+\gamma \cdot \sup _{\beta \geq 0}\left\{-\beta \log \left(\mathbb{E}_{\hat p_{s, a}^0}\left[\exp \left(\frac{-V\left(s^{\prime}\right)}{\beta}\right)\right]\right)-\beta \delta\right\},
$$

\begin{assumption}\label{assump:transition-close}
    Assume $\text{KL}(p^0_{s,a}\|\hat{p}^0_{s,a})\le\varepsilon_1^2$ and $\text{supp}(p^0_{s,a})=\text{supp}(\hat p^0_{s,a})$.
    %$\text{TV}(p^0_{s,a},\hat{p}^0_{s,a})=\frac{1}{2}\sum_{s'\in S} |p^0_{s,a}(s')-\hat{p}^0_{s,a}(s')| \le\varepsilon_1$ and $\text{supp}(p^0_{s,a})=\text{supp}(\hat p^0_{s,a})$.
\end{assumption}
By Pinsker’s inequality, $\text{TV}(p^0_{s,a},\hat{p}^0_{s,a})\le \frac{1}{2}\sqrt{\text{KL}(p^0_{s,a}\|\hat{p}^0_{s,a})}\le \frac{1}{2}\varepsilon_1$.

%------------------difference between \hat\tau and \tau----------------------%

\paragraph{Bound of $\|\hat{\mathcal{T}}_\delta^\pi Q(s, a)-\mathcal{T}_\delta^\pi Q(s, a)\|$.}

\begin{lemma}\label{lem: tau Q difference}
    Under Assumption~\ref{assump:transition-close}, $$\|\hat{\mathcal{T}}_\delta^\pi Q(s, a)-\mathcal{T}_\delta^\pi Q(s, a)\| \le 2\gamma  \varepsilon_1 \frac{R_{\max}+\alpha\log |A|}{(1-\gamma)\delta} e^{(R_{\max}+\alpha\log |A| )/ (1-\gamma)\underline{\beta}}.$$
\end{lemma}

\begin{proof}
As we defined in Section 3.2,
\[
f((s,a),\beta) := -\beta \log \Big( \mathbb{E}_{p^0_{s,a}}\!\left[ e^{-V(s')/\beta} \right] \Big) - \beta \delta, 
\quad 
\hat f((s,a),\beta) := -\beta \log \Big( \mathbb{E}_{\hat p^0_{s,a}}\!\left[ e^{-V(s')/\beta} \right] \Big) - \beta \delta.
\]
From (Xu 2023, Proposition 5), the maximums of $f((s,a),\beta)$ and $\hat f((s,a),\beta)$ are achieved at $\beta^\star,\hat\beta^\star \in[0,V_{\max}/\delta]$, that is,
\begin{align*}
    \mathcal{T}_\delta^\pi Q(s,a)
= \mathbb{E}[r] + \gamma \sup_{\beta \ge 0} f((s,a),\beta) = \mathbb{E}[r] + \gamma \sup_{\beta \in [0,V_{\max}/\delta]} f((s,a),\beta), 
\\
\hat{\mathcal{T}}_\delta^\pi Q(s,a)
= \mathbb{E}[r] + \gamma \sup_{\beta \ge 0} \hat f((s,a),\beta) = \mathbb{E}[r] + \gamma \sup_{\beta \in [0,V_{\max}/\delta]} \hat f((s,a),\beta).
\end{align*}
Hence,
\[
\big| \hat{\mathcal{T}}_\delta^\pi Q(s,a) - \mathcal{T}_\delta^\pi Q(s,a) \big|
\le 
\gamma \sup_{\beta \in [0,V_{\max}/\delta]} \big| \hat f((s,a),\beta) - f((s,a),\beta) \big|.
\]

Note that $\text{supp}(p^0_{s,a})=\text{supp}(\hat p^0_{s,a})$, which implies that $F_{ p}(0)=\text{essinf}_{s'\sim p^0_{s,a}} V(s') =\text{essinf}_{s'\sim \hat p^0_{s,a}} V(s')=F_{\hat p}(0)$.
Now, we can assume that the optimal  $\beta^\star,\hat\beta^\star$ is achieved in $[\underline{\beta},V_{\max}/\delta]$, where $\underline{\beta}=\min\{\beta^\star/2,\hat\beta^\star/2,1/2\}$. 

Then, we aim to bound the supremum of $\big| \hat f((s,a),\beta) - f((s,a),\beta) \big|$ over the interval $[\underline{\beta},V_{\max}/\delta]$. Since $\log x\le x-1$ when $x\ge 1$, we have
\begin{align*}
|\hat f((s,a),\beta) - f((s,a),\beta)|
&= \beta \left| \log \Big( \mathbb{E}_{\hat p^0_{s,a}}\!\left[ e^{-V(s')/\beta} \right] \Big) - \log \Big( \mathbb{E}_{p^0_{s,a}}\!\left[ e^{-V(s')/\beta} \right] \Big) \right|\\
&\le \frac{V_{\max}}{\delta} \frac{\left| \mathbb{E}_{\hat p^0_{s,a}}\!\left[ e^{-V(s')/\beta} \right]  - \mathbb{E}_{p^0_{s,a}}\!\left[ e^{-V(s')/\beta} \right] \right|}{\min\left\{ \mathbb{E}_{\hat p^0_{s,a}}\!\left[ e^{-V(s')/\beta} \right] , \mathbb{E}_{p^0_{s,a}}\!\left[ e^{-V(s')/\beta} \right] \right\}}.
\end{align*}
Since $\text{TV}(p^0_{s,a} , \hat p^0_{s,a}) \le \varepsilon_1$ and $V_{\min}\ge0$, we have
\begin{align*}
    &\left| \mathbb{E}_{\hat p^0_{s,a}}\!\left[ e^{-V(s')/\beta} \right]  - \mathbb{E}_{p^0_{s,a}}\!\left[ e^{-V(s')/\beta} \right] \right|
= \Big| \sum_{s'\in S} \big(\hat p^0_{s,a}(s') - p^0_{s,a}(s')\big) e^{-V(s')/\beta} \Big|
\\& \le \sum_{s'\in S} \Big| \hat p^0_{s,a}(s') - p^0_{s,a}(s')\Big| e^{-V_{\min}/\beta}
\le 2\text{TV}(p^0_{s,a},\hat{p}^0_{s,a}) \le   \varepsilon_1.
\end{align*}
In addition,
$$\min\left\{ \mathbb{E}_{\hat p^0_{s,a}}\!\left[ e^{-V(s')/\beta} \right] , \mathbb{E}_{p^0_{s,a}}\!\left[ e^{-V(s')/\beta} \right] \right\} \ge e^{-V_{\max}/\underline{\beta}}.$$ 
Thus, we obtain
\[
|\hat f((s,a),\beta) - f((s,a),\beta)|
\le  \varepsilon_1\frac{V_{\max}}{\delta}   e^{V_{\max}/ \underline{\beta}},\quad \text{where } V_{\max}=\frac{R_{\max}+\alpha\log |A|}{1-\gamma}.
\]
Combining this with the earlier inequality gives
\[
\big\| \hat{\mathcal{T}}_\delta^\pi Q(s,a) - \mathcal{T}_\delta^\pi Q(s,a) \big\|
\le 
\gamma  \varepsilon_1 \frac{R_{\max}+\alpha\log |A|}{(1-\gamma)\delta} e^{(R_{\max}+\alpha\log |A| )/ (1-\gamma)\underline{\beta}} :=\varepsilon_2 .
\]

\end{proof}

%----------------------Q bound--------------------%
\paragraph{Bound of $\|\hat Q^{\pi}_{\mathcal{M}_\delta}-Q^{\pi}_{\mathcal{M}_\delta}\|$.} 
For any $\pi\in\Pi$, let $Q^{k+1}=\mathcal{T}_\delta^\pi Q^k$, $\hat Q^{k+1}=\hat{\mathcal{T}}_\delta^\pi \hat Q^k$, and $\hat Q^0=Q^0$. 
By Proposition~\ref{prop: soft policy evaluation}, we know that $Q^k$ will converge to the DR soft $Q$-value $Q^{\pi}_{\mathcal{M}_\delta}$, which is the fixed point of $T_\delta^\pi$. That is, $T_\delta^\pi  Q^{\pi}_{\mathcal{M}_\delta}=Q^{\pi}_{\mathcal{M}_\delta}$ and $Q^k \to  Q^{\pi}_{\mathcal{M}_\delta}$.
Similarly, there exists a fixed point of $\hat T_\delta^\pi$ such that $\hat T_\delta^\pi \hat Q^{\pi}_{\mathcal{M}_\delta}=\hat Q^{\pi}_{\mathcal{M}_\delta}$ and $\hat Q^k \to \hat Q^{\pi}_{\mathcal{M}_\delta}$. Then
\begin{align*}
    \|\hat Q^{\pi}_{\mathcal{M}_\delta}- Q^{\pi}_{\mathcal{M}_\delta}\| &= \|\hat T_\delta^\pi \hat Q^{\pi}_{\mathcal{M}_\delta}- T_\delta^\pi Q^{\pi}_{\mathcal{M}_\delta}\| 
    \\&= \|\hat T_\delta^\pi \hat Q^{\pi}_{\mathcal{M}_\delta} - T_\delta^\pi \hat Q^{\pi}_{\mathcal{M}_\delta} + T_\delta^\pi \hat Q^{\pi}_{\mathcal{M}_\delta} -T_\delta^\pi Q^{\pi}_{\mathcal{M}_\delta}\| 
    \\&\le \varepsilon_2 + \gamma\|\hat Q^{\pi}_{\mathcal{M}_\delta}- Q^{\pi}_{\mathcal{M}_\delta}\|\\
    \implies \|\hat Q^{\pi}_{\mathcal{M}_\delta}- Q^{\pi}_{\mathcal{M}_\delta}\| &\le \frac{\varepsilon_2}{1-\gamma}.
\end{align*}

%---------------------Method 2-------------------------%
% By Lemma~\ref{lem: tau Q difference}, $\hat Q^{k+1}$ can be written as
% $$\hat Q^{k+1}=\hat{\mathcal{T}}_\delta^\pi \hat Q^k \le \mathcal{T}_\delta^\pi \hat Q^k+\varepsilon_2 \le  \mathcal{T}_\delta^\pi  Q^k+\gamma\|\hat Q^k-Q^k\|_\infty +\varepsilon_2.$$

% Then, we subtract $Q^{\pi}_{\mathcal{M}_\delta}$ on both sides of the above equation,
% $$\hat Q^{k+1}-Q^{\pi}_{\mathcal{M}_\delta} \le  \mathcal{T}_\delta^\pi  Q^k - T_\delta^\pi Q^{\pi}_{\mathcal{M}_\delta} +\gamma\|\hat Q^k-Q^k\|_\infty +\varepsilon_2 \le    \gamma\|Q^k - Q^{\pi}_{\mathcal{M}_\delta}\|_\infty +\gamma\|\hat Q^k-Q^k\|_\infty +\varepsilon_2$$

%---------------------Regret Bound-------------------------%
\paragraph{Regret bound.} 
We define the updating policy as 
$$
\hat \pi_{k+1}=\underset{\pi \in \Pi}{\operatorname{argmin}} \ D_{\mathrm{KL}}\left(\pi(\cdot \mid s) \left\| \frac{\exp \left(\frac{1}{\alpha} \hat Q_{\mathcal{M}_\delta}^{\hat \pi_k}(s, \cdot)\right)}{Z^{\hat \pi_k}(s)}\right)\right., k=0,1, \cdots
$$
By Proposition~\ref{thm: soft policy iteration}, the policy sequence $\{\hat \pi^k\}$ converges to the optimal policy $\hat\pi^\star_{\mathcal{M}_\delta}$ under the estimate DR soft policy iteration as $k\to\infty$.

For each state $s\in\mathcal{S}$, we have $V_{\mathcal{M}_\delta}^\star(s)-V_{\mathcal{M}_\delta}^{\hat \pi_{\mathcal{M}_\delta}^\star}(s)\ge 0$. By definition, $\hat V_{\mathcal{M}_\delta}^\star(s) = \hat V_{\mathcal{M}_\delta}^{\hat \pi_{\mathcal{M}_\delta}^\star}(s)$. Then, we have
\begin{align*}
  V_{\mathcal{M}_\delta}^\star(s)-V_{\mathcal{M}_\delta}^{\hat \pi_{\mathcal{M}_\delta}^\star}(s) &\le \left|V_{\mathcal{M}_\delta}^\star(s)-\hat V_{\mathcal{M}_\delta}^\star(s)\right| + \left|\hat V_{\mathcal{M}_\delta}^{\hat \pi_{\mathcal{M}_\delta}^\star}(s)-V_{\mathcal{M}_\delta}^{\hat \pi_{\mathcal{M}_\delta}^\star}(s)\right|\\
  & = \left|\sup_{\pi} V_{\mathcal{M}_\delta}^\pi(s)-\sup_{\pi} \hat V_{\mathcal{M}_\delta}^\pi(s)\right| + \left|\hat V_{\mathcal{M}_\delta}^{\hat \pi_{\mathcal{M}_\delta}^\star}(s)-V_{\mathcal{M}_\delta}^{\hat \pi_{\mathcal{M}_\delta}^\star}(s)\right|\\
  & \le \sup_{\pi} \left| V_{\mathcal{M}_\delta}^\pi(s)-\hat V_{\mathcal{M}_\delta}^\pi(s)\right| + \left|\hat V_{\mathcal{M}_\delta}^{\hat \pi_{\mathcal{M}_\delta}^\star}(s)-V_{\mathcal{M}_\delta}^{\hat \pi_{\mathcal{M}_\delta}^\star}(s)\right|\\
  & \le 2\sup_{\pi} \left| V_{\mathcal{M}_\delta}^\pi(s)-\hat V_{\mathcal{M}_\delta}^\pi(s)\right| 
\end{align*}
Thus,
\begin{align*}
R_{\mathcal{M}_\delta}(\hat \pi_{\mathcal{M}_\delta}^\star)=\left\|V_{\mathcal{M}_\delta}^\star-V_{\mathcal{M}_\delta}^{\hat \pi_{\mathcal{M}_\delta}^\star}\right\|_{\infty}  \le 2\sup_{\pi} \left\| V_{\mathcal{M}_\delta}^\pi-\hat V_{\mathcal{M}_\delta}^\pi\right\|_\infty  \le 2\sup_{\pi} \left\| Q_{\mathcal{M}_\delta}^\pi-\hat Q_{\mathcal{M}_\delta}^\pi\right\|_\infty \le \frac{2\varepsilon_2}{1-\gamma}.
\end{align*}

\end{document}